\documentclass{article} % For LaTeX2e
\usepackage{iclr2027_conference,times}

\usepackage{amsmath,amsfonts,bm}

\def\eqref#1{equation~\ref{#1}}
\def\1{\bm{1}}

\def\rmA{{\mathbf{A}}}

\def\rmX{{\mathbf{X}}}

\DeclareMathAlphabet{\mathsfit}{\encodingdefault}{\sfdefault}{m}{sl}
\SetMathAlphabet{\mathsfit}{bold}{\encodingdefault}{\sfdefault}{bx}{n}

\def\gA{{\mathcal{A}}}

\def\gE{{\mathcal{E}}}

\def\gG{{\mathcal{G}}}

\def\gL{{\mathcal{L}}}

\def\gN{{\mathcal{N}}}

\def\gP{{\mathcal{P}}}

\def\gT{{\mathcal{T}}}

\def\gV{{\mathcal{V}}}

\def\gY{{\mathcal{Y}}}

\newcommand{\E}{\mathbb{E}}

\newcommand{\R}{\mathbb{R}}

\newcommand{\KL}{D_{\mathrm{KL}}}
\newcommand{\Var}{\mathrm{Var}}

\DeclareMathOperator*{\argmin}{arg\,min}

\iclrfinalcopy
\usepackage{amsthm}
\usepackage{algorithm}
\usepackage{algpseudocode}

\newcommand{\TU}{\mathrm{TU}}
\newcommand{\AU}{\mathrm{AU}}
\newcommand{\EU}{\mathrm{EU}}

\newcommand{\I}{\mathbb{I}}

\newcommand{\Prob}{\mathbb{P}}
\newcommand{\Dir}{\mathrm{Dir}}

\newtheorem{theorem}{Theorem}[section]

\newtheorem{lemma}[theorem]{Lemma}
\newtheorem{definition}[theorem]{Definition}

\usepackage{hyperref}
\usepackage{url}

\usepackage{enumitem}

\usepackage{placeins}
\usepackage{booktabs}
\usepackage{longtable}
\usepackage{multirow}
\usepackage{array}

\usepackage{graphicx}
\usepackage{caption}
\usepackage{subcaption}

\usepackage{lipsum}

\title{Rethinking Epistemic Uncertainty in Node Classification through Information Growth}

\author{
Emma Meneghini\textsuperscript{1,2},
Francesco Ferrini\textsuperscript{2},
Bruno Lepri\textsuperscript{2},
Andrea Passerini\textsuperscript{1},
Veronica Lachi\textsuperscript{3} \\[0.5em]
\textsuperscript{1}University of Trento, 
\textsuperscript{2}Fondazione Bruno Kessler, FBK, 
\textsuperscript{3}University of Troms{\o}
}

\begin{document}

\maketitle
\lhead{Preprint}

\begin{abstract}
Epistemic uncertainty should decrease as additional information about the data-generating process (DGP) becomes available to the predictor. Yet, existing graph evidential deep learning (EDL) methods for node classification typically construct epistemic uncertainty from graph-specific properties and evaluate it on downstream tasks such as out-of-distribution detection, which do not test its reducibility as information about the DGP increases. To make reducibility directly testable, we introduce a statistical framework for studying epistemic uncertainty under information growth. Our framework specifies an information-growth experimental protocol and a consistency criterion for epistemic predictors, while using projective graph DGPs to ensure that growing graphs, which in general need not provide increasing information about the same DGP, constitute coherent observations of the same underlying process. We show that EDL methods do not explicitly estimate data uncertainty arising from a single finite graph observation and instead regulate epistemic uncertainty through model hyperparameters, precluding consistency, as corroborated by controlled information-growth experiments. As an alternative, we propose graph bootstrap ensembles, capturing both data and procedural uncertainty through graph resampling and randomized training. Under the same experimental protocol, these ensembles exhibit epistemic uncertainty reduction beyond standard deep ensembles. These findings support bootstrap ensembles as candidate consistent epistemic predictors under information growth.
\end{abstract}

\section{Introduction}
Machine learning models are increasingly expected to provide not only accurate predictions, but also reliable estimates of predictive uncertainty~\citep{Abdar-2021, Galikowski-2023, Wang-2026}. Predictive uncertainty is commonly decomposed into \emph{aleatoric} uncertainty, arising from intrinsic ambiguity in the data-generating process (DGP) and thus irreducible, and \emph{epistemic} uncertainty, arising from limited knowledge of that process and thus reducible with additional data~\citep{Hullermeier-2021, Sale-2026b}. In classical supervised learning with independent and identically distributed (i.i.d.)\ observations, under suitable regularity conditions, epistemic uncertainty is operationally characterized to decrease monotonically as information increases and eventually vanish~\citep{Bengs-2022, Jurgens-2024, Shen-2024}.

Extending this characterization to node classification on graphs is non-trivial. While increasing sample size provides a natural information-growth mechanism in the i.i.d.\ setting, no analogous mechanism is readily available for node classification. Intuitively, information growth would involve progressively larger graph observations and training sets becoming available, allowing predictors to be retrained on more informative observations for the same node classification problem. However, nodes are dependent, and observing more of them does not necessarily yield more information about the same underlying DGP \citep{Shalizi-2013}. Consequently, node classification currently lacks a framework for assessing the reducibility of epistemic uncertainty under information growth.

Graph uncertainty quantification has been studied using a variety of approaches (see~\citet{Wang-2024, Chen-2026} for reviews). Among these, evidential deep learning (EDL) methods~\citep{Zhao-2020, Stadler-2021, Hart-2023, Damke-2024a, Damke-2024b} are particularly relevant because they make the epistemic interpretation a central modeling objective, prescribing appropriate uncertainty behavior through graph-specific principles, including reducibility through \emph{feature support} and \emph{structural support}, and the reducibility or irreducibility due to \emph{neighborhood conflict}. However, these principles are defined with respect to the features and topology of a fixed observed graph rather than under information growth. Empirical evaluations likewise assess epistemic uncertainty through its utility for out-of-distribution (OOD) detection, whose optimal predictors need not coincide with predictors of reducible epistemic uncertainty~\citep{Paplham-2026}. While informative about task-specific downstream utility, such evaluations therefore cannot establish whether an uncertainty score has this epistemic interpretation.

Together, these limitations raise fundamental questions about how epistemic uncertainty should be defined and modeled in node classification. We investigate these issues through three research questions: \textbf{(RQ1)} What constitutes a principled statistical framework for epistemic predictors in node classification? \textbf{(RQ2)} Are graph EDL models consistent epistemic predictors, and, if not, what are their fundamental limitations? \textbf{(RQ3)} Which alternative approach can address these limitations? 

\textbf{Contributions.} We address these questions through three main contributions. 

\noindent\textbf{(1)} We establish the first statistical framework for epistemic uncertainty under information growth in node classification, comprising an information-growth experimental protocol and a definition of consistent epistemic predictors. The former naturally requires \emph{projective} graph DGPs to preserve the underlying node-classification problem across nested graphs and learning tasks~\citep{Shalizi-2013}, while the latter requires decreasing and asymptotically vanishing uncertainty. Together, these provide a statistical basis for assessing epistemic reducibility beyond conventional evaluations.

\noindent\textbf{(2)} We unify graph EDL objectives and characterize their population optima, showing how model hyperparameters artificially regulate epistemic uncertainty in the absence of explicit estimation of data uncertainty~\citep{Shen-2024}, arising from observing a single finite realization of the DGP, ultimately precluding epistemic consistency. Controlled information-growth experiments on projective graphs confirm insufficient epistemic concentration and persistent parameterization dependence, corroborating the structural limitations that undermine their use as consistent epistemic predictors.

\noindent\textbf{(3)} Motivated by evidence that bootstrap-based ensembles capture data and procedural uncertainty~\citep{Jimenez-2026}, thus enabling decreasing epistemic uncertainty in the i.i.d.\ setting~\citep{ Jain-2026}, we extend this approach to graphs through \emph{graph bootstrap ensembles}. Under the same protocol as in (2), the experiments support these ensembles as a family of candidate consistent epistemic predictors, demonstrating the ability to recover epistemic concentration absent from standard deep ensembles.

Together, these contributions establish a statistical foundation for understanding epistemic uncertainty in node classification, reveal the limitations of existing evidential approaches when assessed through information growth, and introduce graph bootstrap ensembles as a theoretically motivated, empirically supported alternative for modeling epistemic uncertainty that contracts under information growth.

\section{Background on Predictive Uncertainty and Motivation}
\label{sec:unc-background}
Let \( \gG=(\gV,\gE) \) be an undirected attributed graph with \( |\gV|=N \), adjacency matrix \( A \in \{0,1\}^{N\times N} \), and node feature matrix \( X \in \R^{N\times D} \). Let \( \gY = \{ 1,\ldots,C \} \) denote the set of class labels. In semi-supervised node classification, a set of labeled nodes \( \gT^{(\mathrm{tr})} \subset \gV \) is used to predict the labels of the remaining nodes \( \gT^{(\mathrm{te})} = \gV \setminus \gT^{(\mathrm{tr})} \). We consider quantifying the uncertainty of these predictions. A probabilistic classifier associates each node \( i \in \gV \) with a predictive categorical distribution \( \theta^{(i)} \in \Delta_C \), where \( \Delta_C = \{ \theta \in \R^C: \theta_c \ge 0, \; \sum_{c=1}^C \theta_c = 1 \} \) denotes the probability simplex over \( \gY \). We adopt a second-order probabilistic framework in which the predictive distribution is itself regarded as a random variable \( \Theta^{(i)} \). The second-order predictor \( Q^{(i)} \) specifies a distribution over predictive distributions \( \Theta^{(i)} \sim Q^{(i)} \),  with \( Q : (\gG,i) \mapsto \gP(\Delta_C) \). We use the standard information-theoretic decomposition of total predictive uncertainty into aleatoric and epistemic components~\citep{Depeweg-2019, Wimmer-2023}. Since the following quantities are defined node-wise, we omit the superscript \(^{(i)}\) and define the predictive mean by \( \bar{\theta} = \E_{Q}[\Theta] \). The total uncertainty (TU) is the Shannon entropy of the predictive mean,
\begin{equation}
    \TU(Q) = H(\E_Q[\Theta]) = - \sum_{c=1}^{C} \bar{\theta}_c \log \bar{\theta}_c.
\end{equation}
Aleatoric uncertainty (AU) is the irreducible uncertainty about the class label remaining after conditioning on the predictive distribution and is given by the conditional entropy \( H(Y| \Theta) \),
\begin{equation}
    \AU(Q) = H(Y | \Theta) = \E_{Q}[H(\theta)] = - \int_{\Delta_C} \sum_{c=1}^{C} \theta_c \log \theta_c \, dQ(\theta).
\end{equation}
Epistemic uncertainty (EU) quantifies uncertainty about which predictive distribution should be assigned to the node and is given by the mutual information (MI) \( I \) between \( Y \) and \( \Theta \),
\begin{equation}
\EU(Q) = I(Y;\Theta) = \TU(Q)-\AU(Q) = \E_{Q} \left[ \KL (\theta \| \bar{\theta}) \right],
\end{equation}
where  \( \KL (\cdot \| \cdot) \) denotes the Kullback-Leibler divergence. We next review graph EDL methods, which directly parameterize \( Q \) and are the main second-order baselines considered in this work.

\subsection{Graph EDL Models}
\label{sec:EDL-models}

Evidential node classifiers predict a second-order Dirichlet distribution \( Q_{\psi}^{(i)} = \Dir(\theta; \alpha^{(i)}) \), with \( \alpha^{(i)} = \mathbf{1} + e^{(i)} \) and class-wise evidence \( e^{(i)} \in \R^C_{>0} \). Its mean \( \bar{\theta}^{(i)} = \alpha^{(i)} / \alpha_0^{(i)} \) gives the predictive probabilities, while \( \alpha_0^{(i)} = \sum_{c=1}^{C} \alpha_c^{(i)} \) controls concentration. We organize graph EDL methods by their evidence parameterization and treatment of epistemic uncertainty: reduction through \emph{feature support} and \emph{structural support}, and reduction or irreducibility due to \emph{neighborhood conflict}.

\textbf{Direct parameterization.}
S-BGCN-K~\citep{Zhao-2020} directly predicts Dirichlet evidence with a Bayesian GNN and uses dropout-induced variability for uncertainty estimation. During training, its prediction is regularized toward a node-specific, graph-dependent GKDE prior assigning higher evidence to nodes closer to labeled nodes of the corresponding class, thereby encoding structural support; S-BGCN-T-K additionally introduces a teacher network to improve predictive performance. Assuming the Gaussian likelihood and denoting the one-hot ground-truth label \( e_y^{(i)} \), its objective is
\begin{equation}
\ell_{\mathrm{MSE}}(\psi;i) = \E_{\theta^{(i)} \sim Q_{\psi}^{(i)}} \!\left[ \| \theta^{(i)}-e_y^{(i)} \|^2 \right] + \lambda \KL(Q_{\psi}^{(i)} \| Q_{\mathrm{GKDE}}^{\mathrm{prior},(i)}).
\end{equation}

\textbf{Density parameterization.}
GPN~\citep{Stadler-2021} constructs evidence pseudo-counts from class-conditional densities in latent space, so that low-density regions receive little evidence, implementing feature support~\citep{Charpentier-2020,Charpentier-2022}. These pseudo-counts are propagated through Personalized PageRank, \( \beta_c^{\mathrm{agg},(i)} = \sum_{j \in \gV}\Pi^{\mathrm{PPR}}_{i,j}\beta_c^{\mathrm{ft},(j)} \), yielding \( \alpha^{(i)} = \mathbf{1}+\beta^{\mathrm{agg},(i)} \), where \( \mathbf{1} \) is a uniform Dirichlet prior. Propagation implements structural support by diffusing evidence across neighboring nodes, while accumulating conflicting class evidence within the same Dirichlet implements neighborhood conflict, attributing disagreement to irreducible aleatoric uncertainty. Assuming the categorical likelihood \( p(y^{(i)} | \theta^{(i)}) = \theta_{y^{(i)}} \), GPN minimizes
\begin{equation}
\ell_{\mathrm{UCE}}(\psi;i) = \E_{\theta^{(i)} \sim Q_{\psi}^{(i)}} [-e_y^{(i)\top} \log\theta^{(i)}] - \lambda H[Q_{\psi}^{(i)}].
\end{equation}
Distance-regularized GPN-GD~\citep{Hart-2023} augments this formulation with pairwise regularization of latent representations and evidence.

CUQ-GNN~\citep{Damke-2024b} retains the density-based parameterization and UCE objective but replaces the independent encoder and subsequent propagation step with a GNN encoder, jointly learning feature support, structural support, and the handling of neighborhood conflict. GPN-LOP~\citep{Damke-2024a} instead retains GPN's feature pseudo-counts but aggregates the resulting Dirichlet distributions, \( Q_\psi^{(i)} = \sum_{j \in \gV}\Pi^{\mathrm{PPR}}_{i,j} Q_\psi^{\mathrm{ft},(j)} \), allowing neighborhood disagreement to contribute to epistemic, rather than necessarily aleatoric, uncertainty. It is trained using a tractable upper bound on the UCE loss of the mixture,
\begin{equation}
\ell_{\mathrm{GPN-LOP}}(\psi;i) =
\sum_{j \in \gV} \Pi^{\mathrm{PPR}}_{i,j}
\left(
\E_{\theta^{(j)} \sim Q_{\psi}^{\mathrm{ft},(j)}}
[-e_y^{(i)\top} \log\theta^{(j)}]
- \lambda H[Q_{\psi}^{\mathrm{ft},(j)}]
\right).
\end{equation}

\subsection{Why Evaluate Epistemic Uncertainty under Information Growth?}
\label{sec:motivation}

Epistemic uncertainty is commonly considered useful a priori for downstream tasks such as OOD detection on graphs, as the lack of model knowledge about OOD inputs is expected to result in higher epistemic uncertainty~\citep{Bazhenov-2023, Wang-2024}, and performance on these benchmarks is therefore used to assess the quality of EDL uncertainty estimates. However, our results show that the most effective uncertainty type depends on the particular model and OOD protocol, with AU/TU typically performing best for leave-out-classes (LOC) and feature isolation (FI) with Bernoulli perturbations, and EU for Gaussian FI (Table~\ref{tab:pubmed-ood-auroc}; additional results in Appendix~\ref{app:sec-ood}). That non-epistemic components can outperform EU supports growing criticism that OOD discrimination and the reducibility of epistemic uncertainty under increasing information constitute distinct objectives whose optimal predictors need not coincide~\citep{Li-2025, Paplham-2026}. This motivates evaluating epistemic uncertainty under information growth.

\begin{table}[htbp]
\centering
\scriptsize
\caption{AUROC for OOD detection on PubMed under LOC and FI protocols. Best-performing uncertainty type(s) for each EDL model and OOD protocol are shown in bold.}
\label{tab:pubmed-ood-auroc}
\setlength{\tabcolsep}{4pt}
\begin{tabular}{l*{3}{c}*{3}{c}*{3}{c}}
\toprule
& \multicolumn{3}{c}{LOC}
& \multicolumn{3}{c}{FI (Bernoulli)}
& \multicolumn{3}{c}{FI (Gaussian)} \\
\cmidrule(lr){2-4}
\cmidrule(lr){5-7}
\cmidrule(lr){8-10}
Model
& TU & AU & EU
& TU & AU & EU
& TU & AU & EU \\
\midrule
CUQ-APPNP
& \textbf{0.693} & \textbf{0.693} & 0.643
& \textbf{0.652} & \textbf{0.652} & 0.616
& 0.975 & 0.942 & \textbf{0.994} \\

CUQ-GAT
& \textbf{0.709} & \textbf{0.709} & 0.618
& \textbf{0.586} & \textbf{0.586} & 0.540
& 0.660 & 0.659 & \textbf{0.684} \\

CUQ-GCN
& \textbf{0.717} & \textbf{0.717} & 0.591
& \textbf{0.613} & \textbf{0.613} & 0.554
& 0.936 & 0.858 & \textbf{0.945} \\

GPN
& 0.730 & 0.730 & \textbf{0.769}
& 0.593 & 0.593 & \textbf{0.638}
& 0.560 & 0.560 & \textbf{0.642} \\

GPN-GD\textsubscript{EVI}
& 0.800 & 0.771 & \textbf{0.857}
& 0.585 & 0.579 & \textbf{0.615}
& 0.578 & 0.569 & \textbf{0.621} \\

GPN-GD\textsubscript{LAT}
& 0.724 & 0.724 & \textbf{0.764}
& 0.596 & 0.596 & \textbf{0.640}
& 0.559 & 0.559 & \textbf{0.658} \\

GPN-LOP
& 0.748 & \textbf{0.781} & 0.651
& 0.578 & 0.446 & \textbf{0.689}
& 0.696 & 0.588 & \textbf{0.767} \\

S-BGCN-K
& 0.625 & \textbf{0.627} & 0.500
& 0.646 & \textbf{0.679} & 0.288
& 0.342 & 0.001 & \textbf{1.000} \\

S-BGCN-T-K
& 0.607 & \textbf{0.611} & 0.507
& 0.654 & \textbf{0.677} & 0.308
& 0.447 & 0.006 & \textbf{1.000} \\
\bottomrule
\end{tabular}
\end{table}

\section{A Statistical Framework for Graph Epistemic Uncertainty}
\label{sec:framework}

Addressing \textbf{(RQ1)}, we extend the i.i.d.\ characterization of epistemic predictors~\citep{Bengs-2022, Jurgens-2024} to node classification by defining a statistical framework with an information-growth experimental protocol and a consistency criterion for epistemic predictors.

\subsection{Information-Growth Experimental Protocol}
\label{sec:IG-experiment}

Constructing an information-growth experimental protocol for node classification requires three components: (i) the \emph{sampling unit}, i.e., what constitutes one observation from the DGP; (ii) the \emph{sampling scheme}, i.e., how increasingly informative observations are generated from the same DGP; and (iii) the \emph{learning task}, i.e., how the supervised prediction problem evolves along this sequence.

\textbf{Sampling unit.}
Unlike classical datasets, the nodes of a graph are not a collection of i.i.d.\ observations. We therefore take the observed graph \(\gG=(\gV,\gE)\), equivalently the subgraph induced by \(\gV\), as the sampling unit, following the graph-as-observation perspective of~\citet{Shalizi-2013}. Increasing information then corresponds to observing nested graphs \(\gG\subset\gG'\), with \(\gV\subset\gV'\) and \(\gG\) equal to the subgraph of \(\gG'\) induced by \(\gV\).

\textbf{Sampling scheme.}
Requiring two nested graphs to arise from the \emph{same} DGP is non-trivial. This consistency is implicit in the i.i.d.\ setting, where every finite sample is a restriction of the same infinite sequence, but need not hold across graphs of different sizes. We make it explicit through \emph{projectibility}~\citep{Shalizi-2013}. Let \( \gA \) be a collection of finite subsets of a countable node index set \( \I \), ordered by inclusion, and let \( \{ \Prob_{A,\phi} \}_{A \in \gA, \phi \in \Phi} \) denote a family of finite-dimensional DGPs, where \( G_{A} \sim \Prob_{A, \phi} \) denotes the attributed graph indexed by \( A \). For \(A \subset B\), let \( \pi_{B\to A} \) denote the natural projection that restricts an observation indexed by \( B \) to the indices in \( A \).

\begin{definition}[Projective DGP]
\label{def:projective}
The family \( \{ \Prob_{A,\phi} \}_{A \in \gA, \phi \in \Phi} \) is \emph{projective} if, for every \( A \subset B \),
\begin{equation}
\Prob_{A,\phi} = \Prob_{B,\phi} \circ \pi^{-1}_{B\to A} \qquad \forall \phi \in \Phi.
\end{equation}
\end{definition}

Thus, projectibility gives a coherent meaning to information growth by ensuring that every smaller observation has exactly the distribution obtained by marginalizing any larger observation under the stochastic process with the same parameter \( \phi \). Moreover, by the Kolmogorov extension theorem~\citep{Kallenberg-2002}, every such projective family uniquely defines an infinite-dimensional distribution \( \Prob_\phi \) whose finite-dimensional marginals are precisely \( \{ \Prob_{A, \phi} \}_{A \in \gA} \). Dense graphons are an important class of projective graph families~\citep{Veitch-2015} and will serve as the concrete instantiation of this framework in our experiments. Other prominent graph models, including many exponential random graph models and sparse random graph formulations, are not projective~\citep{Shalizi-2013,Schweinberger-2017}.

\textbf{Learning task.}
Finally, the supervised tasks induced by increasingly large graph observations must exploit the corresponding information growth. If \(\gT^{(\mathrm{tr})}\subset\gV\) and \(\gT'^{(\mathrm{tr})}\subset\gV'\) denote the training sets associated with \(\gG\subset\gG'\), respectively, we require \(\gT^{(\mathrm{tr})}\subset\gT'^{(\mathrm{tr})}\), so that each larger task extends the previous one with additional information about the same underlying node-classification problem. Validation and test sets are defined analogously.

\subsection{Consistent Epistemic Predictors}

We now characterize a consistent epistemic predictor under this information-growth experimental protocol. For an increasing sequence \( A_n \subset A_{n+1} \), with \( |A_n|=n \), let \( Q_{A_n} = \{ Q_{A_n}^{(i)} \}_{i \in A_n} \) \ denote the collection of node-wise second-order predictors. For an epistemic uncertainty measure \( U \), we define the uncertainty of this collection as the average node-wise uncertainty \( \bar{U}_{A_n} := |A_n|^{-1} \sum_{i \in A_n} U(Q_{A_n}^{(i)}) \). We define consistency over the full node set \( \gA_n \), even though empirically we evaluate uncertainty on the test nodes. Let \( p_\phi^\star \) denote the Bayes-optimal predictor under the infinite-dimensional law \( \Prob_\phi \) induced by projectibility, and \( p_{\phi}^\star|A_n \) its restriction to \( A_n \).

\begin{definition}[Consistent epistemic predictors]
\label{def:consistent-predictor}
Under the information-growth experimental protocol defined in Section~\ref{sec:IG-experiment}, a sequence of collections of second-order predictors \( \{Q_{A_n} \}_{n \geq 1} \) is \emph{consistent} if the following two conditions hold:
\begin{enumerate}
    \item \textbf{Monotonicity.} For every \(n\), \( \E_{G_{A_n}} [\bar{U}_{A_n} ] \geq \E_{G_{A_{n+1}}} [\bar{U}_{A_{n+1}}] \).
    Furthermore, there exist \( n_0 \) and \( k > 0 \) such that \( \E_{G_{A_{n_0}}} [\bar{U}_{A_{n_0}}] > \E_{G_{A_{n_0+k}}}[\bar{U}_{A_{n_0+k}}] \).
    \item \textbf{Consistency.} As \( |A_n| \to \infty \), \( Q_{A_n} \)  converges in probability to the Dirac measure at \( p_\phi^\star|A_n \) under the distance \(d(Q_{A_n}, \delta_{p^\star_{\phi|A_n}}) = |A_n|^{-1} \sum_{i \in A_n} d(Q_{A_n}^{(i)}, \delta_{p^\star_{\phi|A_n}}^{(i)} ) \), where the node-wise distance \( d \) metrizes weak convergence on \( \gP(\Delta_C) \).
\end{enumerate}
\end{definition}

These requirements characterize epistemic uncertainty as reducible with information: by condition (1), it decreases in expectation, excluding constant uncertainty sequences~\citep{Jurgens-2024}, and by (2), it vanishes through concentration on the limiting Bayes-optimal predictor~\citep{Bengs-2022}. For the entropy-based uncertainty considered here, this yields \( \lim_{n \to \infty} \E_{G_{A_n}}[\bar{U}_{A_n})] = 0 \), leaving only true aleatoric uncertainty. Such consistency requires strong assumptions~\citep{Jain-2026}; in practice, model misspecification and estimation bias may affect the predictor's estimates without being explicitly represented by its epistemic uncertainty~\citep{Jimenez-2026}.

\section{Graph EDL under Information Growth}
\label{sec:edl-limitations}
We examine graph EDL methods considered in Section~\ref{sec:EDL-models} when estimation and approximation errors are removed. By analyzing their population optima, we study whether they are consistent epistemic predictors per Definition~\ref{def:consistent-predictor}. Unlike their i.i.d.\ counterparts, these methods incorporate graph-specific uncertainty principles and losses, motivating an extension of the analogous population-level analysis by~\citet{Shen-2024}. To analyze them within a common framework, we express their objectives through the following unified loss.
\begin{definition}[Unified EDL loss]
\label{def:unified-edl-loss}
Let \( Q^{\mathrm{prior}} \) be a fixed Dirichlet prior, \( p(y | \theta) \) a likelihood, and \( \nu > 0 \) a tempering parameter. For each node \( i \), the unified EDL loss is 
\begin{equation}
\ell_{\mathrm U}(\psi;i) := \KL ( Q_\psi^{(i)} \| Q_\nu(\cdot | y^{(i)})),
\end{equation}
where \( Q_\nu(\theta | y) \propto
Q^{\mathrm{prior}} (\theta) p(y | \theta)^\nu \) denotes the tempered posterior.
\end{definition}
The next lemma establishes the equivalence between \( \ell_\mathrm{U} \) and the training objectives of each method.

\begin{lemma}
\label{lemma:U-EDL-equiv}
The unified EDL loss \( \ell_\mathrm{U} \) is equivalent, up to positive scaling and a \( \psi \)-independent additive constant, to the EDL objectives under the following choices:
\begin{enumerate}
    \item \( p(e_y^{(i)} | \theta^{(i)}) = \gN (e_y^{(i)}; \theta^{(i)}, \sigma^2 I_C) \), \( Q^{\mathrm{prior}} \) is a fixed Dirichlet prior, and \( \lambda = 2\sigma^2 / \nu \), yielding \( \ell_\mathrm{MSE} \).
    \item \( p(y^{(i)} | \theta^{(i)}) = \theta_{y^{(i)}}^{(i)} \), \( Q^{\mathrm{prior}} \) is a uniform Dirichlet prior, and \( \lambda=\nu^{-1} \), yielding \( \ell_\mathrm{UCE} \).
    \item With a uniform Dirichlet prior \( Q^{\mathrm{prior}} \) and \( \lambda = \nu^{-1} \), evaluating the UCE loss at the PPR-weighted soft target \( \tilde{e}_y^{(j)} \) yields the reformulated GPN-LOP node-wise objective \( \tilde{\ell}_{\mathrm{GPN-LOP}} (\psi, j) \propto \tilde{\ell}_{\mathrm{UCE}} (\psi, j) \); hence, it is equivalent to the unified objective \( \tilde{\ell}_\mathrm{U}(\psi; j) \).
\end{enumerate}
\end{lemma}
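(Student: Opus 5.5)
The plan is to expand the unified loss once in general form and then specialize it to each likelihood. Write \( Q_\nu(\theta| y) = Q^{\mathrm{prior}}(\theta)\, p(y|\theta)^\nu / Z_\nu(y) \), where the normalizer \( Z_\nu(y)=\int Q^{\mathrm{prior}}(\theta)p(y|\theta)^\nu d\theta \) does not depend on \( \psi \). Then
\begin{equation*}
\ell_{\mathrm U}(\psi;i) = \E_{Q_\psi^{(i)}}\!\left[\log Q_\psi^{(i)}(\theta) - \log Q^{\mathrm{prior}}(\theta) - \nu \log p(y^{(i)}|\theta)\right] + \log Z_\nu(y^{(i)}).
\end{equation*}
Dividing by \( \nu>0 \), which is a positive rescaling, gives
\begin{equation*}
\nu^{-1}\ell_{\mathrm U}(\psi;i) = \E_{Q_\psi^{(i)}}[-\log p(y^{(i)}|\theta)] + \nu^{-1}\KL(Q_\psi^{(i)}\|Q^{\mathrm{prior}}) + \mathrm{const}.
\end{equation*}
This is the single identity on which all three cases rest: an expected negative log-likelihood plus a KL regularizer toward the prior with weight \( \nu^{-1} \).

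\textbf{Case 1.} Take the Gaussian likelihood. Its negative log-likelihood is \( \|e_y^{(i)}-\theta\|^2/(2\sigma^2) \) plus a \( \psi \)-free term. Multiplying the identity by \( 2\sigma^2 \) gives \( \E[\|\theta-e_y\|^2] + (2\sigma^2/\nu)\KL(Q_\psi\|Q^{\mathrm{prior}}) \). With \( Q^{\mathrm{prior}}=Q_{\mathrm{GKDE}}^{\mathrm{prior},(i)} \), this is \( \ell_{\mathrm{MSE}} \) with \( \lambda=2\sigma^2/\nu \).

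One point needs a remark. The GKDE prior is node-specific and graph-dependent, but it is fixed with respect to \( \psi \), so the constant remains \( \psi \)-independent. The Gaussian density is also not normalized on the simplex. This does not matter, because only the \( \theta \)-dependence of \( p \) enters through \( Z_\nu \).

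\textbf{Case 2.} Take the categorical likelihood, so the negative log-likelihood is \( -e_y^\top\log\theta \). With a uniform Dirichlet prior, \( Q^{\mathrm{prior}} \) is constant on \( \Delta_C \), so \( \KL(Q_\psi\|Q^{\mathrm{prior}}) = -H[Q_\psi] + \log\Gamma(C) \). Here \( H \) is the differential entropy and \( \log \Gamma(C) \) is a constant. Substituting into the identity gives \( \ell_{\mathrm{UCE}} \) with \( \lambda=\nu^{-1} \).

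\textbf{Case 3.} This is the main obstacle, because the GPN-LOP loss sums over source nodes \( j \) with labels from node \( i \). The plan is to exchange the order of summation across the full training objective \( \sum_{i\in\gT^{(\mathrm{tr})}} \ell_{\mathrm{GPN-LOP}}(\psi;i) \). Grouping by \( j \) and defining \( w_j=\sum_i \Pi^{\mathrm{PPR}}_{i,j} \) and \( \tilde e_y^{(j)} = w_j^{-1}\sum_i \Pi^{\mathrm{PPR}}_{i,j} e_y^{(i)} \in \Delta_C \), the cross-entropy term is linear in the target. Hence \( \sum_i \Pi^{\mathrm{PPR}}_{i,j}\E[-e_y^{(i)\top}\log\theta^{(j)}] = w_j \E[-\tilde e_y^{(j)\top}\log\theta^{(j)}] \). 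The entropy term contributes \( -\lambda w_j H[Q_\psi^{\mathrm{ft},(j)}] \), so the node-\( j \) objective is \( w_j \) times the UCE loss at the soft target, \( \tilde\ell_{\mathrm{UCE}}(\psi,j) \).

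It remains to connect this to \( \tilde\ell_{\mathrm U} \). Interpret the soft target through the tempered likelihood \( p(\tilde e|\theta)=\prod_c\theta_c^{\tilde e_c} \). The Case 2 computation then goes through verbatim, since only linearity of \( \log p \) in the target is used. This yields equivalence with \( \tilde\ell_{\mathrm U}(\psi;j) \) with \( \lambda=\nu^{-1} \), and the positive weight \( w_j \) is absorbed as scaling.

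I would also state explicitly that the equivalence in Case 3 holds at the level of the aggregated objective. If \( w_j=0 \), node \( j \) simply drops out of the objective.
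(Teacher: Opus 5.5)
Your proposal is correct and follows essentially the same route as the paper: expand the KL into an expected negative log-likelihood plus a KL-to-prior term, substitute the Gaussian and categorical likelihoods, and for GPN-LOP interchange the finite sums to get $w_j$ times the soft-target UCE loss before reapplying the categorical case. The only slip is the $\psi$-independent constant in Case 2, which should be $-\log\Gamma(C)$ rather than $+\log\Gamma(C)$ because the uniform Dirichlet density equals $\Gamma(C)$ on $\Delta_C$. This does not affect the argument.
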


We now characterize the population optimum of this common loss.
\begin{theorem}
    \label{thm:U-loss-opt}
    The population minimizer of the unified EDL loss \( \ell_\mathrm{U} \) under \( \Prob_\phi \) satisfies
    \begin{equation}
        \argmin_{\psi}\, \E_{\Prob_\phi(\rmA, \rmX, Y)} [ \ell_\mathrm{U} (\psi; i) ] = \argmin_{\psi}\, \E_{\Prob_\phi(\rmA, \rmX)} [
        \KL( Q_\psi^{(i)} \| Q^{\star,(i)}) ],
    \end{equation}
        where
        \begin{equation}
            Q^{\star, (i)} (\theta) = \frac{Q^{\mathrm{prior}}(\theta) \exp (\nu \E_{\Prob_\phi(y^{(i)}| \rmX, \rmA)} [\log p(y ^{(i)}|\theta^{(i)})])}{\int_{\Delta_C} Q^{\mathrm{prior}}(\theta') \exp (\nu \E_{\Prob_\phi(y^{(i)}| \rmX, \rmA)} [\log p(y^{(i)}|\theta')]) \, d\theta'}.
        \end{equation}
\end{theorem}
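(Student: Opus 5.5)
The approach is to expand the KL divergence in $\ell_\mathrm{U}$ explicitly, use the tower property to condition on $(\rmA,\rmX)$, and recognize the result as a KL to a Gibbs-type distribution plus terms that do not depend on $\psi$.

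First I write the tempered posterior as $Q_\nu(\theta\mid y)=Q^{\mathrm{prior}}(\theta)p(y\mid\theta)^\nu/Z_\nu(y)$, where $Z_\nu(y)=\int_{\Delta_C}Q^{\mathrm{prior}}(\theta')p(y\mid\theta')^\nu\,d\theta'$. This gives
\begin{equation*}
\ell_\mathrm{U}(\psi;i)=\E_{Q_\psi^{(i)}}\!\left[\log Q_\psi^{(i)}(\theta)-\log Q^{\mathrm{prior}}(\theta)-\nu\log p(y^{(i)}\mid\theta)\right]+\log Z_\nu(y^{(i)}).
\end{equation*}
The last term depends on $y^{(i)}$ but not on $\psi$. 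Its expectation under $\Prob_\phi$ is therefore an additive constant and does not affect the argmin, provided it is finite. I will assume this finiteness, which holds for the Gaussian and categorical likelihoods considered in Lemma~\ref{lemma:U-EDL-equiv}.

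Second, I take expectations by first conditioning on $(\rmA,\rmX)$. Since $Q_\psi^{(i)}$ is a function of $(\rmA,\rmX)$ only, the entropy and prior cross-entropy terms are unaffected by the inner expectation over $y^{(i)}$. For the likelihood term I swap the order of integration, which is justified by Tonelli/Fubini under an integrability assumption such as $\log p$ being bounded below on the support or integrable. This yields
\begin{equation*}
\E_{y^{(i)}\mid\rmA,\rmX}\,\E_{Q_\psi^{(i)}}[\log p(y^{(i)}\mid\theta)]=\E_{Q_\psi^{(i)}}\big[\bar\ell(\theta)\big],\qquad \bar\ell(\theta):=\E_{\Prob_\phi(y^{(i)}\mid\rmX,\rmA)}[\log p(y^{(i)}\mid\theta)].
\end{equation*}

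Third, I define $Q^{\star,(i)}(\theta)=Q^{\mathrm{prior}}(\theta)\exp(\nu\bar\ell(\theta))/Z^\star$ as in the statement. The normalizer $Z^\star$ depends only on $(\rmA,\rmX)$. By Jensen, $Z^\star\le\E_y[Z_\nu(y)]$, so $Z^\star$ is finite and $Q^{\star,(i)}$ is a valid density. Then
\begin{equation*}
\E_{Q_\psi^{(i)}}[\log Q_\psi^{(i)}-\log Q^{\mathrm{prior}}-\nu\bar\ell]=\KL(Q_\psi^{(i)}\|Q^{\star,(i)})-\log Z^\star.
\end{equation*}
Taking the outer expectation over $\Prob_\phi(\rmA,\rmX)$, the population loss equals $\E_{\Prob_\phi(\rmA,\rmX)}[\KL(Q_\psi^{(i)}\|Q^{\star,(i)})]$ plus $\psi$-independent constants. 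The two argmins therefore coincide.

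The main obstacle is the measure-theoretic bookkeeping rather than the algebra. I need $Q_\psi^{(i)}$ to be absolutely continuous with respect to $Q^{\mathrm{prior}}$, which holds for Dirichlets with positive parameters. I also need the interchange of expectations and the finiteness of the $\log Z$ terms. I will state these as standing integrability assumptions and, for the categorical case, handle the $\log\theta_c$ singularities at the boundary using the fact that Dirichlet expectations of $\log\theta_c$ are finite.
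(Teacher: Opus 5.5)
Your proposal is correct and follows the same approach as the paper. The paper simply defers to Theorem~5.1 of \citet{Shen-2024} with $(\rmA,\rmX)$ in place of the i.i.d.\ covariates, and your argument (expand the KL, condition on $(\rmA,\rmX)$, recognize the Gibbs-form $Q^{\star,(i)}$, drop the $\psi$-independent $\log Z$ terms) is that computation written out, with the integrability and normalizability conditions, such as your Jensen bound on $Z^\star$, made explicit where the paper leaves them implicit.
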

The proofs of Lemma~\ref{lemma:U-EDL-equiv} and Theorem~\ref{thm:U-loss-opt} are provided in Appendix~\ref{app:sec-U-equiv}. For fixed \( Q^{\mathrm{prior}}\) and finite \( \nu \), GPN, GPN-LOP, and CUQ-GNN retain non-vanishing epistemic uncertainty at the population optimum, failing Definition~\ref{def:consistent-predictor}, while their aleatoric uncertainty depends on \( \nu \) and may differ from that of the Bayes-optimal predictor; see Appendix~\ref{app:sec-LOP-GPN} for more details on GPN-LOP. For S-BGCN-K, an unbounded increase in local evidence, such as under the dense graphon setting, implies convergence of the population-level GKDE prior and the corresponding population optimum to a Dirac distribution; see Appendix~\ref{app:sec-S-BGCN-K}. With Monte Carlo dropout, however, epistemic uncertainty need not vanish~\citep{Verdoja-2021}. The location of the limiting Dirac also depends on the GKDE construction and need not be Bayes-optimal. Moreover, because they are deterministic models, none of these methods account for epistemic uncertainty induced by data sampling~\citep{Shen-2024}, which is a recognized source of epistemic uncertainty~\citep{Jimenez-2026} and, in our setting, corresponds to randomness from observing a single finite graph. S-BGCN-T-K and GPN-GD additionally regularize class-probability estimation and pairwise relations between training nodes, respectively, without introducing further mechanisms for evidence accumulation under graph growth, and are therefore not analyzed separately.   

Thus, the asymptotic behavior of epistemic and aleatoric uncertainty predicted by S-BGCN-K, GPN, GPN-LOP, and CUQ-GNN is determined externally, either by the loss regularization or by the construction of the prior. Graph-specific uncertainty principles defined on a fixed graph therefore do not by themselves guarantee the required information-growth behavior.

\section{Graph Bootstrap Ensembles}
\label{sec:gbens}
The previous analysis motivates constructing a second-order predictor whose uncertainty is tied explicitly to the information available about the graph DGP. We pursue this idea through \emph{graph bootstrap ensembles} (GB-Ens), extending deep ensembles, an already understudied approach in node classification~\citep{Vieira-2026}, to the largely unexplored setting of graph bootstrapping. For each of \( B \) bootstrap graph realizations, we train \( S \) independently initialized GNNs, mirroring i.i.d.\ bootstrap ensembles that combine procedural and data variability through resampling~\citep{Jurgens-2024, Shen-2024, Jain-2026, Jimenez-2026}. The ability to represent both recognized sources of epistemic uncertainty~\citep{Jain-2026, Jimenez-2026} provides a key motivation for using bootstrap ensembles rather than alternative second-order predictors, including EDL models and deep ensembles, which may fail to capture one or both sources.

In the i.i.d.\ setting, bootstrap-based MI has also been shown, under suitable conditions, to converge asymptotically to its Bayesian counterpart~\citep{Jain-2026}, providing theoretical support for bootstrap-based epistemic uncertainty estimation with neural networks. For graphs, however, bootstrap guarantees remain restricted to specific combinations of graph DGPs, bootstrap schemes, and estimation targets~\citep{Schmid-2017, Levin-2021, Green-2022, Lin-2022, Shao-2026}, with no analogous result for MI. Nevertheless, projective synthetic graph DGPs whose sampling process can be accurately approximated by a bootstrap procedure provide a tractable setting for studying whether graph bootstrap ensembles empirically exhibit the consistent information-growth behavior of Definition~\ref{def:consistent-predictor}, analogous to that established in the i.i.d.\ setting. GB-Ens is also computationally feasible despite increased total cost, through parallel training and sequential inference; see Appendix~\ref{app:sec-comp-comp} for computational comparisons.

\textbf{Uncertainty decomposition.}
For node \(i\), let \(\theta_{b,s}^{(i)} \in \Delta_C\) be the predictive distribution from seed \(s\) on bootstrap realization \(b\). The resulting second-order predictor and predictive mean are
\begin{equation}
Q_{B,S}^{(i)} = \frac{1}{BS} \sum_{b=1}^{B} \sum_{s=1}^{S} \delta_{\theta_{b,s}^{(i)}}, \qquad \bar{\theta}^{(i)} = \frac{1}{BS} \sum_{b=1}^{B} \sum_{s=1}^{S} \theta_{b,s}^{(i)},
\end{equation}
where \( \delta_{\theta_{b,s}^{(i)}} \) denotes the Dirac measure concentrated at \( \theta_{b,s}^{(i)} \), with
\begin{equation}
\TU^{(i)} = H(\bar{\theta}^{(i)}), \qquad \AU^{(i)} = \frac{1}{BS} \sum_{b=1}^{B} \sum_{s=1}^{S} H(\theta_{b,s}^{(i)}), \qquad \EU^{(i)} = \TU^{(i)} - \AU^{(i)}.
\end{equation}
The crossed \( B \times S \) design further allows us to separate the two sources of epistemic uncertainty represented by GB-Ens. With the seed-averaged prediction \( \bar{\theta}_{b}^{(i)} = S^{-1} \sum_{s=1}^{S} \theta_{b,s}^{(i)} \),
\begin{equation}
    \EU_{\mathrm{boot}}^{(i)} = H(\bar{\theta}^{(i)}) - \frac{1}{B} \sum_{b=1}^{B} H(\bar{\theta}_{b}^{(i)}), \qquad \EU_{\mathrm{seed}}^{(i)} = \frac{1}{B} \sum_{b=1}^{B} \left[ H(\bar{\theta}_{b}^{(i)}) - \frac{1}{S} \sum_{s=1}^{S} H(\theta_{b,s}^{(i)}) \right].
\end{equation}
Here, \( \EU_{\mathrm{boot}} \) measures disagreement across bootstrap graph realizations, while \( \EU_{\mathrm{seed}} \) measures training-induced disagreement within each realization, and \( \EU^{(i)} = \EU_{\mathrm{boot}}^{(i)} + \EU_{\mathrm{seed}}^{(i)} \).

\textbf{Graph resampling.}
To instantiate the graph-resampling component of GB-Ens, we adopt a local non-parametric bootstrap adapted from~\citet{Jeong-2025}, developed under an exchangeable graphon model. It estimates asymptotically unbiased edge probabilities from the observed connectivity of similar nodes and samples graph realizations accordingly. We use an edge-only bootstrap that retains node identities, features, and labels across realizations, thereby preserving the node classification task while capturing sampling variability in the graph structure. The complete resampling procedure and implementation details are given in Appendix~\ref{app:bootstrap}.

\section{Experiments}
\label{sec:experiments}
We empirically address \textbf{(RQ2)}--\textbf{(RQ3)} through controlled experiments on synthetic projective graph DGPs. For \textbf{(RQ2)}, we validate our analysis of graph EDL uncertainty under information growth. For \textbf{(RQ3)}, we assess graph bootstrap ensembles as an alternative approach to epistemic prediction. Code is available at \href{[code]}{https://github.com/emmamenegh/Research-Code-Release}.

\textbf{Models.}
We evaluate S-BGCN-K and S-BGCN-T-K~\citep{Zhao-2020}, GPN~\citep{Stadler-2021}, GPN-GD\textsubscript{EVI} and GPN-GD\textsubscript{LAT}~\citep{Hart-2023}, GPN-LOP~\citep{Damke-2024a}, CUQ-APPNP, CUQ-GAT, and CUQ-GCN~\citep{Damke-2024b} as EDL baselines, and GB-Ens with APPNP, GAT, and GCN architectures. Each GB-Ens comprises \( 25 \times 4 \) models, corresponding to \( 25 \) bootstrap samples and \( 4 \) random seeds per sample. Model parameterizations are reported in Appendix~\ref{app:model-params}.

\textbf{Datasets.}
Our experiments comprise two synthetic projective graphon DGPs. The synthetic DGPs are an exponential-decay graphon and an assortative stochastic-block model (SBM), which define complementary continuous- and discrete-structure regimes for a balanced seven-class node-classification problem. For each DGP, we generate \(20\) independent replicates of a projective sequence, with six graph sizes logarithmically spaced between \(500\) and \(10{,}000\) nodes. At each graph size, we use \(5\%/15\%/80\%\) class-stratified train/validation/test splits constructed according to the protocol in Section~\ref{sec:IG-experiment} and fixed across bootstrap replicates for each observed graph. Full DGP specifications and graphon statistics are reported in Appendices~\ref{app:synthetic-graphons} and~\ref{app:bootstrap}. 

%%%%% RQ2 %%%%%
\textbf{RQ2: Graph EDL models are inconsistent epistemic predictors.}

We examine whether the uncertainty behavior of graph EDL models characterized at the nonparametric population optimum by Theorem~\ref{thm:U-loss-opt} persists with finite data and parametric models. We assess both EU concentration with increasing information and the dependence of EU and AU on model parameterization. Figure~\ref{fig:EU-bench-graphon1} shows mean EU across \( 20 \) projective-sequence replicates of the exponential-decay graphon, with \( 95\% \) bootstrap confidence intervals (CIs). S-BGCN-K and S-BGCN-T-K exhibit decreasing EU, as permitted by their information-dependent prior, but remain non-negligible over the observed sequence, whereas GPN, GPN-GD, GPN-LOP, and CUQ-GNN remain approximately constant, consistent with their population optima retaining finite evidence. EU magnitude also differs substantially across methods, reflecting its theoretical dependence on the parameterization, as illustrated by the similar magnitudes of GPN, GPN-GD, and CUQ-GNN under shared parameter values. At \( N=10{,}000 \), varying the EDL regularization parameter \( \lambda \) reveals substantial changes in AU on the same observed graph (Figure~\ref{fig:AU-bench-graphon1}), demonstrating its dependence on model parameterization rather than on the DGP alone. Analogous results for the SBM are reported in Appendix~\ref{app:sec-exp-proj}. 

\begin{figure}[t]
    \centering

    \begin{subfigure}[t]{0.49\linewidth}
        \centering
        \includegraphics[width=\linewidth]{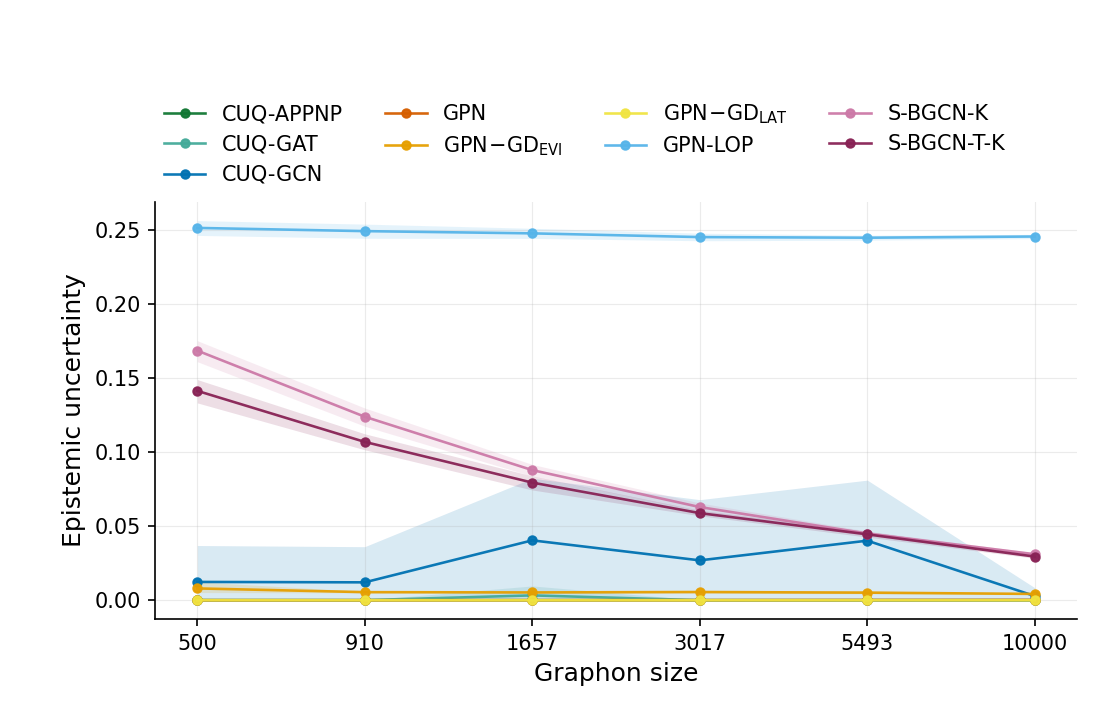}
        \caption{Epistemic uncertainty under information growth.}
        \label{fig:EU-bench-graphon1}
    \end{subfigure}
    \hfill
    \begin{subfigure}[t]{0.49\linewidth}
        \centering
        \includegraphics[width=\linewidth]{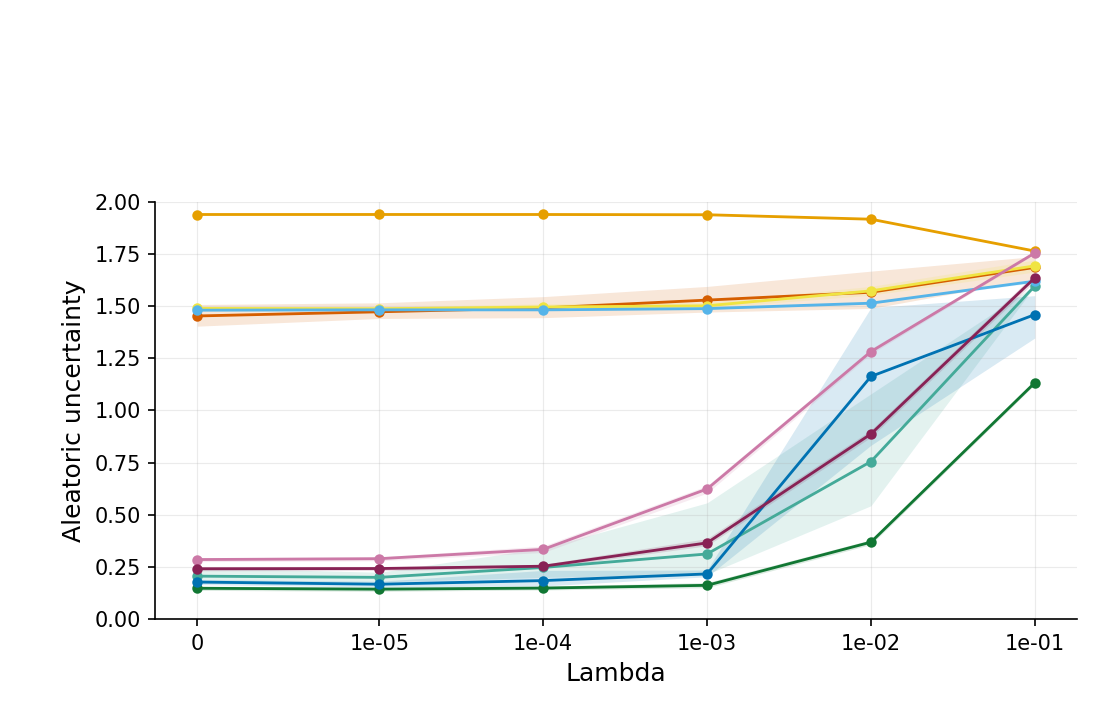}
        \caption{Aleatoric uncertainty across \( \lambda \) at \( N=10{,}000 \).}
        \label{fig:AU-bench-graphon1}
    \end{subfigure}
    
    \caption{Uncertainty estimates of EDL models on the exponential-decay graphon.}
    \label{fig:AU-graphons}
\end{figure}

%%%%% RQ3 %%%%%
\textbf{RQ3: GB-Ens is a family of candidate consistent epistemic predictors.}

Figure~\ref{fig:EU-GBEns-graphon1} shows mean GB-Ens epistemic uncertainty across \( 20 \) projective-sequence replicates of the exponential-decay graphon, with pointwise \( 95\% \) bootstrap CIs. From \( N=500 \) to \( N=10{,}000 \), EU decreases monotonically by \( 94\% \) for GCN, \( 91\% \) for APPNP, and \( 61\% \) for GAT. On the SBM (Appendix~\ref{app:sec-exp-proj}), EU similarly decreases substantially for APPNP and GCN, whereas GAT shows a weaker, non-monotonic reduction due to persistent randomized-training variability, plausibly reflecting its sensitivity to learned attention weights and additional attention dropout. The decomposition in Figure~\ref{fig:EU-deco-GBEns-graphon-1} further shows comparable randomized-seed and bootstrap contributions to EU that decrease with increasing information, with relative magnitude depending on the backbone and DGP. Finally, Figure~\ref{fig:AU-Ens-graphon-1} shows that GB-Ens AU decreases and eventually stabilizes, with all three backbones approaching similar plateaus, consistent with convergence toward a predictor-independent estimate of aleatoric uncertainty. The same behavior is observed for the SBM (Appendix~\ref{app:sec-exp-proj}).

To assess the effect of graph resampling, we compare GB-Ens with standard graph deep ensembles (G-Ens), trained with \( 10 \) different random seeds on the observed graph using the same backbones. Figure~\ref{fig:EU-GEns-graphon-1} shows that, on the exponential-decay graphon, G-Ens exhibit no consistent EU contraction, while Figure~\ref{fig:AU-Ens-graphon-1} shows strongly backbone-dependent AU trajectories, with GAT trending upward and GCN continuing to decrease. Similar behavior is observed on the SBM (Appendix~\ref{app:sec-exp-proj}). Moreover, the gap between G-Ens EU and the randomized-training component of GB-Ens widens with graph size, as the latter contracts under information growth while the former does not. Graph resampling therefore does not simply add bootstrap variability, but induces convergent behavior also in the seed component, enabling EU contraction that is not observed for standard deep ensembles.

As a simple diagnostic of GB-Ens's approximation to the full second-order distribution induced by true-DGP resampling, which captures predictive variability without DGP estimation error, we compare the two distributions using normalized node-wise Wasserstein-1 distances. On the exponential-decay graphon, distances remain small and stable with graph size for APPNP (\( \approx0.29 \)) and GCN (\( \approx0.21 \)), indicating good agreement, while increasing from \( 0.45 \) to \( 0.62 \) for GAT. Similar results on the SBM are reported in Appendix~\ref{app:sec-exp-proj}, suggesting that approximation quality varies across predictors.

Overall, these results support GB-Ens as a family of candidate consistent epistemic predictors. This behavior nevertheless remains predictor-specific, as EU contraction ultimately relies on regularity of the underlying learning procedure; persistent training variability can therefore produce weaker and partly non-monotonic reductions.

\begin{figure}[t]
    \centering

    \begin{subfigure}[t]{0.49\linewidth}
        \centering
        \includegraphics[width=\linewidth]{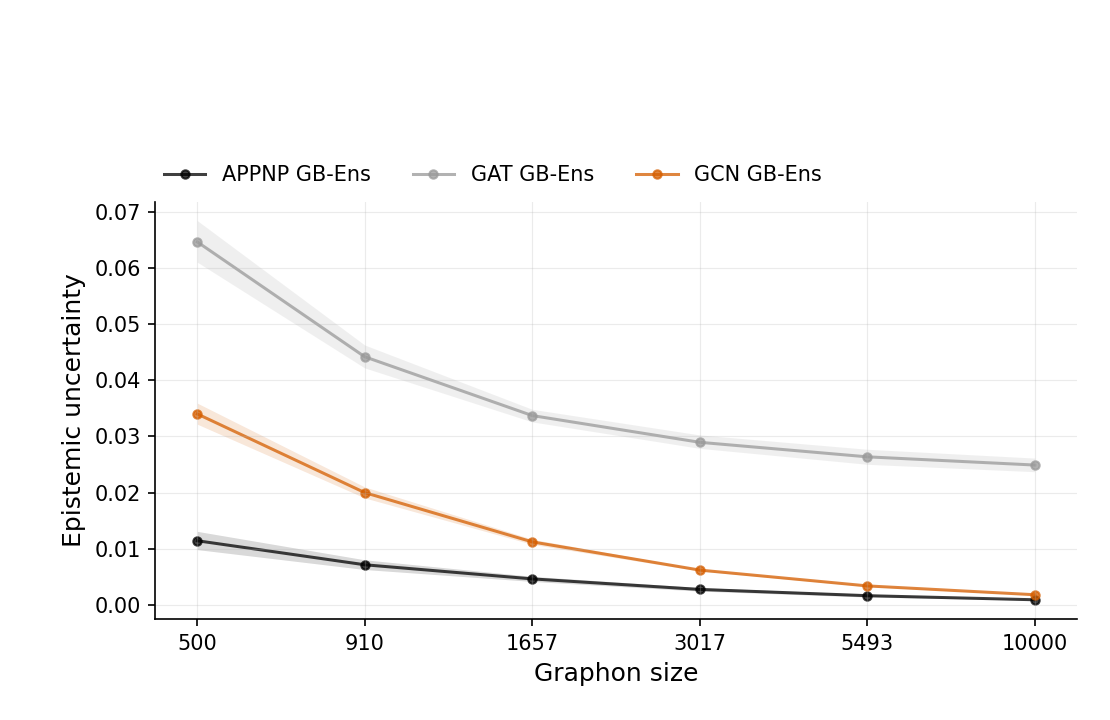}
        \caption{Total epistemic uncertainty.}
        \label{fig:EU-GBEns-graphon1}
    \end{subfigure}
    \hfill
    \begin{subfigure}[t]{0.49\linewidth}
        \centering
        \includegraphics[width=\linewidth]{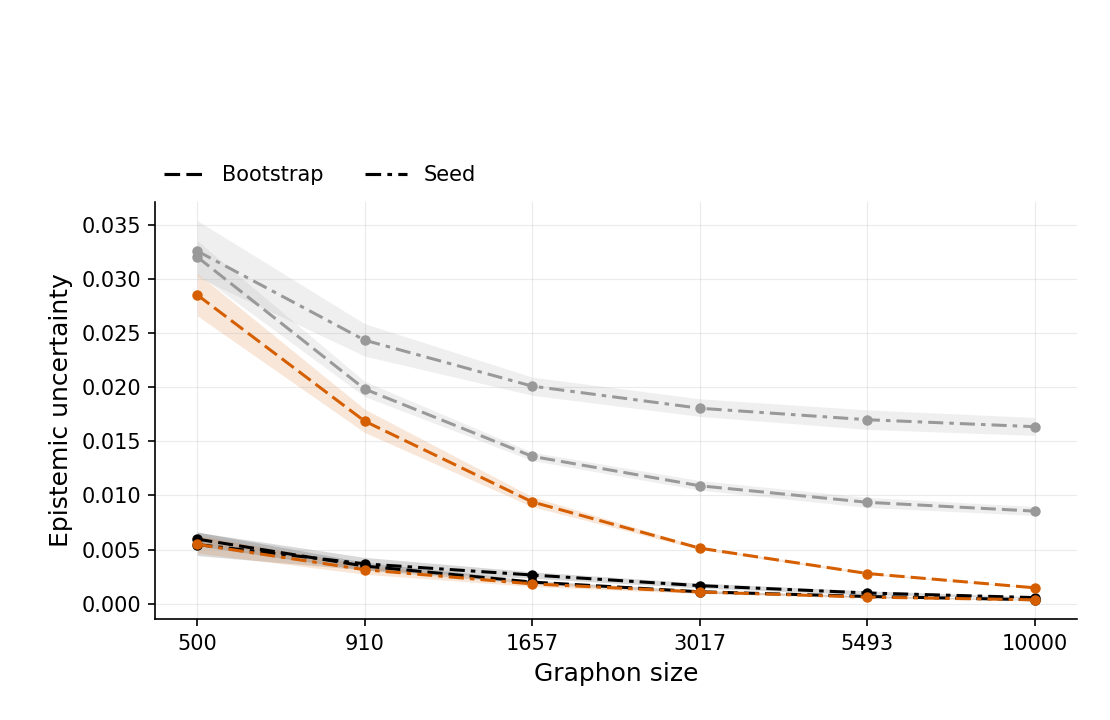}
        \caption{Bootstrap and training-seed decomposition.}
        \label{fig:EU-deco-GBEns-graphon-1}
    \end{subfigure}
    
    \caption{GB-Ens mean epistemic uncertainty on the exponential-decay graphon.}
    \label{fig:EU-GBEns-graphon1-general}
\end{figure}

\begin{figure}[t]
    \centering

    \begin{subfigure}[t]{0.49\linewidth}
        \centering
        \includegraphics[width=\linewidth]{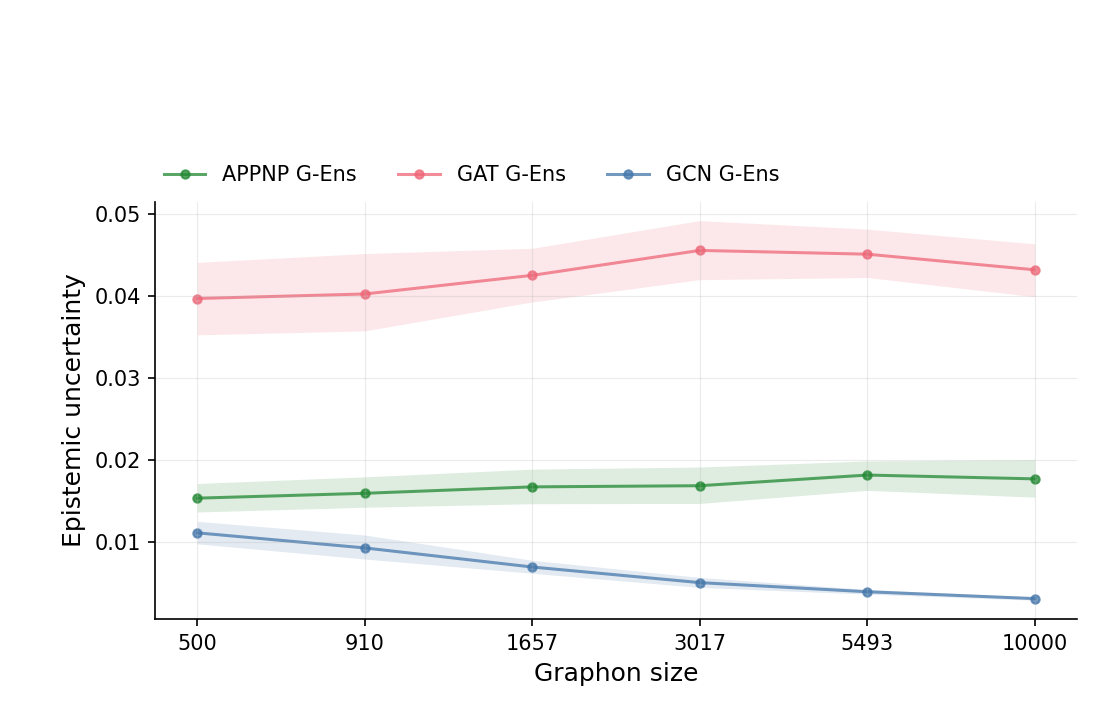}
        \caption{Epistemic uncertainty of G-Ens.}
        \label{fig:EU-GEns-graphon-1}
    \end{subfigure}
    \hfill
    \begin{subfigure}[t]{0.49\linewidth}
        \centering
        \includegraphics[width=\linewidth]{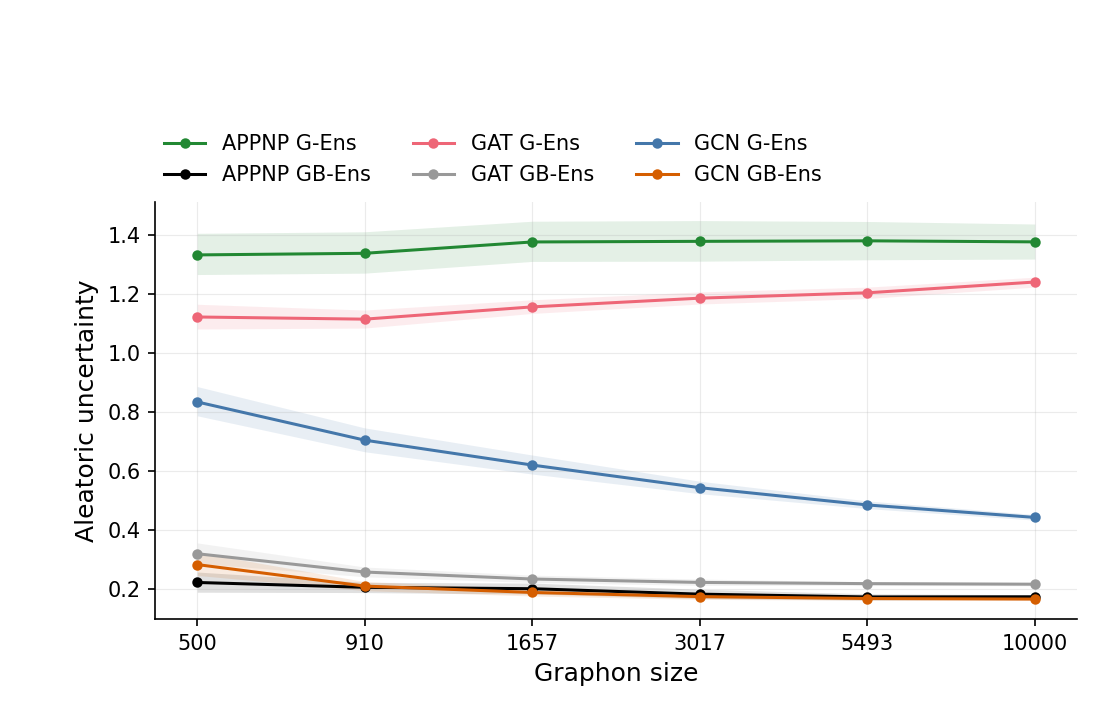}
        \caption{Aleatoric uncertainty of GB-Ens and G-Ens.}
        \label{fig:AU-Ens-graphon-1}
    \end{subfigure}
\caption{Ensemble mean uncertainties on the exponential-decay graphon.}
    \label{fig:U-Ens-graphon1}
\end{figure}

\section{Conclusions}
\textbf{Take-home message.} Epistemic uncertainty in node classification should be assessed through its behavior under increasing information, rather than solely by obeying principles or being useful for downstream tasks on a fixed graph. This perspective identifies consistent epistemic uncertainty as requiring estimates that reflect data uncertainty rather than remaining hyperparameter-dependent, a requirement generally not met by graph EDL models. GB-Ens provides a constructive approach by capturing data and procedural uncertainty, with controlled experiments supporting it as a family of candidate consistent predictors.

\textbf{Limitations \& future work.} The theoretical consistency of bootstrap-based graph epistemic predictors remains to be established. Moreover, information-growth behavior cannot be evaluated on standard real-world benchmarks, which lack the sampling process required to generate increasingly informative observations from the same DGP. Benchmarks leveraging real graph-growth mechanisms, where projectibility is often natural~\citep{Shalizi-2013}, could extend our analysis beyond synthetic random graphs, either retaining node classification and motivating improved bootstrap-based predictors or considering tasks more naturally aligned with graph growth.

\bibliography{iclr2027_conference}

@article{Abdar-2021,
  title={A review of uncertainty quantification in deep learning: Techniques, applications and challenges},
  author={Abdar, Moloud and Pourpanah, Farhad and Hussain, Sadiq and Rezazadegan, Dana and Liu, Li and Ghavamzadeh, Mohammad and Fieguth, Paul and Cao, Xiaochun and Khosravi, Abbas and Acharya, U Rajendra and others},
  journal={Information fusion},
  volume={76},
  pages={243--297},
  year={2021},
  publisher={Elsevier}
}

@misc{
Bazhenov-2023,
title={Revisiting Uncertainty Estimation for Node Classification: New Benchmark and Insights},
author={Gleb Bazhenov and Denis Kuznedelev and Andrey Malinin and Artem Babenko and Liudmila Prokhorenkova},
year={2023},
url={https://openreview.net/forum?id=DB3BH3arU2Y}
}

@article{Bengs-2022,
  title={Pitfalls of epistemic uncertainty quantification through loss minimisation},
  author={Bengs, Viktor and H{\"u}llermeier, Eyke and Waegeman, Willem},
  journal={Advances in Neural Information Processing Systems},
  volume={35},
  pages={29205--29216},
  year={2022}
}

@article{Charpentier-2020,
  title={Posterior network: Uncertainty estimation without ood samples via density-based pseudo-counts},
  author={Charpentier, Bertrand and Z{\"u}gner, Daniel and G{\"u}nnemann, Stephan},
  journal={Advances in neural information processing systems},
  volume={33},
  pages={1356--1367},
  year={2020}
}

@inproceedings{
Charpentier-2022,
title={Natural Posterior Network: Deep Bayesian Predictive Uncertainty for Exponential Family Distributions},
author={Bertrand Charpentier and Oliver Borchert and Daniel Z{\"u}gner and Simon Geisler and Stephan G{\"u}nnemann},
booktitle={International Conference on Learning Representations},
year={2022},
url={https://openreview.net/forum?id=tV3N0DWMxCg}
}

@article{Chen-2026,
  title={Uncertainty quantification on graph learning: a survey},
  author={Chen, Chao and Guo, Chenghua and Xu, Rui and Chen, Jiujiu and Liao, Xiangwen and Zhang, Xi and Xie, Sihong and Xiong, Hui and Yu, Philip},
  journal={arXiv preprint arXiv:2404.14642},
  year={2024}
}

@inproceedings{
Damke-2024a,
title={Linear Opinion Pooling for Uncertainty Quantification on Graphs},
author={Clemens Damke and Eyke H{\"u}llermeier},
booktitle={The 40th Conference on Uncertainty in Artificial Intelligence},
year={2024},
url={https://openreview.net/forum?id=qLGkfpXTSn}
}

@inproceedings{Damke-2024b,
  title={CUQ-GNN: Committee-Based Graph Uncertainty Quantification Using Posterior Networks},
  author={Damke, Clemens and H{\"u}llermeier, Eyke},
  booktitle={Joint European Conference on Machine Learning and Knowledge Discovery in Databases},
  pages={306--323},
  year={2024},
  organization={Springer}
}

@phdthesis{Depeweg-2019,
  title={Modeling epistemic and aleatoric uncertainty with bayesian neural networks and latent variables},
  author={Depeweg, Stefan},
  year={2019},
  school={Technische Universit{\"a}t M{\"u}nchen}
}

@article{Galikowski-2023,
  title={A survey of uncertainty in deep neural networks: J. Gawlikowski et al.},
  author={Gawlikowski, Jakob and Tassi, Cedrique Rovile Njieutcheu and Ali, Mohsin and Lee, Jongseok and Humt, Matthias and Feng, Jianxiang and Kruspe, Anna and Triebel, Rudolph and Jung, Peter and Roscher, Ribana and others},
  journal={Artificial intelligence review},
  volume={56},
  number={Suppl 1},
  pages={1513--1589},
  year={2023},
  publisher={Springer}
}

@inproceedings{Gasteiger-2018,
title={Combining Neural Networks with Personalized PageRank for Classification on Graphs},
author={Johannes Gasteiger and Aleksandar Bojchevski and Stephan Günnemann},
booktitle={International Conference on Learning Representations},
year={2019},
url={https://openreview.net/forum?id=H1gL-2A9Ym},
}

@inproceedings{Giles-1998,
  title={CiteSeer: An automatic citation indexing system},
  author={Giles, C Lee and Bollacker, Kurt D and Lawrence, Steve},
  booktitle={Proceedings of the third ACM conference on Digital libraries},
  pages={89--98},
  year={1998}
}

@article{Green-2022,
  title={Bootstrapping exchangeable random graphs},
  author={Green, Alden and Shalizi, Cosma Rohilla},
  journal={Electronic Journal of Statistics},
  volume={16},
  number={1},
  pages={1058--1095},
  year={2022},
  publisher={The Institute of Mathematical Statistics and the Bernoulli Society}
}

@article{Hart-2023,
  title={Improvements on uncertainty quantification for node classification via distance based regularization},
  author={Hart, Russell and Yu, Linlin and Lou, Yifei and Chen, Feng},
  journal={Advances in Neural Information Processing Systems},
  volume={36},
  pages={55454--55478},
  year={2023}
}

@article{Hu-2021,
  title={Open graph benchmark: Datasets for machine learning on graphs},
  author={Hu, Weihua and Fey, Matthias and Zitnik, Marinka and Dong, Yuxiao and Ren, Hongyu and Liu, Bowen and Catasta, Michele and Leskovec, Jure},
  journal={Advances in neural information processing systems},
  volume={33},
  pages={22118--22133},
  year={2020}
}

@article{Hullermeier-2021,
  title={Aleatoric and epistemic uncertainty in machine learning: An introduction to concepts and methods},
  author={H{\"u}llermeier, Eyke and Waegeman, Willem},
  journal={Machine learning},
  volume={110},
  number={3},
  pages={457--506},
  year={2021},
  publisher={Springer}
}

@article{Jain-2026,
    title={Deep Ensembles for Epistemic Uncertainty: A Frequentist Perspective},
    author={Jain, Anchit and Bates, Stephen},
    journal={arXiv preprint arXiv:2510.22063},
    year={2025}
}

@inproceedings{Jeong-2025,
    title={{LOBSTUR}: A Local Bootstrap Framework for Tuning Unsupervised Representations in Graph Neural Networks},
    author={Sowon Jeong and Claire Donnat},
    booktitle={Machine Learning on Graphs in the Era of Generative Artificial Intelligence},
    year={2025},
    url={https://openreview.net/forum?id=xRasvIyMcW}
}

@inproceedings{Jimenez-2026,
    title={Position: Epistemic Uncertainty Estimation Methods are Fundamentally Incomplete},
    author={Sebastian Jimenez and Mira Juergens and Willem Waegeman},
    booktitle={Forty-third International Conference on Machine Learning Position Paper Track},
    year={2026},
    url={https://openreview.net/forum?id=g598HZM6ib}
}

@inproceedings{Jurgens-2024,
    title={Is Epistemic Uncertainty Faithfully Represented by Evidential Deep Learning Methods?},
    author={Mira Juergens and Nis Meinert and Viktor Bengs and Eyke H{\"u}llermeier and Willem Waegeman},
    booktitle={Forty-first International Conference on Machine Learning},
    year={2024},
    url={https://openreview.net/forum?id=mxjB0LIgpT}
}

@book{Kallenberg-2002,
  title={Foundations of modern probability},
  author={Kallenberg, Olav},
  year={1997},
  publisher={Springer}
}

@inproceedings{Kipf-2017,
    title={Semi-Supervised Classification with Graph Convolutional Networks},
    author={Thomas N. Kipf and Max Welling},
    booktitle={International Conference on Learning Representations},
    year={2017},
    url={https://openreview.net/forum?id=SJU4ayYgl}
}

@article{Levin-2021,
  title={Bootstrapping networks with latent space structure},
  author={Levin, Keith and Levina, Elizaveta},
  journal={arXiv preprint arXiv:1907.10821},
  year={2019}
}

@inproceedings{Li-2025,
  title={Position: Supervised classifiers answer the wrong questions for OOD detection},
  author={Li, Yucen Lily and Lu, Daohan and Kirichenko, Polina and Qiu, Shikai and Rudner, Tim GJ and Bruss, C Bayan and Wilson, Andrew Gordon},
  booktitle={Forty-second International Conference on Machine Learning Position Paper Track},
  year={2025}
}

@article{Lin-2022,
  title={Trading off accuracy for speedup: Multiplier bootstraps for subgraph counts},
  author={Lin, Qiaohui and Lunde, Robert and Sarkar, Purnamrita},
  journal={arXiv preprint arXiv:2009.06170},
  year={2020}
}

@article{McCallum-2000,
  title={Automating the construction of internet portals with machine learning},
  author={McCallum, Andrew Kachites and Nigam, Kamal and Rennie, Jason and Seymore, Kristie},
  journal={Information Retrieval},
  volume={3},
  number={2},
  pages={127--163},
  year={2000},
  publisher={Springer}
}

@inproceedings{Namata-2012,
  title={Query-driven Active Surveying for Collective Classification},
  author={Galileo Namata and Ben London and Lise Getoor and Bert Huang},
  year={2012},
  url={https://api.semanticscholar.org/CorpusID:15384610}
}

@article{Paplham-2026,
  title={Evaluating Epistemic Uncertainty: Beyond OOD Detection and Active Learning},
  author={Paplh{\'a}m, Jakub and Waegeman, Willem and H{\"u}llermeier, Eyke and Franc, Vojt{\v{e}}ch},
  journal={arXiv preprint arXiv:2607.14817},
  year={2026}
}

@article{Sale-2026b,
  title={The Aleatoric-Epistemic Dichotomy of Uncertainty is Meaningful and Indispensable for Machine Learning},
  author={Sale, Yusuf and Kotelevskii, Nikita and Panov, Maxim and H{\"u}llermeier, Eyke},
  journal={Available at SSRN 6783338},
  year={2026}
}

@inproceedings{Schmid-2017,
  title={Exponential random graph models with big networks: Maximum pseudolikelihood estimation and the parametric bootstrap},
  author={Schmid, Christian S and Desmarais, Bruce A},
  booktitle={2017 IEEE international conference on big data (Big Data)},
  pages={116--121},
  year={2017},
  organization={IEEE}
}

@article{Shchur-2019,
  title={Pitfalls of graph neural network evaluation},
  author={Shchur, Oleksandr and Mumme, Maximilian and Bojchevski, Aleksandar and G{\"u}nnemann, Stephan},
  journal={arXiv preprint arXiv:1811.05868},
  year={2018}
}

@article{Schweinberger-2017,
  title={A note on the role of projectivity in likelihood-based inference for random graph models},
  author={Schweinberger, Michael and Krivitsky, Pavel N and Butts, Carter T},
  journal={arXiv preprint arXiv:1707.00211},
  year={2017}
}

@article{Shalizi-2013,
  title={Consistency under sampling of exponential random graph models},
  author={Shalizi, Cosma Rohilla and Rinaldo, Alessandro},
  journal={Annals of statistics},
  volume={41},
  number={2},
  pages={508},
  year={2013}
}

@article{Shao-2026,
  title={Parametric Bootstrap for Fixed Edge-Probability Network Models},
  author={Shao, Zhixuan and Le, Can M},
  journal={arXiv preprint arXiv:2402.01866},
  year={2024}
}

@article{Shen-2024,
  title={Are uncertainty quantification capabilities of evidential deep learning a mirage?},
  author={Shen, Maohao and Ryu, Jongha Jon and Ghosh, Soumya and Bu, Yuheng and Sattigeri, Prasanna and Das, Subhro and Wornell, Gregory W},
  journal={Advances in Neural Information Processing Systems},
  volume={37},
  pages={107830--107864},
  year={2024}
}

@inproceedings{Stadler-2021,
 author = {Stadler, Maximilian and Charpentier, Bertrand and Geisler, Simon and Z\"{u}gner, Daniel and G\"{u}nnemann, Stephan},
 booktitle = {Advances in Neural Information Processing Systems},
 editor = {M. Ranzato and A. Beygelzimer and Y. Dauphin and P.S. Liang and J. Wortman Vaughan},
 pages = {18033--18048},
 publisher = {Curran Associates, Inc.},
 title = {Graph Posterior Network: Bayesian Predictive Uncertainty for Node Classification},
 url = {https://proceedings.neurips.cc/paper_files/paper/2021/file/95b431e51fc53692913da5263c214162-Paper.pdf},
 volume = {34},
 year = {2021}
}

@article{Veitch-2015,
  title={The class of random graphs arising from exchangeable random measures},
  author={Veitch, Victor and Roy, Daniel M},
  journal={arXiv preprint arXiv:1512.03099},
  year={2015}
}

@inproceedings{
Velickovic-2018,
title={Graph Attention Networks},
author={Petar Veličković and Guillem Cucurull and Arantxa Casanova and Adriana Romero and Pietro Liò and Yoshua Bengio},
booktitle={International Conference on Learning Representations},
year={2018},
url={https://openreview.net/forum?id=rJXMpikCZ},
}

@article{Verdoja-2021,
  title={Notes on the behavior of mc dropout},
  author={Verdoja, Francesco and Kyrki, Ville},
  journal={arXiv preprint arXiv:2008.02627},
  year={2020}
}

@article{Vieira-2026,
  title={Do Deep Ensembles Actually Capture Uncertainty in Graph Neural Networks?},
  author={Vieira, Pedro C and Ribeiro, Pedro and Borovitskiy, Viacheslav},
  journal={arXiv preprint arXiv:2605.22593},
  year={2026}
}

@article{
Wang-2024,
title={Uncertainty in Graph Neural Networks: A Survey},
author={Fangxin Wang and Yuqing Liu and Kay Liu and Yibo Wang and Sourav Medya and Philip S. Yu},
journal={Transactions on Machine Learning Research},
issn={2835-8856},
year={2024},
url={https://openreview.net/forum?id=0e1Kn76HM1},
note={}
}

@article{Wang-2026,
  title={A review of uncertainty representation and quantification in neural networks},
  author={Wang, Kaizheng and Cuzzolin, Fabio and Shariatmadar, Keivan and Moens, David and Hallez, Hans},
  journal={IEEE transactions on pattern analysis and machine intelligence},
  year={2025},
  publisher={IEEE}
}

@inproceedings{Wimmer-2023,
  title={Quantifying aleatoric and epistemic uncertainty in machine learning: Are conditional entropy and mutual information appropriate measures?},
  author={Wimmer, Lisa and Sale, Yusuf and Hofman, Paul and Bischl, Bernd and H{\"u}llermeier, Eyke},
  booktitle={Uncertainty in artificial intelligence},
  pages={2282--2292},
  year={2023},
  organization={PMLR}
}

@inproceedings{Xu-2020,
  title={Learning graphons via structured gromov-wasserstein barycenters},
  author={Xu, Hongteng and Luo, Dixin and Carin, Lawrence and Zha, Hongyuan},
  booktitle={Proceedings of the AAAI Conference on Artificial Intelligence},
  volume={35},
  pages={10505--10513},
  year={2021}
}

@article{Zhao-2020,
  title={Uncertainty aware semi-supervised learning on graph data},
  author={Zhao, Xujiang and Chen, Feng and Hu, Shu and Cho, Jin-Hee},
  journal={Advances in neural information processing systems},
  volume={33},
  pages={12827--12836},
  year={2020}
}
\bibliographystyle{iclr2027_conference}

\appendix
\setcounter{equation}{0}
\section{Graph EDL Under Information Growth}
\label{app:sec-EDL}
\subsection{Unified EDL Loss}
\label{app:sec-U-equiv}
In this section, we extend the unified formulation of EDL losses from the i.i.d.\ setting~\citep{Shen-2024} to node classification. We begin by defining a unified EDL loss and establishing its equivalence to the MSE and UCE losses used by existing graph EDL methods.

Let \( Q^{\mathrm{prior}} \) be a fixed Dirichlet prior and \( p(y | \theta) \) a likelihood. For a tempering parameter \( \nu > 0 \), define the tempered posterior
\begin{equation}
    Q_{\nu}( \theta | y) := \frac{Q^{\mathrm{prior}}(\theta) p( y | \theta)^{\nu}}{\int_{\Delta_C} Q^{\mathrm{prior}}(\theta') p(y | \theta')^{\nu}  d\theta'}.
\end{equation}
For each node \( i \), the unified EDL loss is 
\begin{equation}
    \ell_{\mathrm{U}}(\psi;i) := \KL ( Q_\psi^{(i)} \| Q_\nu( \cdot | y^{(i)})).
\end{equation}
Expanding the KL divergence gives
\begin{equation}
    \label{eq-A:U-loss}
    \ell_{\mathrm{U}}(\psi;i) = \KL ( Q_\psi^{(i)} \| Q^{\mathrm{prior}} ) - \nu \E_{\theta^{(i)} \sim Q_\psi^{(i)}} [
        \log p(y^{(i)} | \theta^{(i)}) ] + C,
\end{equation}
where \(C\) is independent of \(\psi\).

\begin{lemma}
\label{app:lemma-U-SBGCN}
For the Gaussian likelihood
\( p(e_y^{(i)} | \theta^{(i)}) = \gN(e_y^{(i)};\theta^{(i)}, \sigma^2I_C) \) and a fixed Dirichlet prior \( Q^{\mathrm{prior}} \),
setting \( \lambda = 2 \sigma^2 / \nu \) yields
\begin{equation}
    \ell_{\mathrm{U}}(\psi;i) = \lambda^{-1} \ell_{\mathrm{MSE}}(\psi;i) + C,
\end{equation}
where
\begin{equation}
    \ell_{\mathrm{MSE}}(\psi;i) = \E_{\theta^{(i)} \sim Q_\psi^{(i)}} [ \|\theta^{(i)} -e_y^{(i)}\|^2] + \lambda \KL ( Q_\psi^{(i)} \| Q^{\mathrm{prior}} ),
\end{equation}
and \( C \) is independent of \( \psi \). Thus, \( \ell_\mathrm{U} \) is equivalent, up to scaling and additive constants, to \( \ell_\mathrm{MSE} \).
\end{lemma}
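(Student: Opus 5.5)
We start from the expansion in \eqref{eq-A:U-loss}, which holds for any likelihood and prior:
\begin{equation*}
\ell_{\mathrm U}(\psi;i) = \KL(Q_\psi^{(i)}\|Q^{\mathrm{prior}}) - \nu\,\E_{\theta^{(i)}\sim Q_\psi^{(i)}}[\log p(e_y^{(i)}|\theta^{(i)})] + C .
\end{equation*}
This identity follows from writing $\log Q_\nu(\theta|y) = \log Q^{\mathrm{prior}}(\theta) + \nu\log p(y|\theta) - \log Z_\nu(y)$. The normalizer $Z_\nu(y)=\int_{\Delta_C} Q^{\mathrm{prior}}(\theta')p(y|\theta')^\nu\,d\theta'$ depends only on the label, not on $\psi$. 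Before using it, we check that $Z_\nu$ is finite and positive. This holds because the Gaussian density is bounded and strictly positive on the compact simplex, so the tempered posterior is well defined.

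Next we substitute the Gaussian likelihood. We have
\begin{equation*}
\log \gN(e_y^{(i)};\theta^{(i)},\sigma^2 I_C) = -\frac{1}{2\sigma^2}\|\theta^{(i)}-e_y^{(i)}\|^2 - \frac{C}{2}\log(2\pi\sigma^2).
\end{equation*}
Here the second term is a constant in $\psi$; we rename the class count to avoid a clash with the additive constant $C$. Taking the expectation under $Q_\psi^{(i)}$ and multiplying by $-\nu$ gives
\begin{equation*}
\frac{\nu}{2\sigma^2}\,\E_{Q_\psi^{(i)}}\|\theta^{(i)}-e_y^{(i)}\|^2 + \text{const}.
\end{equation*}
With $\lambda = 2\sigma^2/\nu$, the coefficient is exactly $\lambda^{-1}$. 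Therefore
\begin{equation*}
\ell_{\mathrm U}(\psi;i) = \lambda^{-1}\E_{Q_\psi^{(i)}}\|\theta^{(i)}-e_y^{(i)}\|^2 + \KL(Q_\psi^{(i)}\|Q^{\mathrm{prior}}) + C' .
\end{equation*}
Factoring out $\lambda^{-1}$ gives $\lambda^{-1}\bigl(\E\|\theta-e_y\|^2 + \lambda\KL(Q_\psi^{(i)}\|Q^{\mathrm{prior}})\bigr) + C' = \lambda^{-1}\ell_{\mathrm{MSE}}(\psi;i)+C'$. Since $\lambda^{-1}>0$, the two losses have the same minimizers in $\psi$, which establishes the claimed equivalence.

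The step that needs the most care is the bookkeeping of what counts as $\psi$-independent. Three terms are involved: $\log Z_\nu(y^{(i)})$, the Gaussian normalizing constant, and the entropy-free part of the KL expansion. None of them depends on $\psi$, and the expectation $\E_{Q_\psi}$ is finite because the integrand is bounded on $\Delta_C$.

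We also note a point about the prior. In S-BGCN-K, $Q^{\mathrm{prior}}$ is the node-specific GKDE prior. The lemma only requires that the prior be fixed with respect to $\psi$, so the argument goes through node-wise with $Q^{\mathrm{prior},(i)}$. We state this explicitly so that the correspondence with $\ell_{\mathrm{MSE}}$ in Section~\ref{sec:EDL-models} is exact.
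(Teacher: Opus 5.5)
Your proof is correct and is the same argument as the paper's. You substitute $-\log p(e_y^{(i)}|\theta^{(i)}) = \frac{1}{2\sigma^2}\|\theta^{(i)}-e_y^{(i)}\|^2 + \text{const}$ into the expansion \eqref{eq-A:U-loss} and read off $\nu/(2\sigma^2)=\lambda^{-1}$; your checks on $Z_\nu$ and on the node-wise GKDE prior are harmless additions. The only slip is in your bookkeeping remark: for a non-uniform prior, $\KL(Q_\psi^{(i)}\|Q^{\mathrm{prior}})$ has no $\psi$-independent ``entropy-free part'', since the cross term $-\E_{Q_\psi^{(i)}}[\log Q^{\mathrm{prior}}]$ depends on $\psi$. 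This does not affect the proof, because your derivation keeps that KL term whole.
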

\begin{proof}
Since
\begin{equation}
    -\log p(e_y^{(i)} | \theta^{(i)}) =
    \frac{1}{2 \sigma^2}
    \| \theta^{(i)} -e_y^{(i)} \|^2 + C',
\end{equation}
substitution into~\eqref{eq-A:U-loss} gives the result.
\end{proof}

\begin{lemma}
\label{lemma-A:U-UCE-equiv}
For the categorical likelihood \( p(y^{(i)} | \theta^{(i)}) = \theta_{y^{(i)}}^{(i)}\) and a uniform Dirichlet prior \( Q^{\mathrm{prior}} \),
setting \(\lambda = \nu^{-1}\) yields
\begin{equation}
    \ell_{\mathrm{U}}(\psi;i) = \nu \ell_{\mathrm{UCE}}(\psi;i) + C,
\end{equation}
where
\begin{equation}
    \ell_{\mathrm{UCE}}(\psi;i) =
    \E_{\theta^{(i)} \sim Q_\psi^{(i)}} [ -e_y^{(i) \top} \log \theta^{(i)} ] -
    \lambda H(Q_\psi^{(i)}),
\end{equation}
and \( C \) is independent of \( \psi \). Thus, \( \ell_\mathrm{U} \) is equivalent, up to scaling and additive constants, to \( \ell_\mathrm{UCE} \).
\end{lemma}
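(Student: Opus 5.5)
The argument mirrors the proof of Lemma~\ref{app:lemma-U-SBGCN}: start from the expansion~\eqref{eq-A:U-loss} and rewrite each of its two $\psi$-dependent terms in UCE form.

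\textbf{Likelihood term.} Under the categorical likelihood, $\log p(y^{(i)}|\theta^{(i)}) = \log\theta^{(i)}_{y^{(i)}} = e_y^{(i)\top}\log\theta^{(i)}$. Hence
\begin{equation}
-\nu\,\E_{Q_\psi^{(i)}}[\log p(y^{(i)}|\theta^{(i)})] = \nu\,\E_{Q_\psi^{(i)}}[-e_y^{(i)\top}\log\theta^{(i)}],
\end{equation}
which is exactly $\nu$ times the cross-entropy part of $\ell_{\mathrm{UCE}}$.

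\textbf{Prior term.} For the uniform Dirichlet prior $\Dir(\mathbf{1})$, the density is constant on the simplex and equals $\Gamma(C) = (C-1)!$ with respect to Lebesgue measure on $\Delta_C$. Therefore
\begin{equation}
\KL(Q_\psi^{(i)}\|Q^{\mathrm{prior}}) = \E_{Q_\psi^{(i)}}[\log Q_\psi^{(i)}] - \log\Gamma(C) = -H(Q_\psi^{(i)}) - \log\Gamma(C).
\end{equation}
Here $H$ is the differential entropy with respect to the same base measure, which is the entropy used in $\ell_{\mathrm{UCE}}$. The constant $\log\Gamma(C)$ does not depend on $\psi$.

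\textbf{Combining.} Substituting both identities into~\eqref{eq-A:U-loss} gives
\begin{equation}
\ell_{\mathrm U}(\psi;i) = \nu\Big(\E_{Q_\psi^{(i)}}[-e_y^{(i)\top}\log\theta^{(i)}] - \nu^{-1}H(Q_\psi^{(i)})\Big) + C.
\end{equation}
The bracket is $\ell_{\mathrm{UCE}}(\psi;i)$ with $\lambda=\nu^{-1}$, and $C$ collects $-\log\Gamma(C)$ and the normalizer of the tempered posterior. Since $\nu>0$, the scaling is positive, so $\ell_{\mathrm U}$ and $\ell_{\mathrm{UCE}}$ share minimizers.

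The only step needing care is the base-measure convention. The Dirichlet density and the differential entropy must both be taken with respect to Lebesgue measure on the $(C-1)$-dimensional simplex. Then the uniform prior's log-density is a genuine constant and cancels into $C$. Any other consistent choice of base measure changes only this constant.
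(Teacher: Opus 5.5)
Your proposal is correct and follows the same route as the paper: substitute the categorical log-likelihood $e_y^{(i)\top}\log\theta^{(i)}$ into the expansion \eqref{eq-A:U-loss}, and use that the KL divergence to the uniform Dirichlet prior equals $-H(Q_\psi^{(i)})$ plus a $\psi$-independent constant. Your version also makes that constant explicit as $-\log\Gamma(C)$ and states the base-measure convention, which the paper leaves implicit.
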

\begin{proof}
The result follows directly from~\eqref{eq-A:U-loss} and
\( \KL(Q_\psi^{(i)} \| Q^{\mathrm{prior}}) = -H(Q_\psi^{(i)}) + C' \) for a uniform Dirichlet prior.
\end{proof}

\begin{lemma}
For each node \( j \in \gV \) with
\begin{equation}
    w_j := \sum_{i \in \gT^{(\mathrm{tr})}} \Pi_{i,j}^{\mathrm{PPR}} >0,
\end{equation}
define the soft target
\begin{equation}
    \tilde{e}_y^{(j)} :=
    \frac{1}{w_j} \sum_{i \in \gT^{(\mathrm{tr})}} \Pi_{i,j}^{\mathrm{PPR}} e_y^{(i)}.
\end{equation}
The reformulated GPN-LOP node-wise loss satisfies
\begin{equation}
    \tilde{\ell}_{\mathrm{GPN-LOP}}(\psi;j) = w_j \tilde{\ell}_{\mathrm{UCE}}(\psi;j),
\end{equation}
where
\begin{equation}
    \tilde{\ell}_{\mathrm{UCE}}(\psi;j) := \E_{\theta^{(j)} \sim Q_\psi^{\mathrm{ft},(j)}} [ -\tilde{e}_y^{(j) \top} \log \theta^{(j)}] - \lambda H (Q_\psi^{\mathrm{ft},(j)}).
\end{equation}
Let
\begin{equation}
    \tilde{\ell}_{\mathrm{U}}(\psi;j) := \KL (Q_\psi^{\mathrm{ft},(j)} \| \tilde Q_\nu^{(j)}),
\end{equation}
where \( \tilde Q_\nu^{(j)} \) denotes the tempered posterior under the soft-target categorical likelihood, \( p(\tilde e_y^{(j)} | \theta^{(j)}) \propto \prod_{c=1}^C (\theta_c^{(j)})^{\tilde e_{y,c}^{(j)}} \).
Hence, under this soft-target likelihood, a uniform Dirichlet prior \( Q^{\mathrm{prior}} \), and \( \lambda = \nu^{-1} \),
\begin{equation}
    \tilde{\ell}_{\mathrm{U}}(\psi;j) = \frac{\nu}{w_j} \tilde{\ell}_{\mathrm{GPN-LOP}}(\psi;j) + C,
\end{equation}
where \( C \) is independent of \( \psi \). Thus, \( \tilde{\ell}_{\mathrm{GPN-LOP}} \) is equivalent, up to scaling and an additive constant, to the unified loss \( \tilde{\ell}_\mathrm{U} \) associated with the soft target \( \tilde e_y^{(j)} \).
\end{lemma}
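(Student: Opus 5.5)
The proof proceeds in two parts: first an algebraic rewriting of the GPN-LOP node-wise objective, then the same expansion of the unified loss used in Lemma~\ref{lemma-A:U-UCE-equiv}, now with a soft target.

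\textbf{Step 1: regroup the GPN-LOP objective by source node.} The total GPN-LOP objective is the sum of $\ell_{\mathrm{GPN-LOP}}(\psi;i)$ over training nodes $i\in\gT^{(\mathrm{tr})}$. Both sums are finite, so we exchange the order of summation over $i$ and $j$. This collects all terms involving the same feature-level Dirichlet $Q_\psi^{\mathrm{ft},(j)}$, and we define $\tilde\ell_{\mathrm{GPN-LOP}}(\psi;j)$ as the resulting coefficient-group for node $j$.

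The cross-entropy part becomes
\[
\sum_{i}\Pi^{\mathrm{PPR}}_{i,j}\,\E_{Q_\psi^{\mathrm{ft},(j)}}\bigl[-e_y^{(i)\top}\log\theta^{(j)}\bigr].
\]
By linearity of expectation this equals
\[
\E_{Q_\psi^{\mathrm{ft},(j)}}\Bigl[-\Bigl(\textstyle\sum_i\Pi^{\mathrm{PPR}}_{i,j}e_y^{(i)}\Bigr)^{\!\top}\log\theta^{(j)}\Bigr]
= w_j\,\E_{Q_\psi^{\mathrm{ft},(j)}}\bigl[-\tilde e_y^{(j)\top}\log\theta^{(j)}\bigr],
\]
using the definition of $\tilde e_y^{(j)}$. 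The entropy part becomes
\[
-\lambda\Bigl(\textstyle\sum_i\Pi^{\mathrm{PPR}}_{i,j}\Bigr)H\bigl(Q_\psi^{\mathrm{ft},(j)}\bigr) = -\lambda\, w_j\, H\bigl(Q_\psi^{\mathrm{ft},(j)}\bigr).
\]
Adding the two parts gives $\tilde\ell_{\mathrm{GPN-LOP}}(\psi;j)=w_j\tilde\ell_{\mathrm{UCE}}(\psi;j)$. The assumption $w_j>0$ is exactly what makes $\tilde e_y^{(j)}$ well defined; nodes with $w_j=0$ contribute nothing to the objective.

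\textbf{Step 2: expand the unified loss with the soft target.} We apply the expansion \eqref{eq-A:U-loss} to $\tilde\ell_{\mathrm U}(\psi;j)$, using the soft-target likelihood. Since $\log p(\tilde e_y^{(j)}\mid\theta)=\tilde e_y^{(j)\top}\log\theta+\text{const}$, this gives
\[
\tilde\ell_{\mathrm U}(\psi;j)=\KL\bigl(Q_\psi^{\mathrm{ft},(j)}\,\big\|\,Q^{\mathrm{prior}}\bigr)-\nu\,\E\bigl[\tilde e_y^{(j)\top}\log\theta^{(j)}\bigr]+C.
\]
The prior is uniform, so as in Lemma~\ref{lemma-A:U-UCE-equiv} the KL term equals $-H(Q_\psi^{\mathrm{ft},(j)})+C'$. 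Factoring out $\nu$ and setting $\lambda=\nu^{-1}$ yields $\tilde\ell_{\mathrm U}=\nu\,\tilde\ell_{\mathrm{UCE}}+C$. Combining with Step 1 gives
\[
\tilde\ell_{\mathrm U}(\psi;j)=\frac{\nu}{w_j}\,\tilde\ell_{\mathrm{GPN-LOP}}(\psi;j)+C,
\]
and $\nu/w_j>0$, so the two objectives are equivalent up to positive scaling and an additive constant.

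\textbf{Main obstacle.} The delicate point is verifying that the soft-target tempered posterior is well defined. The normalizer is
\[
\int_{\Delta_C}\prod_c \theta_c^{\nu\tilde e_{y,c}^{(j)}}\,d\theta,
\]
a Dirichlet integral with parameters $1+\nu\tilde e_{y,c}^{(j)}>0$. It is therefore finite and independent of $\psi$, which justifies absorbing it into the constant $C$. The step also requires carefully matching the index convention of $\Pi^{\mathrm{PPR}}_{i,j}$ between the aggregation formula and the loss, since the sum is over training nodes $i$ and feature nodes $j$.
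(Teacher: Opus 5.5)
Your proof is correct and follows essentially the same route as the paper. You exchange the finite sums over training nodes $i$ and feature nodes $j$, use linearity of expectation to get $\tilde\ell_{\mathrm{GPN-LOP}}(\psi;j)=w_j\tilde\ell_{\mathrm{UCE}}(\psi;j)$, and then apply the UCE equivalence (Lemma~\ref{lemma-A:U-UCE-equiv}) to the soft target; your extra check that the normalizer is a finite, $\psi$-independent Dirichlet integral is a valid detail the paper leaves implicit.
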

\begin{proof}
Starting from the training objective,
\begin{equation}
\gL(\psi) = \sum_{i \in \gT^{(\mathrm{tr})}} \sum_{j \in \gV} \Pi_{i,j}^{\mathrm{PPR}} \!\left( \E_{\theta^{(j)} \sim Q_\psi^{\mathrm{ft}, (j)}} [-e_y^{(i) \top} \log \theta^{(j)}] - \lambda H [Q_\psi^{\mathrm{ft}, (j)} ] \right),
\end{equation}
interchanging the finite sums gives
\begin{equation}
    \gL(\psi) = \sum_{j \in \gV} \tilde{\ell}_{\mathrm{GPN-LOP}}(\psi;j).
\end{equation}
For fixed \( j \), linearity of expectation yields
\begin{align}
    \tilde{\ell}_{\mathrm{GPN-LOP}}(\psi;j) &= \E_{\theta^{(j)} \sim Q_\psi^{\mathrm{ft},(j)}}
    \!\left[ - ( \sum_{i \in \gT^{(\mathrm{tr})}} \Pi_{i,j}^{\mathrm{PPR}} e_y^{(i)} )^\top \log \theta^{(j)} \right] \\ & \quad - \lambda \sum_{i \in \gT^{(\mathrm{tr})}} \Pi_{i,j}^{\mathrm{PPR}} H (Q_\psi^{\mathrm{ft},(j)}) \\ &=
    w_j \!\left( \E_{\theta^{(j)} \sim Q_\psi^{\mathrm{ft},(j)}} [ -\tilde e_y^{(j) \top} \log \theta^{(j)} ] - \lambda H (Q_\psi^{\mathrm{ft},(j)} )\right) \\ &= w_j \tilde \ell_{\mathrm{UCE}}(\psi;j).
\end{align}
Applying Lemma~\ref{lemma-A:U-UCE-equiv} to this soft-target extension,
\begin{equation}
    \tilde{\ell}_{\mathrm{U}}(\psi;j) = \nu \tilde{\ell}_{\mathrm{UCE}}(\psi;j) + C,
\end{equation}
and substituting \( \tilde{\ell}_{\mathrm{UCE}}(\psi;j) = w_j^{-1} \tilde{\ell}_{\mathrm{GPN-LOP}}(\psi;j) \) gives the result.
\end{proof}

We next characterize the unconstrained minimizer of the unified objective.
\begin{theorem}
    \label{app:thm-U-loss-opt}
    The population minimizer of the unified EDL loss \( \ell_\mathrm{U} \) under \( \Prob_\phi \) satisfies
    \begin{equation}
        \argmin_{\psi}\, \E_{\Prob_\phi(\rmX, \rmA, Y)} [ \KL ( Q_\psi^{(i)} \| Q_\nu(\cdot | y^{(i)})) ] = \argmin_{\psi}\, \E_{\Prob_\phi(\rmA,\rmX)} [
        \KL(Q_\psi^{(i)} \| Q^{\star,(i)}) ],
        \end{equation}
        where
        \begin{equation}
            Q^{\star, (i)} (\theta) = \frac{Q^{\mathrm{prior}}(\theta) \exp (\nu \E_{\Prob_\phi(y^{(i)}|\rmX, \rmA)} [\log p(y ^{(i)}|\theta^{(i)})])}{\int_{\Delta_C} Q^{\mathrm{prior}}(\theta') \exp (\nu \E_{\Prob_\phi(y^{(i)}|\rmX, \rmA)} [\log p(y^{(i)}|\theta')]) \, d\theta'}.
        \end{equation}
\end{theorem}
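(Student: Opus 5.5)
The plan is to expand the KL divergence, push the label expectation inside via the tower property, and then recognise the $\psi$-dependent part as a KL divergence to a normalised Gibbs-type density. First, starting from~\eqref{eq-A:U-loss}, I write
\begin{equation}
\ell_{\mathrm U}(\psi;i) = \E_{\theta\sim Q_\psi^{(i)}}\!\left[\log Q_\psi^{(i)}(\theta) - \log Q^{\mathrm{prior}}(\theta) - \nu \log p(y^{(i)}|\theta)\right] + \log Z_\nu(y^{(i)}),
\end{equation}
where $Z_\nu(y)=\int_{\Delta_C} Q^{\mathrm{prior}}(\theta')p(y|\theta')^\nu\,d\theta'$ is the normaliser of the tempered posterior. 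The key observation is that $Q_\psi^{(i)}$ is a function of $(\rmA,\rmX)$ and $\psi$ only, not of the label $y^{(i)}$. Hence, conditioning on $(\rmA,\rmX)$ and applying the tower property, only the likelihood term and $\log Z_\nu(y^{(i)})$ depend on $y^{(i)}$. The term $\E[\log Z_\nu(y^{(i)})]$ does not involve $\psi$ and can be discarded from the $\argmin$.

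Second, I apply Fubini/Tonelli to swap the conditional expectation over $y^{(i)}$ with the expectation over $\theta\sim Q_\psi^{(i)}$. This turns $\E_{y|\rmX,\rmA}\E_{\theta}[\log p(y|\theta)]$ into $\E_\theta[g(\theta)]$, where $g(\theta):=\E_{\Prob_\phi(y^{(i)}|\rmX,\rmA)}[\log p(y^{(i)}|\theta)]$. The inner integrand then becomes $\log Q_\psi^{(i)}(\theta) - \log\big(Q^{\mathrm{prior}}(\theta)e^{\nu g(\theta)}\big)$. I define $Z^\star=\int Q^{\mathrm{prior}}(\theta')e^{\nu g(\theta')}\,d\theta'$, so that $Q^{\star,(i)}=Q^{\mathrm{prior}}e^{\nu g}/Z^\star$. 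Adding and subtracting $\log Z^\star$ gives
\begin{equation}
\E_{y^{(i)}|\rmA,\rmX}[\ell_{\mathrm U}(\psi;i)] = \KL(Q_\psi^{(i)}\|Q^{\star,(i)}) - \log Z^\star + \E_{y^{(i)}|\rmA,\rmX}[\log Z_\nu(y^{(i)})].
\end{equation}
The last two terms depend on $(\rmA,\rmX)$ but not on $\psi$. Taking the outer expectation over $\Prob_\phi(\rmA,\rmX)$ therefore shows that the two objectives differ by a $\psi$-independent constant, so their argmin sets coincide.

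The main obstacle is technical rather than conceptual: justifying the interchange of integrals and showing that $Z^\star$ is finite and positive, so that $Q^{\star,(i)}$ is a well-defined density.
\begin{itemize}
\item \emph{Categorical likelihood.} Here $g(\theta)=\sum_c \Prob_\phi(y=c|\rmX,\rmA)\log\theta_c$, so $e^{\nu g}$ is a product of powers $\theta_c^{\nu p_c}$. Then $Q^{\star,(i)}$ is again a Dirichlet, with parameters $\alpha^{\mathrm{prior}}+\nu p$, and $Z^\star$ is finite.
\item \emph{Gaussian likelihood.} The log-likelihood is bounded on the compact simplex, so $Z^\star$ is finite.
\end{itemize}
For the Fubini step I would assume $\E_{Q_\psi}|\log p(y|\theta)|<\infty$, which holds for Dirichlet $Q_\psi$ in both cases. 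I would also assume the minimum is finite, so that subtracting the constants is legitimate.
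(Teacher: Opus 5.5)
Your proposal is correct and takes essentially the paper's approach. The paper simply defers to Theorem 5.1 of \citet{Shen-2024} with $(\rmA,\rmX)$ in place of the i.i.d.\ covariates, and that proof is the same computation you give: expand, condition on $(\rmA,\rmX)$, and complete to $\KL(Q_\psi^{(i)}\|Q^{\star,(i)})$, so the two objectives differ by a $\psi$-independent constant. Your explicit checks are additions the paper leaves implicit: that $Z^\star$ and $\E[\log Z_\nu(y^{(i)})]$ are finite, so the constant can be dropped without an $\infty-\infty$ issue, and that the Fubini swap is justified in both the categorical and Gaussian cases.
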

\begin{proof}
The proof follows analogously to the i.i.d.\ case in Theorem 5.1 of~\citet{Shen-2024}, with \( (\rmA, \rmX) \) replacing the i.i.d.\ covariates.
\end{proof}

\subsection{Model-Specific Population Optima}
\subsubsection{GPN-LOP}
\label{app:sec-LOP-GPN}
For GPN-LOP, the population-optimal feature-based Dirichlet distribution at node \( j \) is determined by the PPR-weighted conditional class probabilities \( \tilde{\eta}^{(j)} = w_j^{-1} \sum_{k \in \gT^{(\mathrm{tr})}} \Pi_{k,j}^{\mathrm{PPR}} \eta^{(k)} \), where \( w_j = \sum_{k \in \gT^{(\mathrm{tr})}} \Pi_{k,j}^{\mathrm{PPR}} \) and \( \eta^{(k)} = \E_{\Prob_\phi(y^{(k)} | \rmA,\rmX)}[e_y^{(k)}] \). Under the categorical likelihood and uniform Dirichlet prior,
Theorem~\ref{app:thm-U-loss-opt} therefore gives \( Q^{\mathrm{ft} \star,(j)} = \Dir\!\left( \theta; \alpha^{\mathrm{prior}} + \nu\tilde{\eta}^{(j)} \right) \). Since \( \sum_{c=1}^C \tilde{\eta}_c^{(j)} = 1 \), each component has total concentration \( \alpha_0^{\mathrm{prior}}+\nu \), which remains finite for fixed \(\nu\). GPN-LOP aggregates these feature-based distributions through PPR, \( Q^{\star,(i)} = \sum_{j\in\gV} \Pi_{i,j}^{\mathrm{PPR}} Q^{\mathrm{ft} \star,(j)} \). Its epistemic uncertainty decomposes as
\begin{equation}
    \EU(Q^{(i)}) = \sum_{j \in \gV} \Pi^{\mathrm{PPR}}_{i,j} \EU(Q^{\mathrm{ft}, (j)}) + \sum_{j \in \gV} \Pi^{\mathrm{PPR}}_{i,j} \KL(\bar{\theta}^{(j)} \| \bar{\theta}^{(i)}),
\end{equation}
where \(\bar{\theta}^{(i)} = \sum_{j \in \gV} \Pi^{\mathrm{PPR}}_{i,j} \bar{\theta}^{(j)} \). 

The first term is strictly positive for finite \( \nu \), since each Dirichlet component has finite concentration, while the second term is non-negative. Hence, the GPN-LOP population optimum retains strictly positive epistemic uncertainty even when the component means agree.

\subsubsection{S-BGCN-K}
\label{app:sec-S-BGCN-K}
Theorem~\ref{app:thm-U-loss-opt} characterizes the population optimum for a fixed prior and therefore does not directly apply to S-BGCN-K, whose GKDE prior depends on the observed graph and training labels. Marginalizing this data-dependent prior in the population objective yields an effective prior \( Q_n^{\mathrm{prior},(i)} \), defined, for the data-dependent GKDE prior \( Q_{G_{A_n}}^{\mathrm{prior},(i)} \), by
\begin{equation}
Q_n^{\mathrm{prior}, (i)}(\theta) \propto \exp( \E_{G_{A_n} \sim \Prob_{A_n, \phi}} [ \log Q_{G_{A_n}}^{\mathrm{prior}, (i)}(\theta) ] ).
\end{equation}
This effective prior changes with graph size \( n \). We first show that convergence of these priors implies convergence of the corresponding population optima, and then establish conditions under which the effective GKDE prior concentrates under graph growth.

\begin{lemma}[Population optima under convergent priors]
\label{lem:seq-graph-prior}
Let \( \{Q^{\mathrm{prior}, (i)}_{{n}} \}_{n\geq1}\) be a sequence of priors such that \( Q^{\mathrm{prior}, (i)}_n \Rightarrow Q^{\mathrm{prior}, (i)}_\infty \). For fixed finite \( \nu \), suppose that \( g^{(i)} \) is continuous and strictly positive on \( \Delta_C \). Then, the corresponding population optima
\begin{equation}
    Q_n^{\star, (i)}(d\theta) = \frac{g^{(i)}(\theta) Q^{\mathrm{prior}, (i)}_n (d\theta)}{\int_{\Delta_C} g^{(i)}(\vartheta) Q^{\mathrm{prior}, (i)}_n (d\vartheta)}
\end{equation}
satisfy \( Q_n^{\star,(i)} \Rightarrow Q_\infty^{\star,(i)} \), where \( Q_\infty^{\star,(i)} \) is the population optimum of Theorem~\ref{app:thm-U-loss-opt} associated with the limiting prior \( Q_\infty^{\mathrm{prior},(i)} \), namely,
\begin{equation}
    Q_\infty^{\star,(i)}(d\theta) = \frac{ g^{(i)}(\theta) Q^{\mathrm{prior}, (i)}_\infty (d\theta) }{ \int_{\Delta_C} g^{(i)}(\vartheta) Q^{\mathrm{prior}, (i)}_\infty(d\vartheta)}.
\end{equation}
Moreover, \( Q_\infty^{\star,(i)} \) is a Dirac distribution if and only if \( Q^{\mathrm{prior}, (i)}_{\infty} \) is a Dirac distribution.
\end{lemma}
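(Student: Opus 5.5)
The plan is to treat the population optimum as a tilting of the prior by the fixed weight \( g^{(i)} \) and use the portmanteau theorem on the compact simplex. First, since \( \Delta_C \) is compact and \( g^{(i)} \) is continuous and strictly positive, \( g^{(i)} \) is bounded. We write \( 0 < m \le g^{(i)} \le M < \infty \). In particular \( g^{(i)} \in C_b(\Delta_C) \), so weak convergence \( Q_n^{\mathrm{prior},(i)} \Rightarrow Q_\infty^{\mathrm{prior},(i)} \) gives convergence of the normalizers,
\begin{equation}
Z_n := \int_{\Delta_C} g^{(i)}\, dQ_n^{\mathrm{prior},(i)} \to Z_\infty := \int_{\Delta_C} g^{(i)}\, dQ_\infty^{\mathrm{prior},(i)} \ge m > 0 .
\end{equation}
In the setting of Theorem~\ref{app:thm-U-loss-opt}, the weight is \( g^{(i)}(\theta) = \exp(\nu \E[\log p(y^{(i)}|\theta)]) \). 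We take the continuity and positivity of \( g^{(i)} \) as the stated hypothesis rather than verify it for each likelihood. For the Gaussian likelihood it holds directly; for the categorical one it would need to be checked separately near the boundary.

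Second, fix any \( f \in C_b(\Delta_C) \). The product \( f g^{(i)} \) is again bounded and continuous, so \( \int f g^{(i)}\, dQ_n^{\mathrm{prior},(i)} \to \int f g^{(i)}\, dQ_\infty^{\mathrm{prior},(i)} \). Dividing by \( Z_n \to Z_\infty > 0 \) gives \( \int f\, dQ_n^{\star,(i)} \to \int f\, dQ_\infty^{\star,(i)} \), which is exactly \( Q_n^{\star,(i)} \Rightarrow Q_\infty^{\star,(i)} \). The limit is, by construction, the optimum of Theorem~\ref{app:thm-U-loss-opt} with prior \( Q_\infty^{\mathrm{prior},(i)} \), because that optimum has precisely this tilted form.

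Third, for the Dirac characterization, note that \( Q_\infty^{\star,(i)} \) and \( Q_\infty^{\mathrm{prior},(i)} \) are mutually absolutely continuous. The Radon–Nikodym derivative \( g^{(i)}/Z_\infty \) is bounded between \( m/Z_\infty > 0 \) and \( M/Z_\infty \), so the two measures have the same null sets and hence the same support. If the prior equals \( \delta_{\theta_0} \), the tilted measure is \( g^{(i)}(\theta_0)\delta_{\theta_0}/g^{(i)}(\theta_0) = \delta_{\theta_0} \). Conversely, if \( Q_\infty^{\star,(i)} = \delta_{\theta_0} \), then \( Q_\infty^{\star,(i)}(\Delta_C\setminus\{\theta_0\}) = 0 \), and equivalence of the measures forces \( Q_\infty^{\mathrm{prior},(i)}(\Delta_C\setminus\{\theta_0\}) = 0 \), so the prior is \( \delta_{\theta_0} \) as well.

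The main obstacle is interpretive rather than technical. The statement writes the optimum as a density with respect to \( d\theta \), while a Dirac limit has no Lebesgue density. The proof must therefore be phrased measure-theoretically, as tilted measures \( g\,dQ/\!\int g\,dQ \), and one must confirm that this agrees with the KL minimizer of Theorem~\ref{app:thm-U-loss-opt} even for singular priors. The Gibbs variational identity \( \KL(Q\|Q^{\mathrm{prior}}) - \int \log g\, dQ = \KL(Q\|Q^\star) - \log Z \) holds for arbitrary probability measures, so I would cite it to close that gap. Everything else reduces to boundedness of \( g^{(i)} \) on the compact simplex.
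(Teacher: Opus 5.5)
Your proof is correct and follows the paper's argument essentially step for step. The shared steps are: boundedness of $g^{(i)}$ and $f g^{(i)}$ on the compact simplex, weak convergence of numerator and denominator with a strictly positive limiting normalizer, and equality of null sets via strict positivity of $g^{(i)}$ for the Dirac characterization. Your explicit bounds $0<m\le g^{(i)}\le M$ and the Gibbs variational identity (justifying the tilted form for singular priors) are sound refinements that the paper leaves implicit.
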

\begin{proof}
For any bounded continuous \( f: \Delta_C \to \R \),
\begin{equation}
    \int_{\Delta_C} f(\theta) Q_n^{\star, (i)}(d\theta) = \frac{\int_{\Delta_C} f(\theta) g^{(i)}(\theta) Q^{\mathrm{prior},(i)}_n (d\theta) }{\int_{\Delta_C} g^{(i)}(\theta) Q^{\mathrm{prior}, (i)}_n (d\theta)}.
\end{equation}
Since \( \Delta_C \) is compact, continuity of \( g^{(i)} \) implies that \( g^{(i)} \) and \( f g^{(i)} \) are bounded and continuous. The assumed weak convergence \( Q^{\mathrm{prior}, (i)}_n \Rightarrow Q^{\mathrm{prior}, (i)}_\infty \) therefore gives
\begin{equation}
    \int f g^{(i)} dQ^{\mathrm{prior},(i)}_n \to \int f g^{(i)} dQ^{\mathrm{prior}, (i)}_\infty \quad \text{and} \quad \int g^{(i)} dQ^{\mathrm{prior},(i)}_n \to \int g^{(i)} dQ^{\mathrm{prior}, (i)}_\infty.
\end{equation}
Since \(g^{(i)}\) is strictly positive on \(\Delta_C\), the latter limit
is strictly positive, and hence \( Q_n^{\star,(i)} \Rightarrow Q_\infty^{\star,(i)} \). Moreover, strict positivity implies that \( Q_\infty^{\star,(i)} \) and \( Q^{\mathrm{prior}, (i)}_\infty \) have the same null sets, and hence one is concentrated at a single point if and only if the other is.
\end{proof}

For the Gaussian likelihood underlying \( \ell_{\mathrm{MSE}} \), we define the likelihood-derived weighting function \(  g^{(i)}(\theta) = \exp \{ \nu\, \E_{\Prob_\phi(y^{(i)} | \rmX, \rmA)} [\log p(y^{(i)} | \theta)] \} \), which is continuous and strictly positive on the compact simplex. Hence, Lemma~\ref{lem:seq-graph-prior} applies to any convergent sequence of priors. We next give sufficient conditions for a sequence of Dirichlet priors to concentrate.
\begin{lemma}[Concentration of Dirichlet priors]
\label{lem:dirichlet-concentration}
Let
\begin{equation}
    Q^{\mathrm{prior}, (i)}_n = \Dir(\theta; \beta_n^{(i)}), \qquad \beta_{0,n}^{(i)} = \sum_{c=1}^C \beta_{n,c}^{(i)}, \qquad m_n^{(i)} = \frac{\beta_n^{(i)}}{\beta_{0,n}^{(i)}}.
\end{equation}
If \( \beta_{0,n}^{(i)} \to \infty \) and \( m_n^{(i)} \to m^{(i)} \in \Delta_C \),  then \( Q^{\mathrm{prior},(i)}_n \Rightarrow \delta_{m^{(i)}} \).
\end{lemma}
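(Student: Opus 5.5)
We will show weak convergence through the first two moments and Chebyshev's inequality; compactness of \( \Delta_C \) is what makes this enough. If \( \Theta_n \sim \Dir(\theta;\beta_n^{(i)}) \), the standard Dirichlet moment formulas give
\begin{equation}
    \E[\Theta_{n,c}] = m_{n,c}^{(i)}, \qquad \Var(\Theta_{n,c}) = \frac{m_{n,c}^{(i)}(1-m_{n,c}^{(i)})}{\beta_{0,n}^{(i)}+1} \le \frac{1}{4(\beta_{0,n}^{(i)}+1)}.
\end{equation}
Summing over classes yields
\begin{equation}
    \E\|\Theta_n - m_n^{(i)}\|^2 \le \frac{1}{\beta_{0,n}^{(i)}+1},
\end{equation}
which tends to zero by the assumption \( \beta_{0,n}^{(i)} \to \infty \). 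This bound holds uniformly in the mean, so the argument does not need to track how \( m_n^{(i)} \) moves.

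Next we combine this with \( m_n^{(i)} \to m^{(i)} \). The triangle inequality gives
\begin{equation}
    \|\Theta_n - m^{(i)}\| \le \|\Theta_n - m_n^{(i)}\| + \|m_n^{(i)} - m^{(i)}\|.
\end{equation}
The second term is deterministic and vanishes. The first term tends to zero in \( L^2 \), hence in probability by Chebyshev's inequality. So \( \Theta_n \to m^{(i)} \) in probability, and convergence in probability to a constant implies convergence in distribution. This is exactly \( Q_n^{\mathrm{prior},(i)} \Rightarrow \delta_{m^{(i)}} \).

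Alternatively, the weak convergence can be checked directly against test functions. Take any bounded continuous \( f \) on \( \Delta_C \); it is uniformly continuous because \( \Delta_C \) is compact. Fix \( \varepsilon > 0 \) and choose \( \eta \) so that \( \|\theta - m^{(i)}\| < \eta \) implies \( |f(\theta) - f(m^{(i)})| < \varepsilon \). Then
\begin{equation}
    \left| \int f\, dQ_n^{\mathrm{prior},(i)} - f(m^{(i)}) \right| \le \varepsilon + 2\|f\|_\infty \, \Prob\left(\|\Theta_n - m^{(i)}\| \ge \eta\right).
\end{equation}
The probability on the right vanishes by the previous step.

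The main point of care is that the limit \( m^{(i)} \) may lie on the boundary of \( \Delta_C \), where some coordinates are zero. This is why we use only the second-moment bound, which remains valid there, and avoid density-based arguments such as pointwise convergence of Dirichlet densities. The hypothesis \( \beta_{n,c}^{(i)} > 0 \) needed for the moment formulas holds because each \( Q_n^{\mathrm{prior},(i)} \) is a genuine Dirichlet distribution.
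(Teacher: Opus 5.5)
Your proof is correct and follows the paper's argument essentially step for step: the Dirichlet mean and the variance bound $1/(4(\beta_{0,n}^{(i)}+1))$, Chebyshev to get $\Theta_n - m_n^{(i)} \to 0$ in probability, then $m_n^{(i)} \to m^{(i)}$ and the fact that convergence in probability to a constant gives weak convergence. One small presentational point: your summed bound $\E\|\Theta_n - m_n^{(i)}\|^2 \le 1/(\beta_{0,n}^{(i)}+1)$ comes from summing the exact variances, since $\sum_c m_{n,c}^{(i)}(1-m_{n,c}^{(i)}) = 1 - \|m_n^{(i)}\|^2 \le 1$; summing your per-class $1/4$ bounds would only give $C/(4(\beta_{0,n}^{(i)}+1))$, though either suffices.
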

\begin{proof}
Let \( \Theta_n^{(i)} \sim Q^{\mathrm{prior},(i)}_n \). Then, \( \E[\Theta_n^{(i)}] = m_n^{(i)} \). Moreover, since
\( m_{n,c}^{(i)} \in [0,1] \), we have
\( m_{n,c}^{(i)}(1-m_{n,c}^{(i)}) \leq 1/4 \), and thus, for each class \( c \),
\begin{equation}
    \Var(\Theta_{n,c}^{(i)}) = \frac{ m_{n,c}^{(i)} (1-m_{n,c}^{(i)})}{ \beta_{0,n}^{(i)}+1} \leq \frac{1}{4(\beta_{0,n}^{(i)}+1)} \to 0,
\end{equation}
where the convergence follows from \( \beta_{0,n}^{(i)} \to \infty \).
Therefore, \( \Theta_n^{(i)} - m_n^{(i)}
\to 0 \) in probability. Since \(m_n^{(i)}\to m^{(i)}\), it follows that \( \Theta_n^{(i)}
\to m^{(i)} \) in probability, and hence \( Q^{\mathrm{prior}, (i)}_n \Rightarrow \delta_{m^{(i)}} \).
\end{proof}

Thus, if \( \beta_{0,n}^{(i)} \to \infty \) and \(m_n^{(i)} \to m^{(i)}\), Lemmas~\ref{lem:dirichlet-concentration} and~\ref{lem:seq-graph-prior} give \( Q_n^{\mathrm{prior},(i)} \Rightarrow \delta_{m^{(i)}} \) and \( Q_n^{\star,(i)} \Rightarrow \delta_{m^{(i)}} \). Consequently,
\begin{equation}
\EU(Q_n^{\star,(i)}) \to 0 \quad \text{and} \quad \AU(Q_n^{\star,(i)}) \to H(m^{(i)}).
\end{equation}

S-BGCN-K provides a graph-dependent Dirichlet prior through GKDE, with parameters
\begin{equation}
\beta_{n,c}^{(i)} = 1 + \sum_{j \in \gT_n^{(\mathrm{tr})}} \mathbf{1}\{y^{(j)}=c\} g_\sigma(d_{ij}),
\end{equation}
where \( d_{ij} \) denotes the graph distance between nodes \( i \) and \( j \), and \( g_\sigma(d_{ij}) \) is the Gaussian kernel applied to this distance, with bandwidth \( \sigma \). Since the expectation of a Dirichlet log-density differs from the log-density of a Dirichlet distribution with expected parameters only by a \( \theta \)-independent constant, the effective population prior is
\begin{equation}
Q_n^{\mathrm{prior},(i)} =
\Dir(\theta;\beta_n^{(i)}), \qquad \beta_{n,c}^{(i)} = \E_{G_{A_n} \sim \Prob_{A_n, \phi}} [\beta_{n,c}^{(i)} ].
\end{equation}
Summing over classes gives the total concentration of the effective prior,
\begin{equation}
    \beta_{0,n}^{(i)} = C + \E_{G_{A_n} \sim \Prob_{A_n, \phi}}\!\Bigl[ \sum_{j \in \gT_n^{(\mathrm{tr})}} g_\sigma(d_{ij}) \Bigr].
\end{equation}
Thus, concentration grows with the expected amount of kernel-weighted labeled evidence available around node \( i \). In particular, for any fixed
finite \( D \),
\begin{equation}
    \beta_{0,n}^{(i)} \geq C + g_\sigma(D) \E_{G_{A_n} \sim \Prob_{A_n, \phi}}[ | \{ j \in \gT_n^{(\mathrm{tr})} : d_{ij} \leq D \}| ],
\end{equation}
where we use that the Gaussian kernel is non-increasing in distance. Hence, a sufficient condition for \( \beta_{0,n}^{(i)} \to \infty \) is the existence of a fixed \( D \) such that \( \E_{G_{A_n} \sim \Prob_{A_n, \phi}}[ | \{ j \in \gT_n^{(\mathrm{tr})} : d_{ij} \leq D \} | ] \to \infty \). Under the dense projective graph DGPs considered in our experiments, positive population normalized degree for node \( i \), together with a training-label sampling scheme that retains a positive fraction of its neighbors, yields a diverging expected number of labeled one-hop neighbors, so the condition holds with \( D=1 \). If, in addition, the normalized GKDE evidence converges, \( m_n^{(i)} \to m^{(i)} \), Lemma~\ref{lem:dirichlet-concentration} gives \( Q^{\mathrm{prior}, (i)}_n \Rightarrow \delta_{m^{(i)}} \) and Lemma~\ref{lem:seq-graph-prior} then yields \( Q_n^{\star,(i)} \Rightarrow \delta_{m^{(i)}} \). 

However, \( m^{(i)} \) is the limiting vector of normalized population-averaged
kernel-weighted class evidence induced by the GKDE construction and is not guaranteed to coincide with the Bayes-optimal predictor \( p_\phi^{\star,(i)} \), as it depends on choices such as the GKDE bandwidth and the graph-distance weighting function. Hence, despite concentration of the evidential distribution, Definition~\ref{def:consistent-predictor} need not be satisfied, and \(\AU(Q_n^{\star,(i)}) \to H(m^{(i)})\) need not equal the true aleatoric uncertainty \( H(p_\phi^{\star,(i)}) \).

The analysis above considers S-BGCN-K with Monte Carlo (MC) dropout disabled, thereby isolating the uncertainty induced by the evidential objective. When MC dropout with a fixed Bernoulli dropout rate is enabled, as in the original formulation, concentration of the evidential distribution is no longer sufficient for concentration of the resulting predictive distribution: the predictions must additionally become invariant to the dropout mask as information grows. Fixed-rate MC dropout does not in general ensure such concentration, and has been shown to retain predictive variability independently of the amount of observed data in simple settings \citep{Verdoja-2021}.

\section{Synthetic Graphons}
\label{app:synthetic-graphons}
We consider two complementary homophilic graph DGPs: a smooth exponential-decay graphon, representing continuous latent similarity, and an assortative stochastic block model (SBM) graphon, representing discrete community structure, both commonly considered in the literature~\citep{Xu-2020}. Besides providing qualitatively different graph structure, the SBM connects our construction to projective dyadic-independence models studied within the exponential random-graph model framework of~\citet{Shalizi-2013}. Both follow the graphon construction
\begin{align}
    U_i &\overset{\mathrm{iid}}{\sim} \mathrm{Unif}(0,1), \\
    A_{ij} | U_i,U_j
        &\sim \mathrm{Bernoulli} (W(U_i,U_j)),
\end{align}
where \( W:[0,1]^2 \rightarrow [0,1] \) maps pairs of latent positions to edge probabilities. For the exponential-decay graphon, this mapping depends directly on the distance between latent positions,
\begin{equation}
    W_{\mathrm{exp}}(U_i,U_j)
    = \exp (-\beta |U_i-U_j|),
\end{equation}
such that connectivity decreases smoothly with latent distance. For the SBM graphon, the latent space is instead partitioned into $K$ blocks through a mapping \( b:[0,1] \rightarrow \{1,\ldots,K\} \). Edge probabilities are then determined by the corresponding entries of an assortative block probability matrix \( P \), yielding the step-function graphon
\begin{equation}
    W_{\mathrm{SBM}}(U_i,U_j)
    = P_{b(U_i),b(U_j)}.
\end{equation}

Both DGPs define balanced seven-class node-classification problems in which labels and features depend on the latent position, with
\begin{align}
    Y_i \mid U_i &\sim P_Y(\,\cdot \mid U_i), \\
    X_i \mid U_i &\sim P_X(\,\cdot \mid U_i).
\end{align}
For the exponential-decay graphon, class probabilities vary smoothly over the latent space, with greater label ambiguity near transitions between neighboring classes. For the SBM graphon, class probabilities follow the block structure but vary smoothly within blocks, with greater ambiguity near block transitions.

Each node additionally receives 12 noisy continuous and binary features that are informative of the latent class structure without deterministically identifying the observed label. Feature informativeness is likewise allowed to vary with latent position, such that graph structure and node features provide complementary and imperfect information about the class.

Complete generator specifications and parameter settings are provided in Table~\ref{tab:graphon-parameters}. Tables~\ref{tab:graphon1-stats}--\ref{tab:graphon2-stats} report the realized structural properties of the generated graph sequences across \( 20 \) independent realizations. 

Each projective graphon sequence comprises six graph sizes, logarithmically spaced between \( 500 \) and \( 10{,}000 \) nodes. We first generate the largest graph and construct each smaller graph as the induced subgraph on its first \( n \) nodes, thereby yielding a nested sequence from the same underlying realization.

\begin{table}[t]
\centering
\scriptsize
\caption{Parameters used for the synthetic graph DGPs.}
\label{tab:graphon-parameters}
\begin{tabular}{lll}
\toprule
Component & Parameter & Value \\
\midrule

Generation
    & Graph sizes $n$
    & $500, 910, 1657, 3017, 5493, 10000$ \\
    & Independent sequences
    & $20$ \\

\midrule
Exponential graphon
    & Decay $\beta$
    & $15$ \\

\midrule
SBM graphon
    & Blocks $K$
    & $7$ \\
    & Between-block probability $p_{\mathrm{out}}$
    & $0.02$ \\
    & Within-block increment $\delta$
    & $0.83$ \\
    & Block-probability noise $\sigma_P$
    & $0.02$ \\

\midrule
Features
    & Dimension $d$
    & $12$ \\
    & Continuous-feature noise $\sigma_X$
    & $0.04$ \\
    & Binary-feature flip probability
    & $0.20$ \\

\midrule
Exponential labels
    & Gaussian width $\sigma_Y$
    & $0.03$ \\

\midrule
SBM labels
    & Center / boundary class probability
    & $0.90 / 0.65$ \\
    & Boundary width
    & $0.15$ \\

\midrule
SBM features
    & Center / boundary class probability
    & $0.80 / 0.55$ \\
    & Boundary width
    & $0.15$ \\

\bottomrule
\end{tabular}
\end{table}
\FloatBarrier

\section{Local Bootstrap}
\label{app:bootstrap}
We construct bootstrap graph replicates using a non-parametric procedure adapted from the local probability estimator of~\citet{Jeong-2025}, developed under an exchangeable graphon model and exploiting node similarity. The procedure preserves the observed node set, features and labels across replicates, while resampling the graph structure from smoothed edge probabilities.

For each node \( i \in \gV \), we first construct a set \( \gN_k(i) \) of its \( k \) most similar nodes according to a chosen node similarity metric. In the synthetic graphon experiments, where the latent coordinates \( U_i \) are known, we use Euclidean distance in latent space,
\begin{equation}
    d(i,j) = \|U_i-U_j\|_2, \qquad i \neq j.
\end{equation}
For the OOD benchmark datasets, where latent coordinates are unavailable, we instead use
the open-neighborhood Jaccard similarity suggested by~\citet{Jeong-2025},
\begin{equation}
    J(i,j) = \frac{|\gN(i) \cap \gN(j)|}
    {|\gN(i) \cup \gN(j)|}, \qquad i \neq j,
\end{equation}
where \( \gN(i) \) denotes the open graph neighborhood of node \( i \). This criterion favors nodes with similar observed connectivity profiles, so that the selected neighbors provide local substitutes for node \( i \) when estimating its edge probabilities.

The directed edge probability between nodes \( i \) and \( j \) is then estimated by averaging the observed adjacency of \( j \) over \( \gN_k(i) \):
\begin{equation}
    \hat{p}_{ij} = \frac{1}{|\gN_k(i)|}
    \sum_{u \in \gN_k(i)} \rmA_{uj}.
\end{equation}
Thus, \( \hat{p}_{ij} \) is the empirical fraction of nodes locally similar to \( i \) that are adjacent to \( j \) in the observed graph. Because this estimate is generally asymmetric, we obtain an undirected edge probability by
\begin{equation}
    \hat{p}^{\mathrm{sym}}_{ij} = \frac{1}{2} ( \hat{p}_{ij} + \hat{p}_{ji}),
\end{equation}
which preserves the asymptotic unbiasedness of the local probability estimator under the conditions of~\citet{Jeong-2025}.

Conditional on the resulting probability matrix, bootstrap graphs are generated independently by sampling each unordered node pair according to
\begin{equation}
    A^{\mathrm{boot}}_{ij,b} | \hat{p}_{ij}^\mathrm{sym} \overset{\mathrm{iid}}{\sim} \mathrm{Bernoulli} (\hat{p}^{\mathrm{sym}}_{ij}), \qquad i < j, \quad b \in \{1, \ldots, B\},
\end{equation}
and setting \( A_{ii,b}^{\mathrm{boot}} = 0 \). The complete graph resampling procedure is given in Algorithm~\ref{alg:graph-bootstrap}. Following the finite-sample specification of~\citet{Jeong-2025}, we use \( k=20 \) for both the synthetic graphons and the node classification benchmark datasets.

\begin{algorithm}[t]
\caption{Conditional local graph bootstrap}
\label{alg:graph-bootstrap}
\small
\begin{algorithmic}[1]
\Require Graph \( \gG=(A, X) \), neighborhood size \( k \), pairwise node similarity metric \( d \), number of replicates \(B\)
\Ensure Bootstrap graphs \( \{\gG^{\mathrm{boot}}_b =(A^{\mathrm{boot}}_b,X)\}_{b=1}^B \)

\For{\( i \in \gV \)}
    \State Construct \( \gN_k(i) \) as the \( k \) nearest nodes similar to \( i \) according to \( d \)
\EndFor

\For{\(i,j\in\gV\), \(i\neq j\)}
    \State \(\displaystyle \hat p_{ij} \gets \frac{1}{|\gN_k(i)|} \sum_{u \in \gN_k(i)} A_{uj}\)
\EndFor

\For{\(i<j\)}
    \State \(\displaystyle \hat p^{\mathrm{sym}}_{ij} \gets \frac{\hat p_{ij} + \hat p_{ji}}{2}\)
\EndFor

\For{\(b=1,\ldots,B\)}
    \For{\(i<j\)}
        \State \(\displaystyle A^{\mathrm{boot}}_{ij,b} \sim \mathrm{Bernoulli} \left(\hat p^{\mathrm{sym}}_{ij}\right)\) \State \(A^{\mathrm{boot}}_{ji,b} \gets A^{\mathrm{boot}}_{ij,b}\)
    \EndFor
    \State \(A^{\mathrm{boot}}_{ii,b}\gets0\)
    for all \(i \in \gV\)
    \State \(\gG^{\mathrm{boot}}_b
    \gets (A^{\mathrm{boot}}_b,X)\)
\EndFor
\State \Return \( \{ \gG^{\mathrm{boot}}_b=(A^{\mathrm{boot}}_b, X) \}_{b=1}^B \).
\end{algorithmic}
\end{algorithm}

The statistics of \( 25 \) bootstrap graphs are compared with those of the original graphs for the synthetic graphons and node classification benchmark datasets in Tables~\ref{tab:graphon1-stats}--~\ref{tab:graphon2-stats} and~\ref{tab:MLdata-stats}, respectively.

% Requires:
% \usepackage{booktabs}
% \usepackage{longtable}
% \usepackage{array}
%
% The caption remains at the document's normal caption size.
% Only the table cells are set in \tiny.

% ============================================================
% Exponential-decay graphon
% ============================================================

% ============================================================
% Exponential-decay graphon
% ============================================================

{
\setlength{\LTleft}{\fill}
\setlength{\LTright}{\fill}
\setlength{\tabcolsep}{3pt}
\renewcommand{\arraystretch}{0.95}

\begin{longtable}{
    >{\scriptsize}l
    >{\scriptsize}c
    >{\scriptsize}c
}
\caption{Original and bootstrap graph statistics for the exponential-decay graphon at increasing sizes. Values are mean \( \pm \) standard deviation across \(20\) graphon realizations, with bootstrap statistics averaged across \(25\) replicates per realization.}
\label{tab:graphon1-stats}\\
\endfirsthead
\endhead
\endfoot
\endlastfoot

% n = 500
\toprule
& \multicolumn{2}{c}{\tiny $n=500$} \\
\cmidrule(lr){2-3}
Metric & Original & Bootstrap \\
\midrule
Features & $12$ & $12$ \\
Classes & $7$ & $7$ \\
Nodes & $500$ & $500$ \\
Edges
Edges
& $(1.547\times10^{4}) \pm 212.020$
& $(1.530\times10^{4}) \pm 220.126$ \\
Average degree
& $61.882 \pm 0.848$
& $61.206 \pm 0.881$ \\
Degree std.
& $10.118 \pm 1.385$
& $10.327 \pm 1.311$ \\
Density
& $(1.240\times10^{-1}) \pm (1.700\times10^{-3})$
& $(1.227\times10^{-1}) \pm (1.765\times10^{-3})$ \\
Local clustering
& $0.404 \pm 0.005$
& $0.394 \pm 0.004$ \\
Components & $1$ & $1$ \\
Giant component & $500$ & $500$ \\
Assortativity
& $0.416 \pm 0.096$
& $0.371 \pm 0.093$ \\
Node homophily
& $0.631 \pm 0.011$
& $0.623 \pm 0.012$ \\
Edge homophily
& $0.620 \pm 0.013$
& $0.612 \pm 0.013$ \\
Transitivity
& $0.396 \pm 0.005$
& $0.386 \pm 0.005$ \\
Triangle density
& $(6.169\times10^{-3}) \pm (1.826\times10^{-4})$
& $(5.889\times10^{-3}) \pm (1.837\times10^{-4})$ \\
\bottomrule

\addlinespace[0.8em]

% n = 910
\toprule
& \multicolumn{2}{c}{\tiny $n=910$} \\
\cmidrule(lr){2-3}
Metric & Original & Bootstrap \\
\midrule
Features & $12$ & $12$ \\
Classes & $7$ & $7$ \\
Nodes & $910$ & $910$ \\
Edges
& $(5.131\times10^{4}) \pm 361.446$
& $(5.095\times10^{4}) \pm 362.437$ \\
Average degree
& $112.768 \pm 0.794$
& $111.969 \pm 0.797$ \\
Degree std.
& $16.285 \pm 1.714$
& $16.549 \pm 1.712$ \\
Density
& $(1.241\times10^{-1}) \pm (8.739\times10^{-4})$
& $(1.232\times10^{-1}) \pm (8.763\times10^{-4})$ \\
Local clustering
& $0.404 \pm 0.003$
& $0.399 \pm 0.003$ \\
Components & $1$ & $1$ \\
Giant component & $910$ & $910$ \\
Assortativity
& $0.496 \pm 0.082$
& $0.464 \pm 0.081$ \\
Node homophily
& $0.632 \pm 0.005$
& $0.628 \pm 0.005$ \\
Edge homophily
& $0.621 \pm 0.005$
& $0.617 \pm 0.005$ \\
Transitivity
& $0.396 \pm 0.003$
& $0.391 \pm 0.003$ \\
Triangle density
& $(6.170\times10^{-3}) \pm (9.183\times10^{-5})$
& $(6.013\times10^{-3}) \pm (9.348\times10^{-5})$ \\
\bottomrule

\addlinespace[0.8em]

% n = 1657
\toprule
& \multicolumn{2}{c}{\tiny $n=1657$} \\
\cmidrule(lr){2-3}
Metric & Original & Bootstrap \\
\midrule
Features & $12$ & $12$ \\
Classes & $7$ & $7$ \\
Nodes & $1657$ & $1657$ \\
Edges
Edges
& $(1.705\times10^{5}) \pm (1.041\times10^{3})$
& $(1.697\times10^{5}) \pm (1.034\times10^{3})$ \\
Average degree
& $205.798 \pm 1.256$
& $204.886 \pm 1.249$ \\
Degree std.
& $27.837 \pm 2.599$
& $28.184 \pm 2.567$ \\
Density
& $(1.243\times10^{-1}) \pm (7.585\times10^{-4})$
& $(1.237\times10^{-1}) \pm (7.540\times10^{-4})$ \\
Local clustering
& $0.404 \pm 0.002$
& $0.401 \pm 0.002$ \\
Components & $1$ & $1$ \\
Giant component & $1657$ & $1657$ \\
Assortativity
& $0.572 \pm 0.062$
& $0.548 \pm 0.062$ \\
Node homophily
& $0.631 \pm 0.004$
& $0.629 \pm 0.004$ \\
Edge homophily
& $0.620 \pm 0.004$
& $0.618 \pm 0.004$ \\
Transitivity
& $0.396 \pm 0.003$
& $0.393 \pm 0.002$ \\
Triangle density
& $(6.196\times10^{-3}) \pm (8.944\times10^{-5})$
& $(6.109\times10^{-3}) \pm (8.044\times10^{-5})$ \\
\bottomrule

\addlinespace[0.8em]

% n = 3017
\toprule
& \multicolumn{2}{c}{\tiny $n=3017$} \\
\cmidrule(lr){2-3}
Metric & Original & Bootstrap \\
\midrule
Features & $12$ & $12$ \\
Classes & $7$ & $7$ \\
Nodes & $3017$ & $3017$ \\
Edges
& $(5.661\times10^{5}) \pm (2.464\times10^{3})$
& $(5.646\times10^{5}) \pm (2.505\times10^{3})$ \\
Average degree
& $375.244 \pm 1.634$
& $374.306 \pm 1.661$ \\
Degree std.
& $48.265 \pm 2.831$
& $48.657 \pm 2.799$ \\
Density
& $(1.244\times10^{-1}) \pm (5.417\times10^{-4})$
& $(1.241\times10^{-1}) \pm (5.506\times10^{-4})$ \\
Local clustering
& $0.403 \pm 0.002$
& $0.402 \pm 0.002$ \\
Components & $1$ & $1$ \\
Giant component & $3017$ & $3017$ \\
Assortativity
& $0.609 \pm 0.043$
& $0.593 \pm 0.044$ \\
Node homophily
& $0.631 \pm 0.003$
& $0.630 \pm 0.003$ \\
Edge homophily
& $0.620 \pm 0.003$
& $0.619 \pm 0.003$ \\
Transitivity
& $0.395 \pm 0.002$
& $0.395 \pm 0.001$ \\
Triangle density
& $(6.206\times10^{-3}) \pm (5.918\times10^{-5})$
& $(6.170\times10^{-3}) \pm (6.114\times10^{-5})$ \\
\bottomrule

\addlinespace[0.8em]

% n = 5493
\toprule
& \multicolumn{2}{c}{\tiny $n=5493$} \\
\cmidrule(lr){2-3}
Metric & Original & Bootstrap \\
\midrule
Features & $12$ & $12$ \\
Classes & $7$ & $7$ \\
Nodes & $5493$ & $5493$ \\
Edges
& $(1.878\times10^{6}) \pm (6.496\times10^{3})$
& $(1.875\times10^{6}) \pm (6.503\times10^{3})$ \\
Average degree
& $683.611 \pm 2.365$
& $682.675 \pm 2.368$ \\
Degree std.
& $85.005 \pm 4.412$
& $85.439 \pm 4.400$ \\
Density
& $(1.245\times10^{-1}) \pm (4.306\times10^{-4})$
& $(1.243\times10^{-1}) \pm (4.311\times10^{-4})$ \\
Local clustering
& $0.404 \pm 0.002$
& $0.403 \pm 0.001$ \\
Components & $1$ & $1$ \\
Giant component & $5493$ & $5493$ \\
Assortativity
& $0.632 \pm 0.035$
& $0.622 \pm 0.035$ \\
Node homophily
& $0.631 \pm 0.003$
& $0.631 \pm 0.003$ \\
Edge homophily
& $0.620 \pm 0.003$
& $0.619 \pm 0.003$ \\
Transitivity
& $0.397 \pm 0.002$
& $0.395 \pm 0.001$ \\
Triangle density
& $(6.236\times10^{-3}) \pm (5.362\times10^{-5})$
& $(6.199\times10^{-3}) \pm (4.463\times10^{-5})$ \\
\bottomrule

\addlinespace[0.8em]

% n = 10000
\toprule
& \multicolumn{2}{c}{\tiny $n=10000$} \\
\cmidrule(lr){2-3}
Metric & Original & Bootstrap \\
\midrule
Features & $12$ & $12$ \\
Classes & $7$ & $7$ \\
Nodes & $10000$ & $10000$ \\
Edges
& $(6.224\times10^{6}) \pm (1.645\times10^{4})$
& $(6.220\times10^{6}) \pm (1.655\times10^{4})$ \\
Average degree
& $1244.879 \pm 3.290$
& $1243.939 \pm 3.310$ \\
Degree std.
& $152.286 \pm 6.828$
& $152.737 \pm 6.834$ \\
Density
& $(1.245\times10^{-1}) \pm (3.291\times10^{-4})$
& $(1.244\times10^{-1}) \pm (3.310\times10^{-4})$ \\
Local clustering
& $0.404 \pm 0.002$
& $0.403 \pm 0.002$ \\
Components & $1$ & $1$ \\
Giant component & $10000$ & $10000$ \\
Assortativity
& $0.652 \pm 0.031$
& $0.646 \pm 0.031$ \\
Node homophily
& $0.631 \pm 0.002$
& $0.631 \pm 0.002$ \\
Edge homophily
& $0.620 \pm 0.002$
& $0.620 \pm 0.002$ \\
Transitivity
& $0.396 \pm 0.002$
& $0.396 \pm 0.001$ \\
Triangle density
& $(6.222\times10^{-3}) \pm (4.641\times10^{-5})$
& $(6.213\times10^{-3}) \pm (3.253\times10^{-5})$ \\
\bottomrule

\end{longtable}
}

% ============================================================
% Block-assortative graphon
% ============================================================

% ============================================================
% SBM graphon
% ============================================================

{
\setlength{\LTleft}{\fill}
\setlength{\LTright}{\fill}
\setlength{\tabcolsep}{3pt}
\renewcommand{\arraystretch}{0.95}

\begin{longtable}{
    >{\scriptsize}l
    >{\scriptsize}c
    >{\scriptsize}c
}
\caption{Original and bootstrap graph statistics for the SBM graphon at increasing sizes. Values are mean \( \pm \) standard deviation across \(20\) graphon realizations, with bootstrap statistics averaged across \(25\) replicates per realization.}
\label{tab:graphon2-stats}\\
\endfirsthead
\endhead
\endfoot
\endlastfoot

% n = 500
\toprule
& \multicolumn{2}{c}{\tiny $n=500$} \\
\cmidrule(lr){2-3}
Metric & Original & Bootstrap \\
\midrule
Features & $12$ & $12$ \\
Classes & $7$ & $7$ \\
Nodes & $500$ & $500$ \\
Edges
& $(1.755\times10^{4}) \pm 161.030$
& $(1.738\times10^{4}) \pm 169.203$ \\
Average degree
& $70.203 \pm 0.644$
& $69.502 \pm 0.677$ \\
Degree std.
& $8.206 \pm 1.813$
& $8.799 \pm 1.683$ \\
Density
& $(1.407\times10^{-1}) \pm (1.291\times10^{-3})$
& $(1.393\times10^{-1}) \pm (1.356\times10^{-3})$ \\
Local clustering
& $0.643 \pm 0.005$
& $0.550 \pm 0.008$ \\
Components & $1$ & $1$ \\
Giant component & $500$ & $500$ \\
Assortativity
& $0.570 \pm 0.127$
& $0.458 \pm 0.125$ \\
Node homophily
& $0.620 \pm 0.016$
& $0.601 \pm 0.016$ \\
Edge homophily
& $0.619 \pm 0.016$
& $0.600 \pm 0.017$ \\
Transitivity
& $0.641 \pm 0.007$
& $0.549 \pm 0.010$ \\
Triangle density
& $(1.270\times10^{-2}) \pm (2.959\times10^{-4})$
& $(1.070\times10^{-2}) \pm (3.086\times10^{-4})$ \\
\bottomrule

\addlinespace[0.8em]

% n = 910
\toprule
& \multicolumn{2}{c}{\tiny $n=910$} \\
\cmidrule(lr){2-3}
Metric & Original & Bootstrap \\
\midrule
Features & $12$ & $12$ \\
Classes & $7$ & $7$ \\
Nodes & $910$ & $910$ \\
Edges
& $(5.804\times10^{4}) \pm 237.804$
& $(5.767\times10^{4}) \pm 240.446$ \\
Average degree
& $127.554 \pm 0.523$
& $126.742 \pm 0.528$ \\
Degree std.
& $11.021 \pm 2.066$
& $11.708 \pm 1.985$ \\
Density
& $(1.403\times10^{-1}) \pm (5.750\times10^{-4})$
& $(1.394\times10^{-1}) \pm (5.814\times10^{-4})$ \\
Local clustering
& $0.644 \pm 0.002$
& $0.589 \pm 0.006$ \\
Components & $1$ & $1$ \\
Giant component & $910$ & $910$ \\
Assortativity
& $0.576 \pm 0.116$
& $0.501 \pm 0.115$ \\
Node homophily
& $0.612 \pm 0.011$
& $0.603 \pm 0.011$ \\
Edge homophily
& $0.612 \pm 0.010$
& $0.603 \pm 0.011$ \\
Transitivity
& $0.641 \pm 0.004$
& $0.586 \pm 0.007$ \\
Triangle density
& $(1.262\times10^{-2}) \pm (1.232\times10^{-4})$
& $(1.142\times10^{-2}) \pm (1.594\times10^{-4})$ \\
\bottomrule

\addlinespace[0.8em]

% n = 1657
\toprule
& \multicolumn{2}{c}{\tiny $n=1657$} \\
\cmidrule(lr){2-3}
Metric & Original & Bootstrap \\
\midrule
Features & $12$ & $12$ \\
Classes & $7$ & $7$ \\
Nodes & $1657$ & $1657$ \\
Edges
& $(1.926\times10^{5}) \pm 555.421$
& $(1.919\times10^{5}) \pm 563.833$ \\
Average degree
& $232.496 \pm 0.670$
& $231.675 \pm 0.681$ \\
Degree std.
& $15.500 \pm 3.951$
& $16.232 \pm 3.744$ \\
Density
& $(1.404\times10^{-1}) \pm (4.048\times10^{-4})$
& $(1.399\times10^{-1}) \pm (4.110\times10^{-4})$ \\
Local clustering
& $0.644 \pm 0.002$
& $0.613 \pm 0.004$ \\
Components & $1$ & $1$ \\
Giant component & $1657$ & $1657$ \\
Assortativity
& $0.575 \pm 0.138$
& $0.520 \pm 0.137$ \\
Node homophily
& $0.607 \pm 0.008$
& $0.603 \pm 0.007$ \\
Edge homophily
& $0.607 \pm 0.008$
& $0.602 \pm 0.007$ \\
Transitivity
& $0.642 \pm 0.003$
& $0.612 \pm 0.004$ \\
Triangle density
& $(1.266\times10^{-2}) \pm (1.082\times10^{-4})$
& $(1.200\times10^{-2}) \pm (1.083\times10^{-4})$ \\
\bottomrule

\addlinespace[0.8em]

% n = 3017
\toprule
& \multicolumn{2}{c}{\tiny $n=3017$} \\
\cmidrule(lr){2-3}
Metric & Original & Bootstrap \\
\midrule
Features & $12$ & $12$ \\
Classes & $7$ & $7$ \\
Nodes & $3017$ & $3017$ \\
Edges
& $(6.388\times10^{5}) \pm 915.115$
& $(6.375\times10^{5}) \pm 954.026$ \\
Average degree
& $423.445 \pm 0.607$
& $422.636 \pm 0.632$ \\
Degree std.
& $23.506 \pm 4.586$
& $24.242 \pm 4.521$ \\
Density
& $(1.404\times10^{-1}) \pm (2.011\times10^{-4})$
& $(1.401\times10^{-1}) \pm (2.097\times10^{-4})$ \\
Local clustering
& $0.644 \pm 0.003$
& $0.627 \pm 0.003$ \\
Components & $1$ & $1$ \\
Giant component & $3017$ & $3017$ \\
Assortativity
& $0.638 \pm 0.079$
& $0.594 \pm 0.086$ \\
Node homophily
& $0.612 \pm 0.006$
& $0.610 \pm 0.006$ \\
Edge homophily
& $0.611 \pm 0.006$
& $0.609 \pm 0.006$ \\
Transitivity
& $0.642 \pm 0.003$
& $0.626 \pm 0.003$ \\
Triangle density
& $(1.266\times10^{-2}) \pm (7.382\times10^{-5})$
& $(1.231\times10^{-2}) \pm (7.212\times10^{-5})$ \\
\bottomrule

\addlinespace[0.8em]

% n = 5493
\toprule
& \multicolumn{2}{c}{\tiny $n=5493$} \\
\cmidrule(lr){2-3}
Metric & Original & Bootstrap \\
\midrule
Features & $12$ & $12$ \\
Classes & $7$ & $7$ \\
Nodes & $5493$ & $5493$ \\
Edges
& $(2.118\times10^{6}) \pm (2.087\times10^{3})$
& $(2.116\times10^{6}) \pm (2.135\times10^{3})$ \\
Average degree
& $771.060 \pm 0.760$
& $770.270 \pm 0.777$ \\
Degree std.
& $37.061 \pm 6.232$
& $37.786 \pm 6.088$ \\
Density
& $(1.404\times10^{-1}) \pm (1.384\times10^{-4})$
& $(1.403\times10^{-1}) \pm (1.415\times10^{-4})$ \\
Local clustering
& $0.644 \pm 0.002$
& $0.634 \pm 0.002$ \\
Components & $1$ & $1$ \\
Giant component & $5493$ & $5493$ \\
Assortativity
& $0.690 \pm 0.052$
& $0.658 \pm 0.059$ \\
Node homophily
& $0.616 \pm 0.004$
& $0.615 \pm 0.004$ \\
Edge homophily
& $0.616 \pm 0.004$
& $0.615 \pm 0.004$ \\
Transitivity
& $0.642 \pm 0.003$
& $0.633 \pm 0.002$ \\
Triangle density
& $(1.268\times10^{-2}) \pm (5.112\times10^{-5})$
& $(1.247\times10^{-2}) \pm (3.925\times10^{-5})$ \\
\bottomrule

\addlinespace[0.8em]

% n = 10000
\toprule
& \multicolumn{2}{c}{\tiny $n=10000$} \\
\cmidrule(lr){2-3}
Metric & Original & Bootstrap \\
\midrule
Features & $12$ & $12$ \\
Classes & $7$ & $7$ \\
Nodes & $10000$ & $10000$ \\
Edges
& $(7.019\times10^{6}) \pm (3.921\times10^{3})$
& $(7.015\times10^{6}) \pm (3.931\times10^{3})$ \\
Average degree
& $1403.808 \pm 0.784$
& $1403.021 \pm 0.786$ \\
Degree std.
& $59.618 \pm 7.229$
& $60.364 \pm 7.116$ \\
Density
& $(1.404\times10^{-1}) \pm (7.843\times10^{-5})$
& $(1.403\times10^{-1}) \pm (7.863\times10^{-5})$ \\
Local clustering
& $0.644 \pm 0.003$
& $0.638 \pm 0.001$ \\
Components & $1$ & $1$ \\
Giant component & $10000$ & $10000$ \\
Assortativity
& $0.725 \pm 0.027$
& $0.703 \pm 0.029$ \\
Node homophily
& $0.618 \pm 0.004$
& $0.617 \pm 0.004$ \\
Edge homophily
& $0.617 \pm 0.004$
& $0.617 \pm 0.004$ \\
Transitivity
& $0.641 \pm 0.002$
& $0.637 \pm 0.001$ \\
Triangle density
& $(1.266\times10^{-2}) \pm (3.880\times10^{-5})$
& $(1.257\times10^{-2}) \pm (2.195\times10^{-5})$ \\
\bottomrule

\end{longtable}
}
{
\setlength{\LTleft}{\fill}
\setlength{\LTright}{\fill}
\setlength{\tabcolsep}{2pt}
\renewcommand{\arraystretch}{0.95}

\begin{longtable}{
    >{\scriptsize}l
    >{\scriptsize}c
    >{\scriptsize}c
    >{\scriptsize}c
    >{\scriptsize}c
}
\caption{Original and bootstrap graph statistics for OOD datasets. Values are bootstrap means \(\pm\) standard deviations across \(25\) bootstrap replicates.}
\label{tab:MLdata-stats}\\
\endfirsthead
\endhead
\endfoot
\endlastfoot

% ============================================================
% CoraML / PubMed
% ============================================================

\toprule
&
\multicolumn{2}{c}{\scriptsize CoraML}
&
\multicolumn{2}{c}{\scriptsize PubMed} \\
\cmidrule(lr){2-3}\cmidrule(lr){4-5}
Metric & Original & Bootstrap & Original & Bootstrap \\
\midrule
Features
& $2879$ & $2879$
& $500$ & $500$ \\
Classes
& $7$ & $7$
& $3$ & $3$ \\
Nodes
& $2995$ & $2995$
& $19717$ & $19717$ \\
Edges
& $8158$ & $9802.200 \pm 82.619$
& $44324$ & $60935.400 \pm 243.491$ \\
Average degree
& $5.450$ & $6.546 \pm 0.055$
& $4.500$ & $6.181 \pm 0.025$ \\
Degree std.
& $8.240$ & $7.458 \pm 0.132$
& $7.430$ & $8.509 \pm 0.049$ \\
Density
& $1.820\times10^{-3}$
& $(2.186\times10^{-3}) \pm (1.843\times10^{-5})$
& $2.280\times10^{-4}$
& $(3.135\times10^{-4}) \pm (1.253\times10^{-6})$ \\
Local clustering
& $0.285$ & $0.062 \pm 0.004$
& $0.061$ & $0.027 \pm 0.003$ \\
Components
& $61$ & $66.360 \pm 6.557$
& $1$ & $1278.800 \pm 37.791$ \\
Giant component
& $2810$ & $2928.680 \pm 6.638$
& $19717$ & $18415.600 \pm 38.176$ \\
Assortativity
& $-0.072$ & $-0.062 \pm 0.004$
& $-0.044$ & $-0.038 \pm 0.005$ \\
Node homophily
& $0.810$ & $0.560 \pm 0.005$
& $0.792$ & $0.626 \pm 0.002$ \\
Edge homophily
& $0.789$ & $0.595 \pm 0.005$
& $0.802$ & $0.696 \pm 0.002$ \\
Transitivity
& $0.116$ & $0.046 \pm 0.002$
& $0.054$ & $0.025 \pm 0.001$ \\
Triangle density
& $1.190\times10^{-6}$
& $(4.712\times10^{-7}) \pm (1.986\times10^{-8})$
& $9.840\times10^{-9}$
& $(6.590\times10^{-9}) \pm (2.318\times10^{-10})$ \\
\bottomrule

\addlinespace[0.8em]

% ============================================================
% CiteSeer / AmazonComputers
% ============================================================

\toprule
&
\multicolumn{2}{c}{\scriptsize CiteSeer}
&
\multicolumn{2}{c}{\scriptsize AmazonComputers} \\
\cmidrule(lr){2-3}\cmidrule(lr){4-5}
Metric & Original & Bootstrap & Original & Bootstrap \\
\midrule
Features
& $3703$ & $3703$
& $767$ & $767$ \\
Classes
& $6$ & $6$
& $10$ & $10$ \\
Nodes
& $3327$ & $3327$
& $13752$ & $13752$ \\
Edges
& $4552$ & $5887.080 \pm 97.053$
& $245861$ & $268828.640 \pm 434.954$ \\
Average degree
& $2.740$ & $3.539 \pm 0.058$
& $35.800$ & $39.097 \pm 0.063$ \\
Degree std.
& $3.380$ & $4.133 \pm 0.092$
& $70.300$ & $66.104 \pm 0.153$ \\
Density
& $8.230\times10^{-4}$
& $(1.064\times10^{-3}) \pm (1.754\times10^{-5})$
& $2.600\times10^{-3}$
& $(2.843\times10^{-3}) \pm (4.600\times10^{-6})$ \\
Local clustering
& $0.150$ & $0.019 \pm 0.002$
& $0.362$ & $0.190 \pm 0.003$ \\
Components
& $438$ & $362.360 \pm 24.088$
& $314$ & $9.640 \pm 3.187$ \\
Giant component
& $2120$ & $2931.480 \pm 28.849$
& $13381$ & $13743.360 \pm 3.187$ \\
Assortativity
& $0.048$ & $0.041 \pm 0.013$
& $-0.057$ & $-0.052 \pm 0.001$ \\
Node homophily
& $0.706$ & $0.333 \pm 0.005$
& $0.785$ & $0.733 \pm 0.001$ \\
Edge homophily
& $0.736$ & $0.447 \pm 0.005$
& $0.777$ & $0.746 \pm 0.001$ \\
Transitivity
& $0.131$ & $0.036 \pm 0.002$
& $0.108$ & $0.090 \pm 0.001$ \\
Triangle density
& $1.910\times10^{-7}$
& $(8.483\times10^{-8}) \pm (7.211\times10^{-9})$
& $3.550\times10^{-6}$
& $(2.790\times10^{-6}) \pm (3.004\times10^{-8})$ \\
\bottomrule

\addlinespace[0.8em]

% ============================================================
% AmazonPhotos / CoauthorCS
% ============================================================

\toprule
&
\multicolumn{2}{c}{\scriptsize AmazonPhotos}
&
\multicolumn{2}{c}{\scriptsize CoauthorCS} \\
\cmidrule(lr){2-3}\cmidrule(lr){4-5}
Metric & Original & Bootstrap & Original & Bootstrap \\
\midrule
Features
& $745$ & $745$
& $6805$ & $6805$ \\
Classes
& $8$ & $8$
& $15$ & $15$ \\
Nodes
& $7650$ & $7650$
& $18333$ & $18333$ \\
Edges
& $119081$ & $133971.160 \pm 265.568$
& $81894$ & $91787.760 \pm 266.237$ \\
Average degree
& $31.100$ & $35.025 \pm 0.069$
& $8.930$ & $10.013 \pm 0.029$ \\
Degree std.
& $47.300$ & $45.657 \pm 0.125$
& $9.110$ & $8.149 \pm 0.043$ \\
Density
& $4.070\times10^{-3}$
& $(4.579\times10^{-3}) \pm (9.077\times10^{-6})$
& $4.870\times10^{-4}$
& $(5.462\times10^{-4}) \pm (1.584\times10^{-6})$ \\
Local clustering
& $0.406$ & $0.254 \pm 0.003$
& $0.356$ & $0.056 \pm 0.002$ \\
Components
& $136$ & $6.400 \pm 2.273$
& $1$ & $55.480 \pm 7.434$ \\
Giant component
& $7487$ & $7644.600 \pm 2.273$
& $18333$ & $18278.240 \pm 7.683$ \\
Assortativity
& $-0.045$ & $-0.031 \pm 0.001$
& $0.113$ & $0.088 \pm 0.004$ \\
Node homophily
& $0.836$ & $0.782 \pm 0.001$
& $0.832$ & $0.638 \pm 0.002$ \\
Edge homophily
& $0.827$ & $0.794 \pm 0.001$
& $0.808$ & $0.665 \pm 0.001$ \\
Transitivity
& $0.178$ & $0.156 \pm 0.001$
& $0.185$ & $0.062 \pm 0.001$ \\
Triangle density
& $9.650\times10^{-6}$
& $(8.917\times10^{-6}) \pm (8.917\times10^{-8})$
& $8.470\times10^{-8}$
& $(2.901\times10^{-8}) \pm (5.275\times10^{-10})$ \\
\bottomrule

\addlinespace[0.8em]

% ============================================================
% CoauthorPhysics / OGBN-ArXiv
% ============================================================

\toprule
&
\multicolumn{2}{c}{\scriptsize CoauthorPhysics}
&
\multicolumn{2}{c}{\scriptsize OGBN-ArXiv} \\
\cmidrule(lr){2-3}\cmidrule(lr){4-5}
Metric & Original & Bootstrap & Original & Bootstrap \\
\midrule
Features
& $8415$ & $8415$
& $128$ & $128$ \\
Classes
& $5$ & $5$
& $40$ & $40$ \\
Nodes
& $34493$ & $34493$
& $169343$ & $169343$ \\
Edges
& $247962$ & $266575.000 \pm 462.087$
& $1.158\times10^{6}$ & $(1.902\times10^{6}) \pm (1.061\times10^{3})$ \\
Average degree
& $14.400$ & $15.457 \pm 0.027$
& $13.700$ & $22.469 \pm 0.013$ \\
Degree std.
& $15.600$ & $13.170 \pm 0.042$
& $68.600$ & $78.516 \pm 0.117$ \\
Density
& $4.170\times10^{-4}$
& $(4.481\times10^{-4}) \pm (7.768\times10^{-7})$
& $8.070\times10^{-5}$
& $(1.327\times10^{-4}) \pm (7.403\times10^{-8})$ \\
Local clustering
& $0.373$ & $0.076 \pm 0.001$
& $0.235$ & $0.060 \pm 0.002$ \\
Components
& $1$ & $28.000 \pm 4.933$
& $1$ & $1123.560 \pm 28.833$ \\
Giant component
& $34493$ & $34465.960 \pm 4.945$
& $169343$ & $168214.880 \pm 29.359$ \\
Assortativity
& $0.201$ & $0.213 \pm 0.003$
& $-0.043$ & $-0.056 \pm 0.000$ \\
Node homophily
& $0.915$ & $0.811 \pm 0.001$
& $0.635$ & $0.429 \pm 0.000$ \\
Edge homophily
& $0.931$ & $0.862 \pm 0.001$
& $0.654$ & $0.435 \pm 0.000$ \\
Transitivity
& $0.189$ & $0.073 \pm 0.001$
& $0.016$ & $0.008 \pm 0.000$ \\
Triangle density
& $6.900\times10^{-8}$
& $(2.421\times10^{-8}) \pm (3.094\times10^{-10})$
& $2.740\times10^{-9}$
& $(1.893\times10^{-9}) \pm (5.433\times10^{-11})$ \\
\bottomrule

\end{longtable}
}
\FloatBarrier

\section{Model Architectures and Parameters}
\label{app:model-params}

\subsection{Graph EDL Models}
For the graph EDL baselines, we consider S-BGCN-K and S-BGCN-T-K~\citep{Zhao-2020}, GPN~\citep{Stadler-2021}, GPN-GD\textsubscript{EVI} and GPN-GD\textsubscript{LAT}~\citep{Hart-2023}, GPN-LOP~\citep{Damke-2024a}, and CUQ-GCN, CUQ-GAT, and CUQ-APPNP~\citep{Damke-2024b}. Unless otherwise specified, we use the same hyperparameters across the graphon information-growth experiments, while for the OOD benchmarks we use the dataset-specific configurations suggested in the original implementations.

\textbf{GPN, GPN-GD, GPN-LOP, CUQ-GNN.} GPN, all GPN-GD variants, GPN-LOP, and all CUQ-GNN variants use a learning rate of \(0.01\) across all datasets and \(10\) radial-flow layers for density estimation. For GPN, all GPN-GD variants, GPN-LOP, and CUQ-APPNP, APPNP propagation uses \(K=10\). CUQ-GCN, CUQ-GAT, and CUQ-APPNP differ only in the graph convolution used before density estimation; CUQ-GAT uses one attention head per layer, negative slope \(0.2\), and no attention-coefficient dropout. GPN-GD uses the same hyperparameters as GPN, and additionally tunes the graph distance regularization weight and activation function, using the same hyperparameters for both variants. Graphon and OOD dataset-specific hyperparameters are reported in Tables~\ref{tab:edl-gpn-hparams} and~\ref{tab:edl-gpn-gd-hparams}.

\textbf{S-BGCN.} We evaluate S-BGCN-K and S-BGCN-T-K. We use a learning rate of \(0.01\) across datasets. S-BGCN-K uses \(\lambda_1\) to weight the KL regularization term, while S-BGCN-T-K additionally uses \(\lambda_2\) to weight the teacher regularization term. The graphon and OOD dataset-specific hyperparameters are reported in Table~\ref{tab:sbgcn-hparams}. Uncertainty estimates are computed using \(100\) MC dropout samples.

\begin{table}[t]
\centering
\scriptsize
\caption{Hyperparameters for GPN, GPN-LOP, and CUQ-GNN.}
\label{tab:edl-gpn-hparams}
\setlength{\tabcolsep}{4pt}
\begin{tabular}{lccccccc}
\toprule
Dataset & Hidden & Latent & Layers & Dropout & APPNP $\alpha$ & Entropy reg. & Weight decay \\
\midrule
Graphons          & 64  & 16 & 2 & 0.50 & 0.10 & $10^{-3}$ & $10^{-3}$ \\
\midrule
CoraML            & 64  & 16 & 2 & 0.50 & 0.10 & $10^{-3}$ & $10^{-3}$ \\
CiteSeer          & 64  & 16 & 2 & 0.50 & 0.10 & $10^{-3}$ & $10^{-3}$ \\
PubMed            & 64  & 16 & 2 & 0.50 & 0.10 & $10^{-3}$ & $10^{-3}$ \\
AmazonComputers   & 64  & 10 & 2 & 0.50 & 0.20 & $10^{-5}$ & $5\times10^{-4}$ \\
AmazonPhotos      & 64  & 10 & 2 & 0.50 & 0.20 & $10^{-5}$ & $5\times10^{-4}$ \\
CoauthorCS        & 64  & 16 & 2 & 0.50 & 0.10 & $10^{-3}$ & $10^{-3}$ \\
CoauthorPhysics   & 64  & 16 & 2 & 0.50 & 0.10 & $10^{-3}$ & $10^{-3}$ \\
OGBN-ArXiv        & 256 & 16 & 3 & 0.25 & 0.20 & $10^{-3}$ & $0$ \\
\bottomrule
\end{tabular}
\end{table}

\begin{table}[t]
\centering
\scriptsize
\caption{Hyperparameters for GPN-GD.}
\label{tab:edl-gpn-gd-hparams}
\begin{tabular}{lccc}
\toprule
Dataset
& Entropy reg.
& Distance reg.
& Activation \\
\midrule
Graphons          & $0$       & $10^{-4}$   & GELU \\
\midrule
CoraML          & $0$       & $10^{-4}$   & GELU \\
CiteSeer        & $10^{-4}$ & $10^{-9.5}$ & LogSigmoid \\
PubMed          & $10^{-5}$ & $10^{-4}$   & ReLU \\
AmazonComputers & $10^{-5}$ & $10^{-4}$   & ReLU \\
AmazonPhotos    & $10^{-5}$ & $10^{-11}$  & ReLU \\
CoauthorCS      & $0$       & $10^{-6}$   & ReLU \\
CoauthorPhysics & $10^{-4}$ & $10^{-4.5}$ & LogSigmoid \\
OGBN-ArXiv      & $10^{-5}$ & $10^{-8}$   & ReLU \\
\bottomrule
\end{tabular}
\end{table}

\begin{table}[t]
\centering
\scriptsize
\caption{Hyperparameters for S-BGCN-K and S-BGCN-T-K.}
\label{tab:sbgcn-hparams}
\setlength{\tabcolsep}{4pt}
\begin{tabular}{lccccc}
\toprule
Dataset & Hidden & Dropout & $L_2$ reg. & $\lambda_1$ & $\lambda_2$ \\
\midrule
Graphons         & 16 & 0.50 & $5\times10^{-4}$ & $10^{-3}$ & $1$ \\
\midrule
CoraML           & 16 & 0.50 & $5\times10^{-4}$ & $10^{-3}$ & $1$ \\
CiteSeer         & 16 & 0.50 & $5\times10^{-4}$ & $10^{-3}$ & $1$ \\
PubMed           & 16 & 0.50 & $5\times10^{-4}$ & $10^{-3}$ & $1$ \\
AmazonComputers  & 64 & 0.20 & $10^{-4}$        & $10^{-3}$ & $1$ \\
AmazonPhotos     & 64 & 0.20 & $10^{-4}$        & $10^{-3}$ & $1$ \\
CoauthorCS       & 64 & 0.10 & $10^{-3}$        & $10^{-3}$ & $1$ \\
CoauthorPhysics  & 64 & 0.10 & $10^{-3}$        & $10^{-3}$ & $1$ \\
OGBN-ArXiv       & 64 & 0.50 & $10^{-4}$        & $10^{-4}$ & $1$ \\
\bottomrule
\end{tabular}
\end{table}

\subsection{G-Ens and GB-Ens}
Unless otherwise specified, G-Ens and GB-Ens use the same hyperparameters for the graphons, while for the OOD benchmarks we perform the same dataset-specific hyperparameter search, selecting model and training hyperparameters by validation accuracy; GB-Ens additionally applies graph bootstrapping.

\textbf{APPNP backbone.}
For the graphon experiments, we use an APPNP architecture~\citep{Gasteiger-2018} consisting of a two-layer MLP with hidden dimension \(64\), dropout \(0.3\), followed by APPNP propagation with \(K=10\) iterations and teleport probability \(\alpha=0.1\). We use a learning rate of \(0.01\) and weight decay \(5\times10^{-4}\). For the OOD datasets, we fix the MLP to two layers and \(K=10\), and search over hidden dimensions in \(\{64,128,256\}\); dropout rates in \(\{0.3,0.5,0.6\}\); teleport probabilities in \(\{0.05,0.1,0.2\}\); learning rates in \(\{0.005,0.01,0.02\}\); and weight decay in \(\{10^{-5},5\times10^{-4},5\times10^{-3}\}\).

\textbf{GAT backbone.}
For the graphon experiments, we use a two-layer GAT~\citep{Velickovic-2018} with total hidden dimension \( 64 \), \( 8 \) attention heads, feature and attention-coefficient dropout rates of \( 0.3 \) and \( 0.2 \), respectively, attention negative slope of \( 0.2 \), learning rate of \( 0.005 \), and weight decay of \( 5\times10^{-4} \). For the OOD datasets, we search over \( 2 \) or \( 3 \) GAT layers; total hidden dimensions in \( \{ 64,128,256 \} \); \( 4 \) or \( 8 \) attention heads; feature dropout rates in \( \{ 0.2,0.3,0.5,0.6 \} \); attention-coefficient dropout rates in \( \{ 0.3,0.5,0.6 \} \); learning rates in \( \{ 0.005,0.01 \} \); and weight decay in \( \{10^{-4},10^{-3}\} \). The attention negative slope is fixed to \( 0.2 \).

\textbf{GCN backbone.}
For the graphon experiments, we use a two-layer GCN~\citep{Kipf-2017} with hidden dimension \(64\), dropout of \(0.3\), learning rate of \(0.01\), and weight decay of \(5 \times 10^{-4}\). For the OOD datasets, we search over \( 2 \) or \( 3 \) GCN layers; hidden dimensions in \( \{64,128,256 \} \); dropout rates in \( \{0.2, 0.4 ,0.5 \} \); learning rates in \( \{0.01,0.02,0.03 \} \); and weight decay in \(\{5 \times10^{-5},5\times10^{-4}, 5\times 10^{-3}\}\).

\section{Additional Experimental Results}
\label{app:sec-exp-proj}

\textbf{RQ2: Graph EDL models are inconsistent epistemic predictors.}

The SBM results in Figures~\ref{fig:EU-bench-graphon2} and~\ref{fig:AU-bench-graphon2} reproduce the patterns observed for the exponential-decay graphon: EDL methods fail to exhibit consistent EU concentration, while AU remains sensitive to the regularization parameter \(\lambda\). Similar differences in EU magnitude across methods persist on real-world datasets, as shown in Table~\ref{tab:EU-EDL-bench}. For completeness, Table~\ref{tab:EDL-acc-graphon} reports predictive accuracy averaged over the two graphon sequences; GPN variants share a common parameterization, as in their original formulations, resulting in lower accuracy for some variants, particularly GPN-GD\textsubscript{EVI}.

\begin{figure}[t]
    \centering

    \begin{subfigure}[t]{0.49\linewidth}
        \centering
        \includegraphics[width=\linewidth]{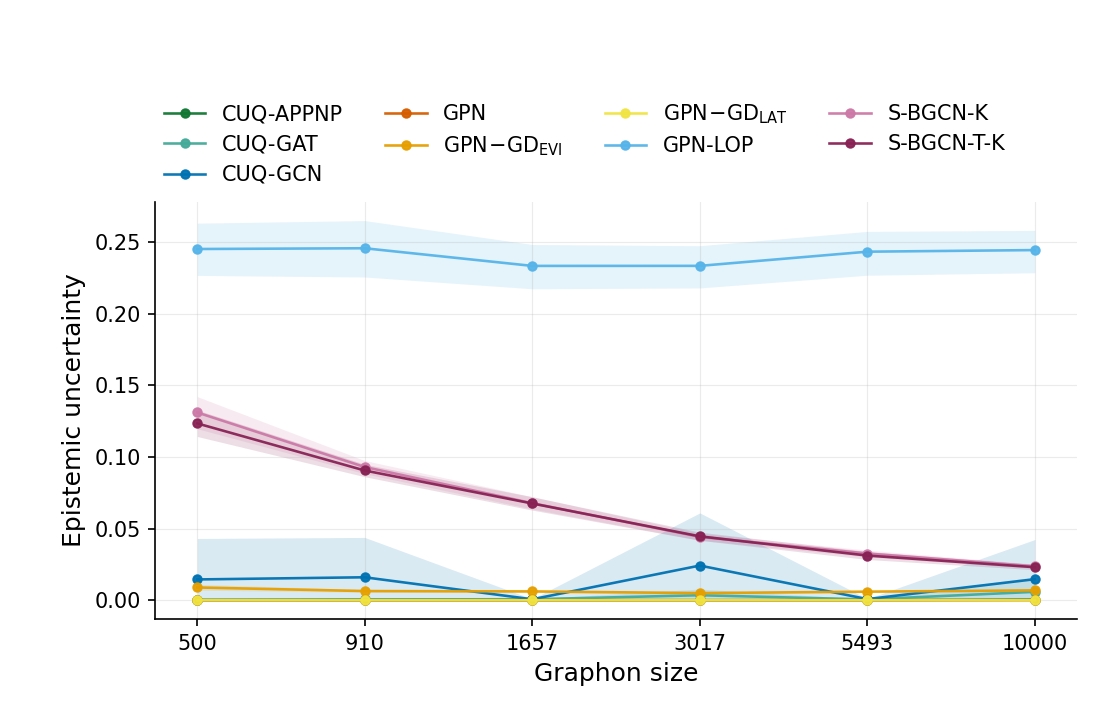}
        \caption{Epistemic uncertainty under information growth.}
        \label{fig:EU-bench-graphon2}
    \end{subfigure}
    \hfill
    \begin{subfigure}[t]{0.49\linewidth}
        \centering
        \includegraphics[width=\linewidth]{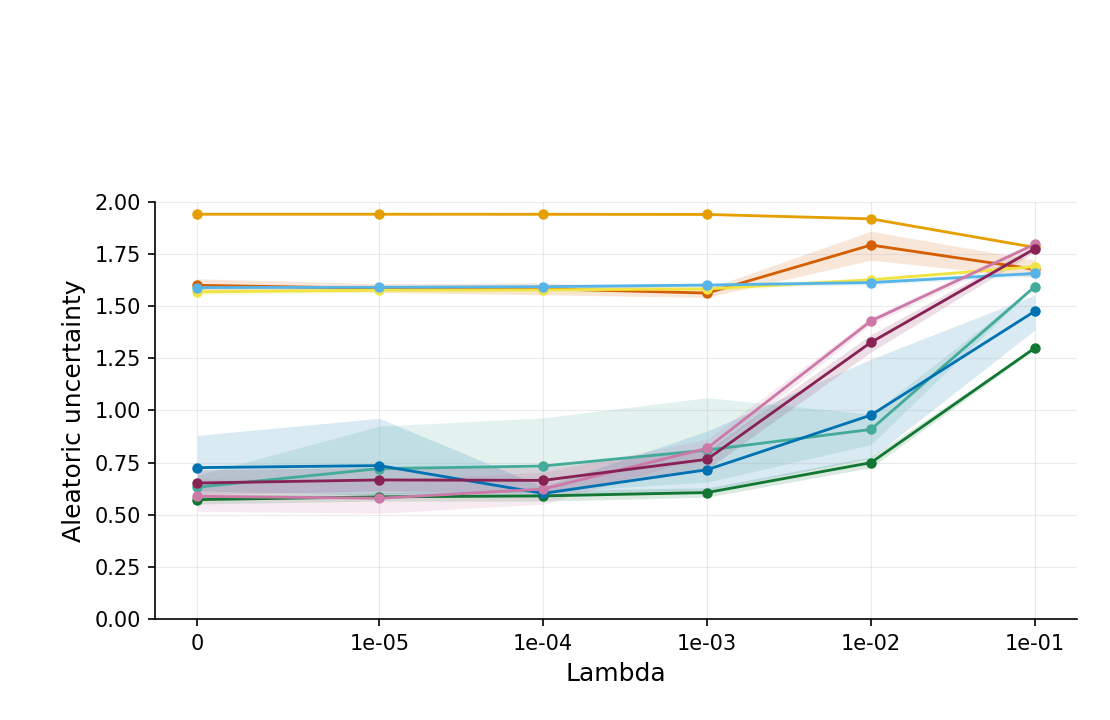}
        \caption{Aleatoric uncertainty across \(\lambda\) at \(N=10{,}000\).}
        \label{fig:AU-bench-graphon2}
    \end{subfigure}

    \caption{Uncertainty estimates of EDL models on the SBM.}
    \label{fig:U-bench-graphons}
\end{figure}

\begin{table}[t]
\centering
\caption{Mean epistemic uncertainty estimated by EDL methods on OOD datasets.}
\label{tab:EU-EDL-bench}

\scriptsize
\setlength{\tabcolsep}{3pt}
\renewcommand{\arraystretch}{0.95}

% ------------------------------------------------
% Block 1
% ------------------------------------------------
\begin{tabular}{lrrrrr}
\toprule
Method
& CoraML
& CiteSeer
& PubMed
& AmazonPhotos
& AmazonComputers \\
\midrule

CUQ-APPNP
& \(4.42 \times 10^{-4}\)
& \(6.64 \times 10^{-4}\)
& \(4.80 \times 10^{-5}\)
& \(8.00 \times 10^{-6}\)
& \(4.00 \times 10^{-6}\) \\

CUQ-GAT
& \(1.87 \times 10^{-4}\)
& \(4.07 \times 10^{-4}\)
& \(2.60 \times 10^{-5}\)
& \(4.20 \times 10^{-5}\)
& \(1.66 \times 10^{-4}\) \\

CUQ-GCN
& \(7.97 \times 10^{-4}\)
& \(7.90 \times 10^{-4}\)
& \(1.79 \times 10^{-4}\)
& \(2.20 \times 10^{-4}\)
& \(5.65 \times 10^{-4}\) \\

GPN
& \(3.65 \times 10^{-4}\)
& \(6.21 \times 10^{-4}\)
& \(5.00 \times 10^{-6}\)
& \(1.00 \times 10^{-5}\)
& \(7.00 \times 10^{-6}\) \\

GPN-GD\textsubscript{EVI}
& \(8.89 \times 10^{-2}\)
& \(5.01 \times 10^{-3}\)
& \(4.44 \times 10^{-2}\)
& \(1.76 \times 10^{-5}\)
& \(1.38 \times 10^{-1}\) \\

GPN-GD\textsubscript{LAT}
& \(2.21 \times 10^{-4}\)
& \(4.77 \times 10^{-3}\)
& \(1.93 \times 10^{-6}\)
& \(8.05 \times 10^{-6}\)
& \(3.32 \times 10^{-6}\) \\

GPN-LOP
& \(3.15 \times 10^{-1}\)
& \(2.09 \times 10^{-1}\)
& \(1.81 \times 10^{-1}\)
& \(2.46 \times 10^{-1}\)
& \(3.23 \times 10^{-1}\) \\

S-BGCN-K
& \(2.69 \times 10^{-2}\)
& \(7.88 \times 10^{-2}\)
& \(4.47 \times 10^{-2}\)
& \(2.66 \times 10^{-3}\)
& \(1.71 \times 10^{-3}\) \\

S-BGCN-T-K
& \(4.30 \times 10^{-2}\)
& \(6.32 \times 10^{-2}\)
& \(4.38 \times 10^{-2}\)
& \(4.36 \times 10^{-3}\)
& \(2.85 \times 10^{-3}\) \\

\bottomrule
\end{tabular}

\vspace{1em}

% ------------------------------------------------
% Block 2
% ------------------------------------------------
\begin{tabular}{lrrr}
\toprule
Method
& CoauthorCS
& CoauthorPhysics
& OGBN-ArXiv \\
\midrule

CUQ-APPNP
& \(1.09 \times 10^{-4}\)
& \(1.44 \times 10^{-4}\)
& \(1.20 \times 10^{-2}\) \\

CUQ-GAT
& \(5.79 \times 10^{-2}\)
& \(2.10 \times 10^{-5}\)
& \(4.10 \times 10^{-1}\) \\

CUQ-GCN
& \(3.48 \times 10^{-2}\)
& \(2.08 \times 10^{-4}\)
& \(2.40 \times 10^{-1}\) \\

GPN
& \(4.00 \times 10^{-6}\)
& \(1.10 \times 10^{-5}\)
& \(5.44 \times 10^{-3}\) \\

GPN-GD\textsubscript{EVI}
& \(1.02 \times 10^{-2}\)
& \(5.31 \times 10^{-2}\)
& \(2.05 \times 10^{-3}\) \\

GPN-GD\textsubscript{LAT}
& \(1.02 \times 10^{-6}\)
& \(2.72 \times 10^{-5}\)
& \(1.05 \times 10^{-5}\) \\

GPN-LOP
& \(5.33 \times 10^{-1}\)
& \(2.65 \times 10^{-1}\)
& OOM \\

S-BGCN-K
& \(1.62 \times 10^{-3}\)
& \(4.35 \times 10^{-3}\)
& \(5.33 \times 10^{-2}\) \\

S-BGCN-T-K
& \(5.62 \times 10^{-3}\)
& \(7.06 \times 10^{-3}\)
& \( 5.68 \times 10^{-2} \) \\

\bottomrule
\end{tabular}

\end{table}
\begin{table}[t]
\centering
\caption{EDL test accuracy averaged across each graphon sequence.}
\label{tab:EDL-acc-graphon}
\scriptsize
\setlength{\tabcolsep}{3.2pt}
\begin{tabular}{lccccccccc}
\toprule
DGP
& CUQ-APPNP
& CUQ-GAT
& CUQ-GCN
& GPN
& GPN-GD$_{\mathrm{EVI}}$
& GPN-GD$_{\mathrm{LAT}}$
& GPN-LOP
& S-BGCN-K
& S-BGCN-T-K \\
\midrule
Exp.-decay
& 0.940
& 0.910
& 0.848
& 0.555
& 0.158
& 0.761
& 0.793
& 0.904
& 0.906 \\
SBM
& 0.822
& 0.832
& 0.800
& 0.786
& 0.159
& 0.821
& 0.834
& 0.823
& 0.823 \\
\bottomrule
\end{tabular}
\end{table}

\textbf{RQ3: GB-Ens is a family of candidate consistent epistemic predictors.}

Complementing the results reported in the main text, Figure~\ref{fig:EU-GBEns-graphon2} shows that on the SBM, GB-Ens EU decreases from \( N=500 \) to \( N=10,000 \) by \( 92\% \) for APPNP and \( 94\% \) for GCN, while GAT exhibits a weaker, non-monotonic EU reduction of \( 34\% \), reaching a maximum of \( 42\% \) at an intermediate graph size. Figure~\ref{fig:EU-deco-GBEns-graphon-2} further decomposes EU into seed and bootstrap contributions of similar magnitude, both decreasing with increasing information. For GAT, the non-monotonicity is driven primarily by the seed component, whereas the bootstrap component follows a more consistently decreasing trajectory. Regarding aleatoric uncertainty, Figure~\ref{fig:AU-Ens-graphon-2} shows that GB-Ens AU decreases and eventually stabilizes, with all three backbones approaching similar plateaus.

Figure~\ref{fig:EU-GEns-graphon-2} shows that G-Ens similarly exhibit no consistent EU contraction on the SBM, while Figure~\ref{fig:AU-Ens-graphon-2} shows strongly backbone-dependent AU trajectories. For completeness, Table~\ref{tab:Ens-acc-graphon} reports predictive accuracy averaged over the two graphon information-growth sequences for G-Ens and GB-Ens. For the true-DGP resampling diagnostic, we compute normalized node-wise Wasserstein-1 distances on a random \( 10\% \) subset of test nodes and average them across nodes and \( 20 \) projective-sequence replicates, using the Euclidean diameter \( \sqrt{2} \) of the probability simplex for normalization. Table~\ref{tab:wasserstein} shows that, on the SBM, distances remain relatively low for GCN (\( 0.12 \)--\( 0.14 \)) and APPNP (\( 0.20 \)--\( 0.34 \)), whereas GAT exhibits larger distances ranging from \( 0.24 \) to \( 0.45 \).

\begin{figure}[t]
    \centering

    \begin{subfigure}[t]{0.49\linewidth}
        \centering
        \includegraphics[width=\linewidth]{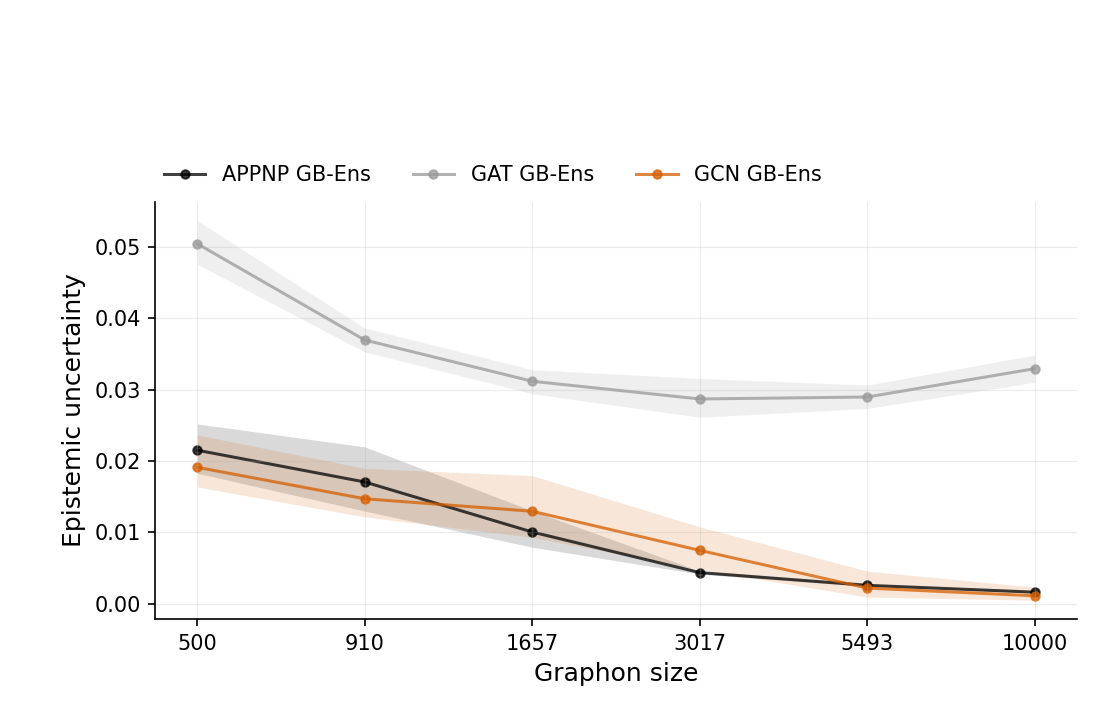}
        \caption{Total epistemic uncertainty.}
        \label{fig:EU-GBEns-graphon2}
    \end{subfigure}
    \hfill
    \begin{subfigure}[t]{0.49\linewidth}
        \centering
        \includegraphics[width=\linewidth]{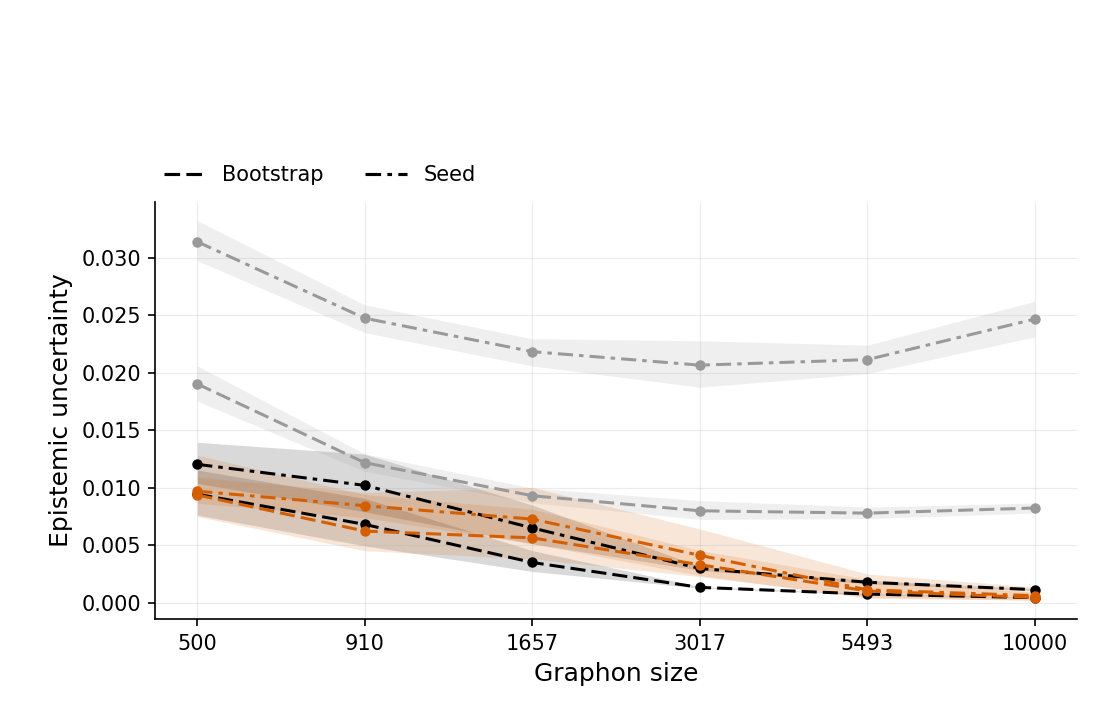}
        \caption{Bootstrap and training-seed decomposition.}
        \label{fig:EU-deco-GBEns-graphon-2}
    \end{subfigure}
    
    \caption{GB-Ens mean epistemic uncertainty on the SBM.}
    \label{fig:EU-GBEns-graphon-2-general}
\end{figure}

\begin{figure}[t]
    \centering
    \begin{subfigure}[t]{0.49\linewidth}
        \centering
        \includegraphics[width=\linewidth]{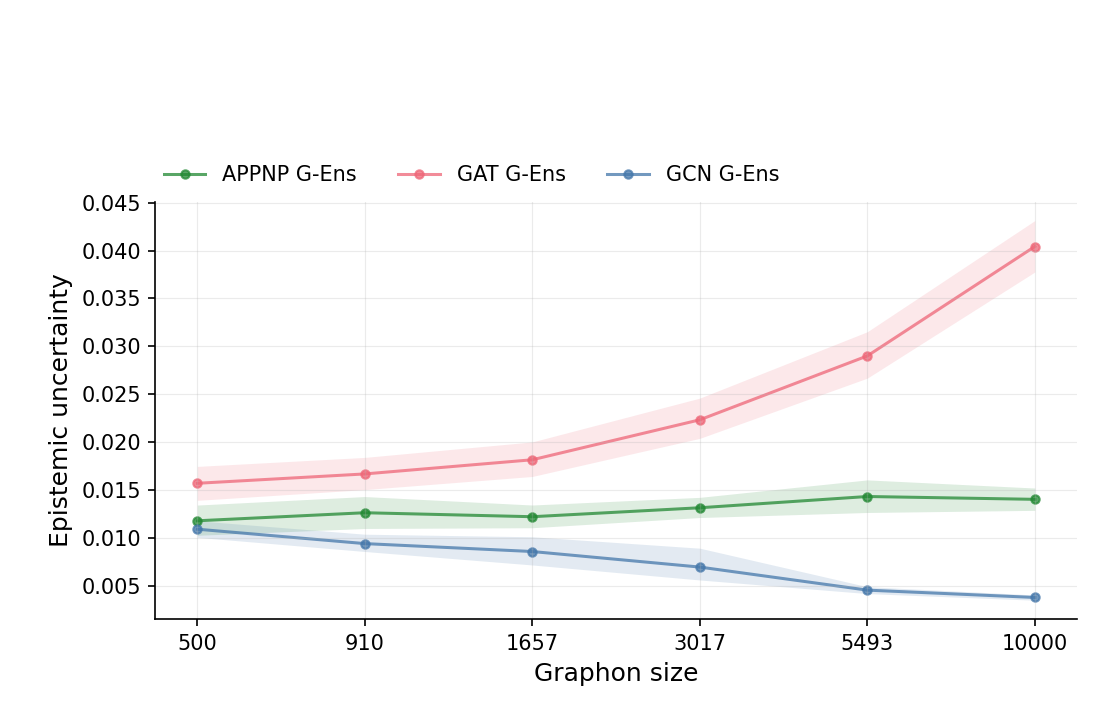}
        \caption{Epistemic uncertainty of G-Ens.}
        \label{fig:EU-GEns-graphon-2}
    \end{subfigure}
    \hfill
    \begin{subfigure}[t]{0.49\linewidth}
        \centering
        \includegraphics[width=\linewidth]{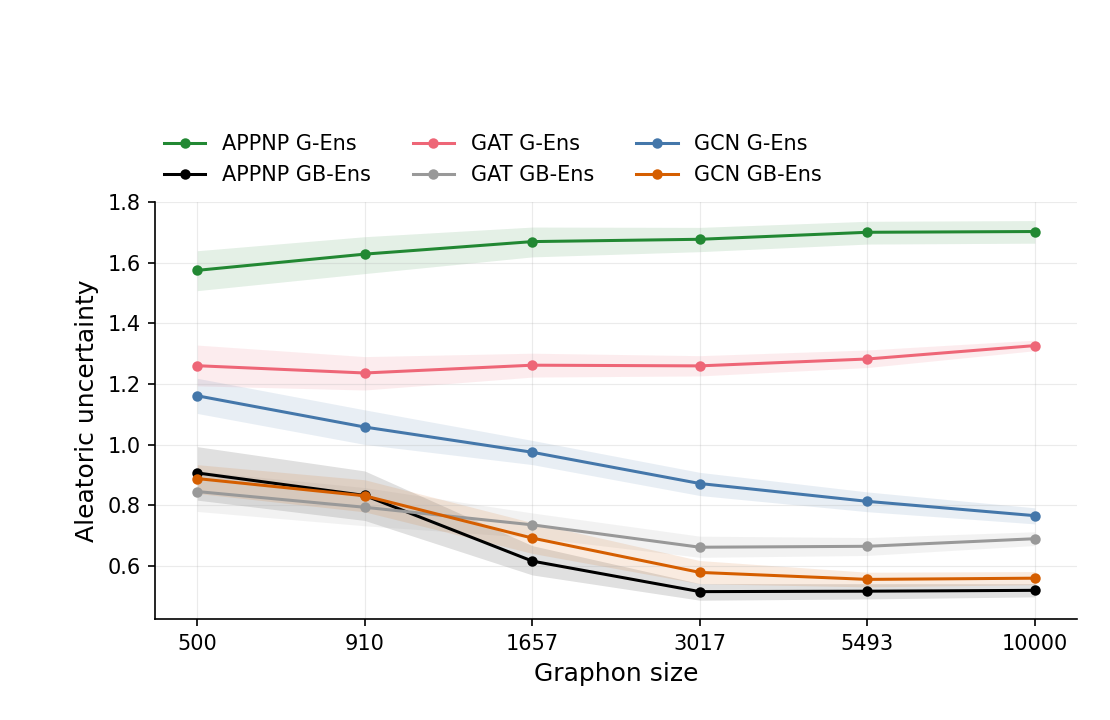}
        \caption{Aleatoric uncertainty of GB-Ens and G-Ens.}
        \label{fig:AU-Ens-graphon-2}
    \end{subfigure}
    
    \caption{Ensemble mean uncertainties on the SBM.}
    \label{fig:wooo}
\end{figure}

\begin{table}[t]
\scriptsize
\centering
\caption{Mean normalized Wasserstein distances between GB-Ens- and true-DGP-induced second-order distributions.}
\label{tab:wasserstein}
\begin{tabular}{llcccccc}
\toprule
& & \multicolumn{6}{c}{Graph size} \\
\cmidrule(lr){3-8}
Graphon & Backbone
& 500 & 910 & 1657 & 3017 & 5493 & 10000 \\
\midrule

\multirow{3}{*}{Exp.-decay}
& APPNP & 0.2991 & 0.2931 & 0.2870 & 0.2886 & 0.2841 & 0.2826 \\
& GAT  & 0.4482 & 0.5102 & 0.5505 & 0.5790 & 0.5946 & 0.6156 \\
& GCN   & 0.2053 & 0.2113 & 0.2119 & 0.2071 & 0.2059 & 0.2050 \\
\midrule

\multirow{3}{*}{SBM}
& APPNP & 0.2049 & 0.2185 & 0.3146 & 0.3483 & 0.3471 & 0.3431 \\
& GAT  & 0.2440 & 0.2998 & 0.3684 & 0.4130 & 0.4365 & 0.4506 \\
& GCN   & 0.1277 & 0.1194 & 0.1437 & 0.1527 & 0.1524 & 0.1480 \\
\bottomrule
\end{tabular}
\end{table}

\begin{table}[t]
\centering
\caption{G-Ens and GB-Ens test accuracy averaged across each graphon sequence.}
\label{tab:Ens-acc-graphon}
\scriptsize
\setlength{\tabcolsep}{4pt}
\begin{tabular}{lcccccc}
\toprule
DGP
& APPNP G-Ens
& GAT G-Ens
& GCN G-Ens
& APPNP GB-Ens
& GAT GB-Ens
& GCN GB-Ens \\
\midrule
Exp.-decay
& 0.808
& 0.731
& 0.902
& 0.946
& 0.931
& 0.932 \\
SBM
& 0.704
& 0.772
& 0.814
& 0.816
& 0.830
& 0.830 \\
\bottomrule
\end{tabular}
\end{table}

%\input{tables/EU_deco_app_bench}

% \section{Misclassification Detection and Accuracy--Rejection}

% \textbf{Misclassification detection.} 
% We treat incorrectly classified ID test nodes as positives and use uncertainty as the detection score. We report AUROC and the area under the precision--recall curve (AUPR), with random-ranking baselines of \(0.5\) and the misclassification prevalence, respectively; higher values are better.

% \textbf{Accuracy--Rejection.}
% Predictions are progressively rejected from highest to lowest uncertainty. We report E-AURC, the excess area under the risk--coverage curve (AURC) over the optimal ranking for a classifier with the same error rate, which accounts for differences in predictive accuracy across models. Lower values are better, with \(0\) indicating optimal ranking.

% Tables~\ref{tab:bench-md-auroc-1}--\ref{tab:bench-md-ap-3} and~\ref{tab:bench-ar-1}--\ref{tab:bench-ar-3} show a consistent pattern across both tasks: total and aleatoric uncertainty generally provide the strongest misclassification detection and rejection rankings, whereas epistemic uncertainty is less effective. This is consistent with previous graph UQ results~\citep{Zhao-2020,Stadler-2021}. No model consistently dominates across datasets, but \textbf{GB-Ens} and its backbone variants remain competitive with the considered baselines across AUROC, AP, and E-AURC, supporting the practical utility of their uncertainty estimates for identifying and rejecting unreliable predictions.

% \input{tables/misclassification_app}
% \input{tables/acc_rej_app}

\section{Out-Of-Distribution Detection}
\label{app:sec-ood}
We report the full OOD detection results for the EDL baselines and, for completeness, extend the evaluation to G-Ens and GB-Ens in Tables~\ref{tab:app-auroc-coraml} to~\ref{tab:app-auroc-ogbn-arxiv}, for which we similarly find that the most effective uncertainty type depends on the model and OOD protocol. Bold highlights the best-performing uncertainty type(s) for each model and shift. GB-Ens bootstrap and original dataset statistics are reported in Table~\ref{tab:MLdata-stats}. The observed discrepancies in graph statistics indicate that the local bootstrap does not preserve all topological properties of the observed graphs, limiting its fidelity as an approximation of graph observation variability; nevertheless, in these downstream OOD experiments uncertainty serves only as a detection score, and this structural degradation does not automatically translate into degraded OOD detection performance. We follow the leave-out-classes (LOC) and feature isolation (FI) protocols of~\citet{Stadler-2021, Damke-2024a, Damke-2024b}.

On OGBN-ArXiv, CUQ-GAT and CUQ-GCN achieve very low accuracy (\(\approx 0.051\)--\(0.073\)) using the hyperparameters recommended in the original paper~\citep{Damke-2024b}, while GPN-LOP exceeds the available 80\,GB of NVIDIA A100 GPU memory; we therefore do not report results for these models.

\textbf{Datasets.} We use CoraML~\citep{McCallum-2000}, CiteSeer~\citep{Giles-1998}, PubMed~\citep{Namata-2012}, CoauthorCS, CoauthorPhysics, AmazonComputers, AmazonPhotos~\citep{Shchur-2019}, and OGBN-ArXiv~\citep{Hu-2021}.

\textbf{Leave-out-classes (LOC).}
Following \citet{Stadler-2021}, we partition the label space \( \gY \) into retained classes \( \gY_{\mathrm{ID}} \) and the same left-out classes \( \gY_{\mathrm{OOD}} \) used in their LOC setup. Nodes from \( \gY_{\mathrm{OOD}} \) are excluded from training and validation but remain in \( \gG \), preserving the transductive setting. For GB-Ens, the same class partition is fixed across bootstrap replicates, with each ensemble member evaluated on its corresponding bootstrap graph.

\textbf{Feature isolation (FI).}
We sample \(n_{\mathrm{iso}}=100\) test nodes and, for each node \(i\), construct a copy of \(\gG\) in which only \(X_i\) is replaced by coordinate-wise Gaussian \(\gN(0,\sigma^2 I_D)\) or Bernoulli \(\mathrm{Bernoulli}(0.5)\) noise. For GB-Ens, the same nodes and feature perturbations are used across bootstrap replicates. Each perturbed node is evaluated in its corresponding graph as OOD; for GB-Ens, this evaluation is performed separately on each member's bootstrap graph, while the unperturbed test set \(\gT^{(\mathrm{te})}\) forms the ID set.

\textbf{Metrics.}
Using total, aleatoric, and epistemic uncertainty as OOD scores, we report AUROC, together with ID accuracy and FI OOD accuracy.

% Tables~\ref{tab:app-auroc-loc-1}--~\ref{tab:app-aupr-loc-3} report LOC detection results, Tables~\ref{tab:app-auroc-fi-normal-1}--~\ref{tab:app-aupr-fi-normal-3} the FI Gaussian results, and Tables~\ref{tab:app-auroc-fi-bernoulli-1}--~\ref{tab:app-aupr-fi-bernoulli-3} the FI Bernoulli results. Under LOC, AU generally provides the strongest OOD discrimination, followed by TU, with GPN as the main exception, for which EU is typically strongest. FI further reveals a strong dependence on the perturbation. Under Bernoulli noise, detection is predominantly driven by TU and AU, with GPN showing the clearest tendency toward stronger EU. Under Gaussian noise, by contrast, EU is the strongest detector for most methods, with LOP-GPN as the main exception. For the baselines, the best-performing uncertainty measure also consistently achieves AUPR above the random-prevalence baseline, whereas this occurs less consistently for \textbf{GB-Ens}, indicating that favorable AUROC does not always concentrate OOD nodes at the high-uncertainty end of the ranking.

\begin{table*}[t]
\centering
\scriptsize
\setlength{\tabcolsep}{3.5pt}
\renewcommand{\arraystretch}{1.08}
\caption{OOD detection on CoraML measured by AUROC.}
\label{tab:app-auroc-coraml}
\begin{tabular}{l c ccc c ccc c ccc}
\toprule
Model & ID ACC & \multicolumn{3}{c}{LOC}
& \multicolumn{4}{c}{FI Gaussian}
& \multicolumn{4}{c}{FI Bernoulli} \\
\cmidrule(lr){3-5}\cmidrule(lr){6-9}\cmidrule(lr){10-13}
& & TU & AU & EU
& OOD ACC & TU & AU & EU
& OOD ACC & TU & AU & EU \\
\midrule
CUQ-APPNP & 0.790 & \textbf{0.761} & \textbf{0.761} & 0.665 & 0.080 & \textbf{1.000} & 0.992 & \textbf{1.000} & 0.760 & \textbf{0.603} & \textbf{0.603} & 0.596 \\
CUQ-GAT & 0.764 & \textbf{0.697} & \textbf{0.697} & 0.562 & 0.070 & 0.996 & 0.996 & \textbf{1.000} & 0.740 & \textbf{0.544} & \textbf{0.544} & 0.537 \\
CUQ-GCN & 0.712 & \textbf{0.715} & \textbf{0.715} & 0.616 & 0.110 & \textbf{1.000} & 0.998 & \textbf{1.000} & 0.700 & \textbf{0.574} & \textbf{0.574} & 0.531 \\
GPN & 0.802 & 0.836 & 0.836 & \textbf{0.837} & 0.790 & 0.559 & 0.559 & \textbf{0.583} & 0.800 & \textbf{0.578} & \textbf{0.578} & 0.577 \\
GPN-GD\textsubscript{EVI} & 0.782 & \textbf{0.892} & 0.862 & 0.879 & 0.760 & \textbf{0.554} & 0.551 & 0.545 & 0.760 & \textbf{0.556} & 0.554 & 0.544 \\
GPN-GD\textsubscript{LAT} & 0.790 & \textbf{0.832} & \textbf{0.832} & 0.824 & 0.790 & 0.555 & 0.555 & \textbf{0.595} & 0.790 & 0.579 & 0.579 & \textbf{0.585} \\
LOP-GPN & 0.795 & 0.837 & \textbf{0.865} & 0.507 & 0.840 & 0.726 & 0.641 & \textbf{0.836} & 0.810 & 0.668 & 0.644 & \textbf{0.673} \\
S-BGCN-K & 0.513 & \textbf{0.865} & 0.864 & 0.462 & 0.260 & 0.028 & 0.002 & \textbf{1.000} & 0.500 & 0.571 & \textbf{0.579} & 0.423 \\
S-BGCN-T-K & 0.786 & 0.885 & \textbf{0.890} & 0.368 & 0.240 & 0.169 & 0.009 & \textbf{1.000} & 0.780 & 0.633 & \textbf{0.653} & 0.362 \\
\midrule
G-Ens\textsubscript{APPNP} & 0.814 & \textbf{0.835} & 0.831 & 0.789 & 0.240 & 0.272 & 0.072 & \textbf{0.668} & 0.725 & 0.667 & \textbf{0.671} & 0.532 \\
G-Ens\textsubscript{GAT} & 0.814 & 0.850 & \textbf{0.857} & 0.621 & 0.200 & 0.217 & 0.002 & \textbf{0.863} & 0.735 & 0.633 & \textbf{0.644} & 0.469 \\
G-Ens\textsubscript{GCN} & 0.816 & \textbf{0.848} & 0.847 & 0.810 & 0.200 & 0.226 & 0.080 & \textbf{0.540} & 0.810 & 0.670 & \textbf{0.671} & 0.614 \\
\midrule
GB-Ens\textsubscript{APPNP} & 0.794 & \textbf{0.807} & 0.799 & 0.726 & 0.240 & 0.304 & 0.025 & \textbf{0.825} & 0.650 & 0.728 & \textbf{0.734} & 0.499 \\
GB-Ens\textsubscript{GAT} & 0.770 & 0.797 & \textbf{0.812} & 0.625 & 0.210 & 0.389 & 0.002 & \textbf{0.989} & 0.680 & 0.658 & \textbf{0.679} & 0.520 \\
GB-Ens\textsubscript{GCN} & 0.687 & \textbf{0.766} & 0.758 & 0.683 & 0.280 & 0.533 & 0.070 & \textbf{0.983} & 0.730 & 0.603 & \textbf{0.626} & 0.479 \\
\bottomrule
\end{tabular}
\end{table*}

\begin{table*}[t]
\centering
\scriptsize
\setlength{\tabcolsep}{3.5pt}
\renewcommand{\arraystretch}{1.08}
\caption{OOD detection on CiteSeer measured by AUROC.}
\label{tab:app-auroc-citeseer}
\begin{tabular}{l c ccc c ccc c ccc}
\toprule
Model & ID ACC & \multicolumn{3}{c}{LOC}
& \multicolumn{4}{c}{FI Gaussian}
& \multicolumn{4}{c}{FI Bernoulli} \\
\cmidrule(lr){3-5}\cmidrule(lr){6-9}\cmidrule(lr){10-13}
& & TU & AU & EU
& OOD ACC & TU & AU & EU
& OOD ACC & TU & AU & EU \\
\midrule
CUQ-APPNP & 0.647 & \textbf{0.751} & \textbf{0.751} & 0.616 & 0.080 & \textbf{1.000} & 0.711 & \textbf{1.000} & 0.570 & \textbf{0.640} & \textbf{0.640} & 0.591 \\
CUQ-GAT & 0.613 & \textbf{0.768} & \textbf{0.768} & 0.679 & 0.100 & 0.941 & 0.701 & \textbf{0.989} & 0.560 & \textbf{0.597} & \textbf{0.597} & \textbf{0.597} \\
CUQ-GCN & 0.557 & 0.593 & \textbf{0.594} & 0.571 & 0.110 & 0.999 & 0.535 & \textbf{1.000} & 0.480 & \textbf{0.576} & \textbf{0.576} & 0.479 \\
GPN & 0.678 & \textbf{0.741} & \textbf{0.741} & 0.615 & 0.690 & 0.489 & 0.487 & \textbf{0.702} & 0.630 & \textbf{0.599} & \textbf{0.599} & 0.569 \\
GPN-GD\textsubscript{EVI} & 0.688 & \textbf{0.807} & \textbf{0.807} & 0.774 & 0.660 & 0.486 & 0.482 & \textbf{0.599} & 0.670 & \textbf{0.568} & \textbf{0.568} & 0.558 \\
GPN-GD\textsubscript{LAT} & 0.687 & \textbf{0.805} & \textbf{0.805} & 0.773 & 0.670 & 0.486 & 0.482 & \textbf{0.598} & 0.680 & \textbf{0.568} & \textbf{0.568} & 0.557 \\
LOP-GPN & 0.680 & \textbf{0.748} & 0.724 & 0.596 & 0.640 & 0.746 & 0.596 & \textbf{0.874} & 0.620 & 0.700 & 0.620 & \textbf{0.713} \\
S-BGCN-K & 0.699 & \textbf{0.739} & 0.737 & 0.413 & 0.280 & 0.159 & 0.002 & \textbf{1.000} & 0.720 & 0.683 & \textbf{0.708} & 0.415 \\
S-BGCN-T-K & 0.723 & 0.861 & \textbf{0.876} & 0.255 & 0.260 & 0.230 & 0.005 & \textbf{1.000} & 0.740 & 0.704 & \textbf{0.731} & 0.393 \\
\midrule
G-Ens\textsubscript{APPNP} & 0.716 & \textbf{0.805} & 0.804 & 0.545 & 0.185 & 0.099 & 0.002 & \textbf{0.767} & 0.540 & 0.832 & \textbf{0.839} & 0.238 \\
G-Ens\textsubscript{GAT} & 0.715 & \textbf{0.799} & 0.793 & 0.565 & 0.180 & 0.121 & 0.005 & \textbf{0.862} & 0.535 & 0.684 & \textbf{0.694} & 0.458 \\
G-Ens\textsubscript{GCN} & 0.706 & \textbf{0.829} & \textbf{0.829} & 0.674 & 0.210 & 0.117 & 0.016 & \textbf{0.581} & 0.720 & 0.701 & \textbf{0.702} & 0.461 \\
\midrule
GB-Ens\textsubscript{APPNP} & 0.676 & \textbf{0.771} & \textbf{0.771} & 0.399 & 0.200 & 0.037 & 0.000 & \textbf{0.777} & 0.370 & 0.892 & \textbf{0.902} & 0.125 \\
GB-Ens\textsubscript{GAT} & 0.666 & \textbf{0.802} & 0.788 & 0.548 & 0.210 & 0.075 & 0.000 & \textbf{1.000} & 0.360 & 0.764 & \textbf{0.781} & 0.418 \\
GB-Ens\textsubscript{GCN} & 0.667 & 0.789 & \textbf{0.794} & 0.355 & 0.160 & 0.036 & 0.000 & \textbf{0.970} & 0.410 & 0.771 & \textbf{0.812} & 0.154 \\
\bottomrule
\end{tabular}
\end{table*}

\begin{table*}[t]
\centering
\scriptsize
\setlength{\tabcolsep}{3.5pt}
\renewcommand{\arraystretch}{1.08}
\caption{OOD detection on PubMed measured by AUROC.}
\label{tab:app-auroc-pubmed}
\begin{tabular}{l c ccc c ccc c ccc}
\toprule
Model & ID ACC & \multicolumn{3}{c}{LOC} & \multicolumn{4}{c}{FI Gaussian} & \multicolumn{4}{c}{FI Bernoulli} \\
\cmidrule(lr){3-5}\cmidrule(lr){6-9}\cmidrule(lr){10-13}
& & TU & AU & EU & OOD ACC & TU & AU & EU & OOD ACC & TU & AU & EU \\
\midrule
CUQ-APPNP & 0.852 & \textbf{0.693} & \textbf{0.693} & 0.643 & 0.300 & 0.975 & 0.942 & \textbf{0.994} & 0.770 & \textbf{0.652} & \textbf{0.652} & 0.616 \\
CUQ-GAT & 0.835 & \textbf{0.709} & \textbf{0.709} & 0.618 & 0.670 & 0.660 & 0.659 & \textbf{0.684} & 0.780 & \textbf{0.586} & \textbf{0.586} & 0.540 \\
CUQ-GCN & 0.845 & \textbf{0.717} & \textbf{0.717} & 0.591 & 0.340 & 0.936 & 0.858 & \textbf{0.945} & 0.770 & \textbf{0.613} & \textbf{0.613} & 0.554 \\
GPN & 0.837 & 0.730 & 0.730 & \textbf{0.769} & 0.820 & 0.560 & 0.560 & \textbf{0.642} & 0.820 & 0.593 & 0.593 & \textbf{0.638} \\
GPN-GD\textsubscript{EVI} & 0.827 & 0.800 & 0.771 & \textbf{0.857} & 0.820 & 0.578 & 0.569 & \textbf{0.621} & 0.820 & 0.585 & 0.579 & \textbf{0.615} \\
GPN-GD\textsubscript{LAT} & 0.842 & 0.724 & 0.724 & \textbf{0.764} & 0.810 & 0.559 & 0.559 & \textbf{0.658} & 0.800 & 0.596 & 0.596 & \textbf{0.640} \\
LOP-GPN & 0.834 & 0.748 & \textbf{0.781} & 0.651 & 0.780 & 0.696 & 0.588 & \textbf{0.767} & 0.770 & 0.578 & 0.446 & \textbf{0.689} \\
S-BGCN-K & 0.848 & 0.625 & \textbf{0.627} & 0.500 & 0.400 & 0.342 & 0.001 & \textbf{1.000} & 0.770 & 0.646 & \textbf{0.679} & 0.288 \\
S-BGCN-T-K & 0.850 & 0.607 & \textbf{0.611} & 0.507 & 0.400 & 0.447 & 0.006 & \textbf{1.000} & 0.790 & 0.654 & \textbf{0.677} & 0.308 \\
\midrule
G-Ens\textsubscript{APPNP} & 0.867 & 0.663 & \textbf{0.664} & 0.641 & 0.385 & 0.138 & 0.028 & \textbf{0.463} & 0.730 & \textbf{0.781} & 0.780 & 0.540 \\
G-Ens\textsubscript{GAT} & 0.845 & \textbf{0.698} & 0.697 & 0.657 & 0.365 & 0.397 & 0.034 & \textbf{0.679} & 0.765 & 0.648 & \textbf{0.652} & 0.560 \\
G-Ens\textsubscript{GCN} & 0.854 & 0.728 & 0.728 & \textbf{0.733} & 0.400 & 0.192 & 0.083 & \textbf{0.375} & 0.820 & \textbf{0.696} & \textbf{0.696} & 0.604 \\
\midrule
GB-Ens\textsubscript{APPNP} & 0.857 & 0.647 & \textbf{0.648} & 0.600 & 0.380 & 0.175 & 0.025 & \textbf{0.536} & 0.640 & \textbf{0.775} & 0.773 & 0.551 \\
GB-Ens\textsubscript{GAT} & 0.845 & \textbf{0.676} & 0.675 & 0.657 & 0.320 & 0.573 & 0.029 & \textbf{0.962} & 0.730 & 0.661 & \textbf{0.668} & 0.576 \\
GB-Ens\textsubscript{GCN} & 0.829 & 0.618 & \textbf{0.626} & 0.568 & 0.400 & 0.261 & 0.060 & \textbf{0.780} & 0.720 & 0.641 & \textbf{0.649} & 0.514 \\
\bottomrule
\end{tabular}
\end{table*}

\begin{table*}[t]
\centering
\scriptsize
\setlength{\tabcolsep}{3.5pt}
\renewcommand{\arraystretch}{1.08}
\caption{OOD detection on Amazon Computers measured by AUROC.}
\label{tab:app-auroc-amazon-computers}
\begin{tabular}{l c ccc c ccc c ccc}
\toprule
Model & ID ACC & \multicolumn{3}{c}{LOC} & \multicolumn{4}{c}{FI Gaussian} & \multicolumn{4}{c}{FI Bernoulli} \\
\cmidrule(lr){3-5}\cmidrule(lr){6-9}\cmidrule(lr){10-13}
& & TU & AU & EU & OOD ACC & TU & AU & EU & OOD ACC & TU & AU & EU \\
\midrule
CUQ-APPNP & 0.838 & \textbf{0.717} & \textbf{0.717} & 0.589 & 0.010 & \textbf{1.000} & \textbf{1.000} & \textbf{1.000} & 0.720 & \textbf{0.698} & \textbf{0.698} & 0.543 \\
CUQ-GAT & 0.875 & \textbf{0.773} & \textbf{0.773} & 0.719 & 0.170 & 0.908 & 0.907 & \textbf{0.909} & 0.820 & \textbf{0.503} & \textbf{0.503} & 0.484 \\
CUQ-GCN & 0.831 & \textbf{0.632} & \textbf{0.632} & 0.538 & 0.210 & 0.896 & 0.895 & \textbf{0.942} & 0.790 & \textbf{0.526} & \textbf{0.526} & 0.494 \\
GPN & 0.810 & 0.749 & 0.749 & \textbf{0.762} & 0.720 & 0.551 & 0.551 & \textbf{0.562} & 0.730 & 0.554 & 0.554 & \textbf{0.560} \\
GPN-GD\textsubscript{EVI} & 0.381 & 0.408 & 0.382 & \textbf{0.511} & 0.910 & 0.515 & 0.515 & \textbf{0.582} & 0.430 & 0.497 & \textbf{0.505} & 0.482 \\
GPN-GD\textsubscript{LAT} & 0.832 & \textbf{0.768} & \textbf{0.768} & 0.607 & 0.900 & 0.515 & 0.515 & \textbf{0.580} & 0.800 & 0.567 & 0.567 & \textbf{0.583} \\
LOP-GPN & 0.792 & 0.825 & \textbf{0.876} & 0.507 & 0.720 & \textbf{0.891} & 0.818 & 0.871 & 0.740 & 0.713 & 0.637 & \textbf{0.717} \\
S-BGCN-K & 0.396 & 0.752 & \textbf{0.753} & 0.593 & 0.400 & 0.235 & 0.164 & \textbf{0.960} & 0.440 & 0.491 & 0.491 & \textbf{0.509} \\
S-BGCN-T-K & 0.626 & 0.785 & \textbf{0.787} & 0.581 & 0.320 & 0.140 & 0.078 & \textbf{0.980} & 0.660 & \textbf{0.507} & 0.506 & 0.502 \\
\midrule
G-Ens\textsubscript{APPNP} & 0.875 & 0.773 & \textbf{0.774} & 0.751 & 0.140 & 0.359 & 0.064 & \textbf{0.580} & 0.785 & \textbf{0.737} & \textbf{0.737} & 0.697 \\
G-Ens\textsubscript{GAT} & 0.852 & 0.810 & \textbf{0.817} & 0.760 & 0.170 & 0.866 & 0.120 & \textbf{0.980} & 0.830 & 0.537 & \textbf{0.538} & 0.511 \\
G-Ens\textsubscript{GCN} & 0.854 & 0.727 & 0.722 & \textbf{0.781} & 0.370 & 0.501 & 0.375 & \textbf{0.759} & 0.820 & 0.507 & \textbf{0.508} & 0.476 \\
\midrule
GB-Ens\textsubscript{APPNP} & 0.871 & 0.762 & \textbf{0.764} & 0.717 & 0.130 & 0.447 & 0.059 & \textbf{0.739} & 0.790 & \textbf{0.747} & \textbf{0.747} & 0.698 \\
GB-Ens\textsubscript{GAT} & 0.857 & 0.817 & \textbf{0.827} & 0.763 & 0.150 & 0.937 & 0.107 & \textbf{1.000} & 0.850 & \textbf{0.538} & 0.537 & 0.510 \\
GB-Ens\textsubscript{GCN} & 0.835 & 0.727 & \textbf{0.730} & 0.656 & 0.540 & 0.537 & 0.439 & \textbf{0.819} & 0.770 & 0.503 & \textbf{0.504} & 0.481 \\
\bottomrule
\end{tabular}
\end{table*}

\begin{table*}[t]
\centering
\scriptsize
\setlength{\tabcolsep}{3.5pt}
\renewcommand{\arraystretch}{1.08}
\caption{OOD detection on Amazon Photos measured by AUROC.}
\label{tab:app-auroc-amazon-photos}
\begin{tabular}{l c ccc c ccc c ccc}
\toprule
Model & ID ACC & \multicolumn{3}{c}{LOC} & \multicolumn{4}{c}{FI Gaussian} & \multicolumn{4}{c}{FI Bernoulli} \\
\cmidrule(lr){3-5}\cmidrule(lr){6-9}\cmidrule(lr){10-13}
& & TU & AU & EU & OOD ACC & TU & AU & EU & OOD ACC & TU & AU & EU \\
\midrule
CUQ-APPNP & 0.901 & 0.720 & \textbf{0.721} & 0.694 & 0.100 & \textbf{1.000} & 0.999 & \textbf{1.000} & 0.800 & \textbf{0.693} & \textbf{0.693} & 0.612 \\
CUQ-GAT & 0.915 & \textbf{0.842} & \textbf{0.842} & 0.823 & 0.220 & 0.927 & 0.927 & \textbf{0.936} & 0.870 & \textbf{0.510} & \textbf{0.510} & 0.497 \\
CUQ-GCN & 0.912 & \textbf{0.802} & \textbf{0.802} & 0.738 & 0.130 & 0.948 & 0.947 & \textbf{0.953} & 0.900 & \textbf{0.503} & \textbf{0.503} & 0.474 \\
GPN & 0.900 & 0.835 & 0.835 & \textbf{0.892} & 0.910 & 0.522 & 0.522 & \textbf{0.568} & 0.910 & 0.552 & 0.552 & \textbf{0.565} \\
GPN-GD\textsubscript{EVI} & 0.900 & 0.812 & 0.812 & \textbf{0.855} & 0.410 & 0.712 & \textbf{0.719} & 0.689 & 0.910 & 0.563 & 0.563 & \textbf{0.573} \\
GPN-GD\textsubscript{LAT} & 0.899 & 0.810 & 0.810 & \textbf{0.855} & 0.790 & 0.561 & 0.561 & \textbf{0.583} & 0.900 & 0.554 & 0.554 & \textbf{0.579} \\
LOP-GPN & 0.892 & 0.865 & \textbf{0.897} & 0.569 & 0.880 & 0.763 & 0.683 & \textbf{0.951} & 0.870 & 0.679 & 0.584 & \textbf{0.848} \\
S-BGCN-K & 0.446 & 0.629 & \textbf{0.630} & 0.359 & 0.250 & 0.126 & 0.071 & \textbf{0.976} & 0.360 & 0.520 & \textbf{0.522} & 0.477 \\
S-BGCN-T-K & 0.717 & 0.664 & \textbf{0.665} & 0.361 & 0.210 & 0.196 & 0.084 & \textbf{0.997} & 0.640 & 0.551 & \textbf{0.553} & 0.488 \\
\midrule
G-Ens\textsubscript{APPNP} & 0.930 & 0.843 & \textbf{0.844} & 0.807 & 0.135 & 0.481 & 0.145 & \textbf{0.681} & 0.900 & 0.725 & \textbf{0.726} & 0.715 \\
G-Ens\textsubscript{GAT} & 0.916 & 0.786 & \textbf{0.801} & 0.703 & 0.290 & 0.895 & 0.289 & \textbf{0.999} & 0.900 & 0.497 & 0.498 & \textbf{0.520} \\
G-Ens\textsubscript{GCN} & 0.921 & \textbf{0.847} & \textbf{0.847} & 0.839 & 0.240 & 0.331 & 0.270 & \textbf{0.590} & 0.900 & 0.530 & \textbf{0.531} & 0.489 \\
\midrule
GB-Ens\textsubscript{APPNP} & 0.926 & 0.777 & \textbf{0.778} & 0.740 & 0.140 & 0.504 & 0.164 & \textbf{0.737} & 0.890 & 0.718 & \textbf{0.719} & 0.701 \\
GB-Ens\textsubscript{GAT} & 0.887 & 0.832 & \textbf{0.866} & 0.810 & 0.270 & 0.906 & 0.251 & \textbf{1.000} & 0.890 & 0.471 & 0.474 & \textbf{0.486} \\
GB-Ens\textsubscript{GCN} & 0.862 & 0.739 & \textbf{0.741} & 0.497 & 0.230 & 0.158 & 0.137 & \textbf{0.791} & 0.860 & 0.526 & 0.527 & \textbf{0.531} \\
\bottomrule
\end{tabular}
\end{table*}

\begin{table*}[t]
\centering
\scriptsize
\setlength{\tabcolsep}{3.5pt}
\renewcommand{\arraystretch}{1.08}
\caption{OOD detection on Coauthor CS measured by AUROC.}
\label{tab:app-auroc-coauthor-cs}
\begin{tabular}{l c ccc c ccc c ccc}
\toprule
Model & ID ACC & \multicolumn{3}{c}{LOC} & \multicolumn{4}{c}{FI Gaussian} & \multicolumn{4}{c}{FI Bernoulli} \\
\cmidrule(lr){3-5}\cmidrule(lr){6-9}\cmidrule(lr){10-13}
& & TU & AU & EU & OOD ACC & TU & AU & EU & OOD ACC & TU & AU & EU \\
\midrule
CUQ-APPNP & 0.910 & \textbf{0.770} & \textbf{0.770} & 0.741 & 0.060 & \textbf{1.000} & \textbf{1.000} & \textbf{1.000} & 0.830 & \textbf{0.553} & \textbf{0.553} & 0.507 \\
CUQ-GAT & 0.787 & 0.676 & \textbf{0.678} & 0.639 & 0.580 & \textbf{0.663} & 0.645 & \textbf{0.663} & 0.820 & \textbf{0.492} & \textbf{0.492} & 0.452 \\
CUQ-GCN & 0.797 & \textbf{0.730} & \textbf{0.730} & 0.658 & 0.110 & \textbf{0.979} & 0.940 & \textbf{0.979} & 0.770 & 0.538 & \textbf{0.541} & 0.519 \\
GPN & 0.884 & 0.755 & 0.755 & \textbf{0.931} & 0.860 & 0.531 & 0.531 & \textbf{0.557} & 0.860 & 0.537 & 0.537 & \textbf{0.550} \\
GPN-GD\textsubscript{EVI} & 0.872 & 0.771 & 0.760 & \textbf{0.919} & 0.840 & 0.522 & 0.522 & \textbf{0.539} & 0.840 & 0.524 & 0.524 & \textbf{0.538} \\
GPN-GD\textsubscript{LAT} & 0.885 & 0.716 & 0.716 & \textbf{0.928} & 0.880 & \textbf{0.541} & \textbf{0.541} & 0.515 & 0.880 & 0.545 & 0.545 & \textbf{0.556} \\
LOP-GPN & 0.886 & 0.706 & \textbf{0.883} & 0.149 & 0.880 & 0.678 & 0.629 & \textbf{0.723} & 0.870 & 0.620 & \textbf{0.626} & 0.588 \\
S-BGCN-K & 0.350 & 0.659 & \textbf{0.660} & 0.551 & 0.240 & 0.148 & 0.112 & \textbf{0.924} & 0.440 & 0.465 & 0.465 & \textbf{0.517} \\
S-BGCN-T-K & 0.709 & 0.642 & \textbf{0.643} & 0.543 & 0.190 & 0.144 & 0.090 & \textbf{0.994} & 0.730 & \textbf{0.542} & \textbf{0.542} & 0.488 \\
\midrule
G-Ens\textsubscript{APPNP} & 0.934 & \textbf{0.940} & \textbf{0.940} & 0.934 & 0.105 & 0.724 & 0.312 & \textbf{0.889} & 0.875 & \textbf{0.736} & \textbf{0.736} & 0.713 \\
G-Ens\textsubscript{GAT} & 0.915 & 0.914 & \textbf{0.920} & 0.855 & 0.155 & 0.805 & 0.156 & \textbf{0.958} & 0.870 & 0.580 & \textbf{0.583} & 0.551 \\
G-Ens\textsubscript{GCN} & 0.926 & 0.924 & 0.924 & \textbf{0.939} & 0.180 & 0.558 & 0.374 & \textbf{0.788} & 0.880 & \textbf{0.584} & \textbf{0.584} & 0.566 \\
\midrule
GB-Ens\textsubscript{APPNP} & 0.924 & 0.826 & \textbf{0.828} & 0.779 & 0.100 & 0.750 & 0.289 & \textbf{0.963} & 0.870 & \textbf{0.753} & 0.752 & 0.745 \\
GB-Ens\textsubscript{GAT} & 0.888 & 0.728 & \textbf{0.757} & 0.607 & 0.130 & 0.872 & 0.082 & \textbf{1.000} & 0.850 & 0.570 & \textbf{0.576} & 0.536 \\
GB-Ens\textsubscript{GCN} & 0.865 & 0.657 & \textbf{0.670} & 0.410 & 0.200 & 0.191 & 0.114 & \textbf{0.810} & 0.800 & \textbf{0.544} & 0.543 & 0.541 \\
\bottomrule
\end{tabular}
\end{table*}

\begin{table*}[t]
\centering
\scriptsize
\setlength{\tabcolsep}{3.5pt}
\renewcommand{\arraystretch}{1.08}
\caption{OOD detection on Coauthor Physics measured by AUROC.}
\label{tab:app-auroc-coauthor-physics}
\begin{tabular}{l c ccc c ccc c ccc}
\toprule
Model & ID ACC & \multicolumn{3}{c}{LOC} & \multicolumn{4}{c}{FI Gaussian} & \multicolumn{4}{c}{FI Bernoulli} \\
\cmidrule(lr){3-5}\cmidrule(lr){6-9}\cmidrule(lr){10-13}
& & TU & AU & EU & OOD ACC & TU & AU & EU & OOD ACC & TU & AU & EU \\
\midrule
CUQ-APPNP & 0.948 & \textbf{0.861} & \textbf{0.861} & 0.814 & 0.240 & \textbf{1.000} & 0.999 & \textbf{1.000} & 0.920 & \textbf{0.594} & \textbf{0.594} & 0.544 \\
CUQ-GAT & 0.945 & \textbf{0.848} & \textbf{0.848} & 0.784 & 0.260 & 0.998 & 0.996 & \textbf{1.000} & 0.930 & \textbf{0.487} & \textbf{0.487} & 0.417 \\
CUQ-GCN & 0.944 & \textbf{0.826} & \textbf{0.826} & 0.782 & 0.300 & \textbf{0.992} & \textbf{0.992} & \textbf{0.992} & 0.900 & \textbf{0.524} & \textbf{0.524} & 0.474 \\
GPN & 0.938 & 0.876 & 0.876 & \textbf{0.957} & 0.900 & 0.562 & 0.562 & \textbf{0.607} & 0.900 & \textbf{0.609} & \textbf{0.609} & 0.600 \\
GPN-GD\textsubscript{EVI} & 0.934 & 0.912 & 0.897 & \textbf{0.957} & 0.920 & 0.613 & 0.609 & \textbf{0.643} & 0.920 & 0.618 & 0.615 & \textbf{0.641} \\
GPN-GD\textsubscript{LAT} & 0.926 & 0.902 & 0.902 & \textbf{0.961} & 0.920 & 0.547 & 0.547 & \textbf{0.581} & 0.920 & 0.561 & 0.561 & \textbf{0.578} \\
LOP-GPN & 0.931 & 0.832 & \textbf{0.877} & 0.698 & 0.890 & 0.751 & 0.702 & \textbf{0.767} & 0.870 & 0.701 & \textbf{0.712} & 0.634 \\
S-BGCN-K & 0.863 & \textbf{0.905} & \textbf{0.905} & 0.814 & 0.430 & 0.284 & 0.197 & \textbf{0.959} & 0.820 & \textbf{0.593} & \textbf{0.593} & 0.568 \\
S-BGCN-T-K & 0.944 & \textbf{0.885} & \textbf{0.885} & 0.826 & 0.360 & 0.400 & 0.290 & \textbf{0.965} & 0.960 & \textbf{0.601} & \textbf{0.601} & 0.581 \\
\midrule
G-Ens\textsubscript{APPNP} & 0.960 & \textbf{0.940} & \textbf{0.940} & 0.934 & 0.210 & 0.409 & 0.279 & \textbf{0.514} & 0.960 & 0.750 & \textbf{0.751} & 0.684 \\
G-Ens\textsubscript{GAT} & 0.953 & 0.914 & \textbf{0.920} & 0.855 & 0.270 & 0.617 & 0.257 & \textbf{0.695} & 0.960 & \textbf{0.606} & \textbf{0.606} & 0.600 \\
G-Ens\textsubscript{GCN} & 0.956 & 0.924 & 0.924 & \textbf{0.939} & 0.330 & 0.574 & 0.467 & \textbf{0.730} & 0.960 & \textbf{0.610} & \textbf{0.610} & 0.592 \\
\midrule
GB-Ens\textsubscript{APPNP} & 0.954 & 0.928 & \textbf{0.931} & 0.896 & 0.210 & 0.750 & 0.420 & \textbf{0.868} & 0.900 & 0.763 & \textbf{0.764} & 0.745 \\
GB-Ens\textsubscript{GAT} & 0.932 & 0.907 & \textbf{0.916} & 0.844 & 0.280 & 0.817 & 0.307 & \textbf{0.990} & 0.900 & \textbf{0.600} & \textbf{0.600} & 0.595 \\
GB-Ens\textsubscript{GCN} & 0.924 & 0.885 & \textbf{0.892} & 0.733 & 0.400 & 0.407 & 0.362 & \textbf{0.701} & 0.900 & \textbf{0.604} & \textbf{0.604} & 0.597 \\
\bottomrule
\end{tabular}
\end{table*}

\begin{table*}[t]
\centering
\scriptsize
\setlength{\tabcolsep}{3.5pt}
\renewcommand{\arraystretch}{1.08}
\caption{OOD detection on OGBN-ArXiv measured by AUROC.}
\label{tab:app-auroc-ogbn-arxiv}
\begin{tabular}{l c ccc c ccc c ccc}
\toprule
Model & ID ACC & \multicolumn{3}{c}{LOC} & \multicolumn{4}{c}{FI Gaussian} & \multicolumn{4}{c}{FI Bernoulli} \\
\cmidrule(lr){3-5}\cmidrule(lr){6-9}\cmidrule(lr){10-13}
& & TU & AU & EU & OOD ACC & TU & AU & EU & OOD ACC & TU & AU & EU \\
\midrule
CUQ-APPNP & 0.697 & \textbf{0.685} & \textbf{0.685} & 0.662 & 0.390 & \textbf{0.823} & \textbf{0.823} & 0.811 & 0.680 & 0.581 & 0.578 & \textbf{0.676} \\
CUQ-GAT & N/A & 0.395 & \textbf{0.405} & 0.387 & N/A & N/A & N/A & N/A & N/A & N/A & N/A & N/A \\
CUQ-GCN & N/A & \textbf{0.523} & 0.516 & \textbf{0.523} & N/A & N/A & N/A & N/A & N/A & N/A & N/A & N/A \\
GPN & 0.696 & 0.706 & 0.705 & \textbf{0.747} & 0.660 & 0.540 & 0.539 & \textbf{0.604} & 0.660 & 0.540 & 0.539 & \textbf{0.604} \\
GPN-GD\textsubscript{EVI} & 0.675 & 0.690 & 0.690 & \textbf{0.730} & 0.680 & 0.534 & 0.534 & \textbf{0.601} & 0.680 & 0.533 & 0.533 & \textbf{0.601} \\
GPN-GD\textsubscript{LAT} & 0.701 & 0.689 & 0.689 & \textbf{0.753} & 0.690 & 0.548 & 0.548 & \textbf{0.597} & 0.690 & 0.548 & 0.548 & \textbf{0.597} \\
LOP-GPN & OOM & OOM & OOM & OOM & OOM & OOM & OOM & OOM & OOM & OOM & OOM & OOM \\
S-BGCN-K & 0.622 & 0.647 & \textbf{0.648} & 0.605 & 0.420 & 0.540 & 0.469 & \textbf{0.775} & 0.640 & 0.591 & \textbf{0.601} & 0.438 \\
S-BGCN-T-K & 0.645 & 0.637 & \textbf{0.639} & 0.590 & 0.480 & 0.600 & 0.499 & \textbf{0.792} & 0.660 & 0.586 & \textbf{0.597} & 0.450 \\
\midrule
G-Ens\textsubscript{APPNP} & 0.706 & 0.642 & 0.641 & \textbf{0.708} & 0.340 & 0.540 & 0.508 & \textbf{0.942} & 0.740 & 0.598 & \textbf{0.598} & 0.442 \\
G-Ens\textsubscript{GAT} & 0.710 & 0.695 & 0.694 & \textbf{0.702} & 0.550 & 0.651 & 0.601 & \textbf{0.888} & 0.720 & 0.529 & \textbf{0.529} & 0.511 \\
G-Ens\textsubscript{GCN} & 0.714 & 0.667 & 0.666 & \textbf{0.705} & 0.580 & 0.596 & 0.584 & \textbf{0.680} & 0.710 & \textbf{0.509} & 0.509 & 0.493 \\
\midrule
GB-Ens\textsubscript{APPNP} & 0.661 & 0.607 & 0.589 & \textbf{0.608} & 0.150 & 0.279 & \textbf{0.904} & 0.364 & 0.530 & 0.716 & 0.614 & \textbf{0.718} \\
GB-Ens\textsubscript{GAT} & 0.663 & \textbf{0.658} & 0.551 & 0.649 & 0.150 & 0.232 & \textbf{0.966} & 0.470 & 0.590 & 0.555 & 0.542 & \textbf{0.558} \\
GB-Ens\textsubscript{GCN} & 0.644 & \textbf{0.636} & 0.598 & 0.634 & 0.380 & 0.514 & \textbf{0.701} & 0.558 & 0.610 & \textbf{0.573} & \textbf{0.573} & 0.572 \\
\bottomrule
\end{tabular}
\end{table*}

\section{Computational Complexity}
\label{app:sec-comp-comp}
Graph EDL methods can themselves incur substantial computational costs, with different methods trading off preprocessing, training, inference, and memory requirements. Ensemble-based methods require repeated model fitting, but retain manageable per-model computation and peak memory and are naturally parallelizable. GPN and CUQ-GCN admit \( O(N+M) \) implementations, whereas GPN-LOP and S-BGCN-K can incur \( O(N^2) \) computation due to dense propagation and pairwise GKDE prior construction, respectively. For GCN-based ensembles, training costs \( O(N+M) \) per member; GB-Ens additionally requires bootstrap generation, whose cost depends on the \(k\)-NN construction and can reach \( O(N^2) \) for general similarity search, although cheaper alternatives are available, such as the open Jaccard similarity metric used in our experiments. Bootstrap generation is repeated for each bootstrap sample. Across ensemble members, training cost scales linearly for both GB-Ens and G-Ens but is naturally parallelizable. Moreover, processing members independently and accumulating uncertainty statistics online keeps peak memory at \( O(N+M) \), comparable to GPN, CUQ-GNN, and S-BGCN-K, whereas GPN-LOP can require \( O(N^2) \) memory due to dense propagation.

Tables~\ref{tab:compl-CoraML}, \ref{tab:compl-CoPhysics} and ~\ref{tab:compl-OGBN-ArXiv} complement the asymptotic analysis with empirical measurements on CoraML, CoauthorPhysics and OGBN-ArXiv, obtained on a dedicated NVIDIA A100 64\,GB GPU. GB-Ens uses 100 models in the \(25 \times 4\) bootstrap--seed decomposition, while G-Ens uses 10 randomly initialized models. Preprocessing time reports the total preprocessing cost required by each method, including generation of all bootstrap samples for GB-Ens. Epoch time is reported per model and training epoch, while inference time reports the total cost of producing one uncertainty estimate, aggregating the 100 forward passes required by GB-Ens and S-BGCN variants and the 10 forward passes required by G-Ens. Peak inference memory is measured when processing ensemble members sequentially. GB-Ens therefore trades increased total preprocessing, training, and inference costs for moderate peak memory requirements: graph bootstrapping becomes particularly costly on larger graphs, while its 100 members can be trained independently and in parallel, and sequential inference limits peak memory. This trade-off differs across EDL baselines: some avoid repeated model fitting and inference, whereas preprocessing-heavy methods such as S-BGCN-K incur comparable preprocessing costs on OGBN-ArXiv, and dense propagation makes GPN-LOP considerably more expensive on CoauthorPhysics and infeasible on OGBN-ArXiv under the available GPU memory.

%%%%%%%%%%%%%%%%%%%%%%%
%%%%%%% CORAML %%%%%%%%
%%%%%%%%%%%%%%%%%%%%%%%
\begin{table}[htbp]
\centering\scriptsize
\caption{Computational cost on CoraML.}
\label{tab:compl-CoraML}
\setlength{\tabcolsep}{5pt}
\begin{tabular}{lrrrrrrr}
\toprule
Model & Params & Models & Preproc.\ (s) & Epoch (ms) & Fwd & Infer.\ (ms) & Peak (MiB) \\
\midrule

CUQ-APPNP & 186,620 & 1 & -- & 17.0 & 1 & 7.0 & 268 \\

CUQ-GAT & 195,196 & 1 & -- & 16.9 & 1 & 6.6 & 253 \\

CUQ-GCN & 194,940 & 1 & -- & 16.4 & 1 & 6.9 & 235 \\

GPN & 186,620 & 1 & -- & 16.4 & 1 & 6.8 & 173 \\

GPN-GD\textsubscript{EVI} & 186,620 & 1 & -- & 17.8 & 1 & 7.0 & 202 \\

GPN-GD\textsubscript{LAT} & 186,620 & 1 & -- & 16.7 & 1 & 6.9 & 188 \\

GPN-LOP & 186,620 & 1 & -- & 24.7 & 1 & 15.3 & 320 \\

% S-BGCN & 46,199 & 1 & -- & 2.0 & 100 & 80.0 & 91 \\

S-BGCN-K & 46,199 & 1 & 0.1 & 2.4 & 100 & 80.0 & 92 \\

S-BGCN-T-K & 46,199 & 1 & 0.5 & 2.7 & 100 & 80.0 & 92 \\

\midrule

G-Ens\textsubscript{APPNP} & 184,775 & 10 & -- & 3.5 & 10 & 17.0 & 55 \\

G-Ens\textsubscript{GAT} & 184,917 & 10 & -- & 3.3 & 10 & 13.0 & 64 \\

G-Ens\textsubscript{GCN} & 184,775 & 10 & -- & 2.1 & 10 & 9.0 & 177 \\

\midrule

GB-Ens\textsubscript{APPNP} & 184,775 & 100 & 32.5 & 3.6 & 100 & 170.0 & 167 \\

GB-Ens\textsubscript{GAT} & 184,917 & 100 & 32.5 & 3.3 & 100 & 140.0 & 176 \\

GB-Ens\textsubscript{GCN} & 184,775 & 100 & 32.5 & 2.1 & 100 & 100.0 & 175 \\

\bottomrule
\end{tabular}
\end{table}

%%%%%%%%%%%%%%%%%%%%%%%
%%% COAUTHORPHYSICS %%%
%%%%%%%%%%%%%%%%%%%%%%%
\begin{table}[htbp]
\centering\scriptsize
\caption{Computational cost on CoauthorPhysics.}
\label{tab:compl-CoPhysics}
\setlength{\tabcolsep}{5pt}
\begin{tabular}{lrrrrrrr}
\toprule
Model & Params & Models & Preproc.\ (s) & Epoch (ms) & Fwd & Infer.\ (ms) & Peak (MiB) \\
\midrule

CUQ-APPNP & 540,564 & 1 & -- & 46.6 & 1 & 24.6 & 2,886 \\

CUQ-GAT & 549,140 & 1 & -- & 36.4 & 1 & 19.0 & 2,760 \\

CUQ-GCN & 548,884 & 1 & -- & 34.9 & 1 & 18.8 & 2,633 \\

GPN & 540,564 & 1 & -- & 32.2 & 1 & 17.1 & 2,127 \\

GPN-GD\textsubscript{EVI} & 540,564 & 1 & -- & 32.3 & 1 & 16.1 & 2,127 \\

GPN-GD\textsubscript{LAT} & 540,564 & 1 & -- & 31.9 & 1 & 16.2 & 2,249 \\

GPN-LOP & 540,564 & 1 & -- & 1835.1 & 1 & 1824.7 & 28,643 \\

% S-BGCN & 538,949 & 1 & -- & 11.5 & 100 & 620.0 & 2,535 \\

S-BGCN-K & 538,949 & 1 & 18.4 & 12.0 & 100 & 620.0 & 2,540 \\

S-BGCN-T-K & 538,949 & 1 & 20.5 & 11.7 & 100 & 620.0 & 2,545 \\

\midrule

G-Ens\textsubscript{APPNP} & 2,155,781 & 10 & -- & 21.0 & 10 & 91.0 & 1,883 \\

G-Ens\textsubscript{GAT} & 1,078,159 & 10 & -- & 18.6 & 10 & 72.0 & 2,370 \\

G-Ens\textsubscript{GCN} & 2,155,781 & 10 & -- & 26.0 & 10 & 119.0 & 2,905 \\

\midrule

GB-Ens\textsubscript{APPNP} & 2,155,781 & 100 & 617.5 & 21.0 & 100 & 900.0 & 1,858 \\

GB-Ens\textsubscript{GAT} & 1,078,159 & 100 & 617.5 & 18.5 & 100 & 720.0 & 2,370 \\

GB-Ens\textsubscript{GCN} & 2,155,781 & 100 & 617.5 & 26.0 & 100 & 1180.0 & 2,905 \\

\bottomrule
\end{tabular}
\end{table}

%%%%%%%%%%%%%%%%%%%%%%%
%%%%%% OGBN-ARXIV %%%%%
%%%%%%%%%%%%%%%%%%%%%%%
\begin{table}[htbp]
\centering\scriptsize
\caption{Computational cost on OGBN-ArXiv.}
\label{tab:compl-OGBN-ArXiv}
\setlength{\tabcolsep}{5pt}
\begin{tabular}{lrrrrrrr}
\toprule
Model & Params & Models & Preproc.\ (s) & Epoch (ms) & Fwd & Infer.\ (ms) & Peak (MiB) \\
\midrule

CUQ-APPNP & 111,152 & 1 & -- & 778.6 & 1 & 505.4 & 48,500 \\

CUQ-GAT & 243,760 & 1 & -- & 930.0 & 1 & 380.3 & 43,686 \\

CUQ-GCN & 242,736 & 1 & -- & 537.0 & 1 & 379.7 & 38,870 \\

GPN & 111,152 & 1 & -- & 509.5 & 1 & 367.4 & 38,664 \\

GPN-GD\textsubscript{EVI} & 111,152 & 1 & -- & 575.0 & 1 & 367.4 & 47,412 \\

GPN-GD\textsubscript{LAT} & 111,152 & 1 & -- & 522.6 & 1 & 367.0 & 42,598 \\

GPN-LOP & OOM & OOM & OOM & OOM & OOM & OOM & OOM \\

% S-BGCN & 10,856 & 1 & -- & 20.1 & 100 & 770.0 & 1,568 \\

S-BGCN-K & 10,856 & 1 & 7331.9 & 22.0 & 100 & 770.0 & 1,594 \\

S-BGCN-T-K & 10,856 & 1 & 7328.0 & 21.9 & 100 & 780.0 & 1,620 \\

\midrule

G-Ens\textsubscript{APPNP} & 110,120 & 10 & -- & 67.2 & 10 & 3040.0 & 1,024 \\

G-Ens\textsubscript{GAT} & 43,896 & 10 & -- & 67.3 & 10 & 2540.0 & 5,452 \\

G-Ens\textsubscript{GCN} & 109,096 & 10 & -- & 80.8 & 10 & 3820.0 & 5,541 \\

\midrule

GB-Ens\textsubscript{APPNP} & 110,120 & 100 & 8362.5 & 67.3 & 100 & 2970.0 & 1,025 \\

GB-Ens\textsubscript{GAT} & 43,896 & 100 & 8362.5 & 67.1 & 100 & 2550.0 & 5,452 \\

GB-Ens\textsubscript{GCN} & 109,096 & 100 & 8362.5 & 81.3 & 100 & 3820.0 & 5,541 \\

\bottomrule
\end{tabular}
\end{table}

\end{document}